\pgfplotsset{compat=newest}
\newtheorem{theorem}{Theorem}
\newtheorem{lemma}{Lemma}
\newtheorem{remark}{Remark}
\newtheorem{example}{Example}
\newtheorem{corollary}[theorem]{Corollary}
\newtheorem{proposition}{Proposition}
\newtheorem{assumption}{Assumption}
\DeclareMathOperator*{\argmin}{arg\,min}
\DeclareMathOperator*{\E}{\mathbb{E}}
\DeclareMathOperator{\Var}{Var}
\DeclareMathOperator{\VarVec}{{\mathbb{V}ar}}
\newcommand{\BibTeX}{B\kern-.05em{\sc i\kern-.025em b}\kern-.08em\TeX}
\newcommand{\norm}[1]{\left\lVert#1\right\rVert}
\newcommand\scalar[2]{\langle #1, #2 \rangle}
\newcommand{\Span}[1]{\operatorname{Span}(#1)}
\newcommand{\R}{\mathbb{R}}
\newcommand{\fedavg}{\texttt{FedAvg}}
\newcommand{\fedavgu}{\texttt{U-FedAvg}}
\newcommand{\fedvarp}{\texttt{FedVARP}}
\newcommand{\fedvarpu}{\texttt{U-FedVARP}}
\newcommand{\fedstale}{\texttt{FedStale}}
\newcommand{\cl}{i}
\newcommand{\param}[2][]{%
    \ifthenelse{\equal{#1}{}}%
    {
        \ifthenelse{\equal{#2}{}}%
        {\bm{w}}
        {\bm{w}^{(#2)}}
    }
    {
        \ifthenelse{\equal{#2}{}}%
        {\bm{w}_{#1}}
        {\bm{w}_{#1}^{(#2)}}
    }
}
\newcommand{\batch}[2][]{%
    \ifthenelse{\equal{#1}{}}%
    {
        \ifthenelse{\equal{#2}{}}%
        {\xi}
        {\xi^{(#2)}}
    }
    {
        \ifthenelse{\equal{#2}{}}%
        {\xi_{#1}}
        {\xi_{#1}^{(#2)}}
    }
}
\newcommand{\lr}[1][]{%
    \ifthenelse{\equal{#1}{}}%
    {\eta}
    {\eta_{#1}}
}
\newcommand{\sumcl}{\sum_{\cl=1}^{N}}
\newcommand{\avgcl}{\frac{1}{N}\sumcl}
\newcommand{\ber}[1][]{%
    \ifthenelse{\equal{#1}{}}%
    {\xi^{(t)}}
    {\xi_{#1}^{(t)}}
}
\newcommand{\prob}{p_{\cl}} 
\newcommand{\gps}[1]{\Delta^{(#1)}} 
\newcommand{\pg}[2][]{%
    \ifthenelse{\equal{#1}{}}%
    {
        \ifthenelse{\equal{#2}{}}%
        {\bm{g}}
        {\bm{g}^{(#2)}}
    }
    {
        \ifthenelse{\equal{#2}{}}%
        {\bm{g}_{#1}}
        {\bm{g}_{#1}^{(#2)}}
    }
}
\newcommand{\barpg}[2][]{%
    \ifthenelse{\equal{#1}{}}%
    {
        \ifthenelse{\equal{#2}{}}%
        {\bar{\bm{g}}}
        {\bar{\bm{g}}^{(#2)}}
    }
    {
        \ifthenelse{\equal{#2}{}}%
        {\bar{\bm{g}}_{#1}}
        {\bar{\bm{g}}_{#1}^{(#2)}}
    }
}
\newcommand{\mem}[2][]{%
    \ifthenelse{\equal{#1}{}}%
    {
        \ifthenelse{\equal{#2}{}}%
        {\bm{h}}
        {\bm{h}^{(#2)}}
    }
    {
        \ifthenelse{\equal{#2}{}}%
        {\bm{h}_{#1}}
        {\bm{h}_{#1}^{(#2)}}
    }
}
\newcommand{\lyap}[1]{\psi^{(#1)}} 
\newcommand{\obj}[2][]{%
    \ifthenelse{\equal{#1}{}}%
    {
        \ifthenelse{\equal{#2}{}}%
        {F(\param[]{})}
        {F(\param[]{#2})}
    }
    {
        \ifthenelse{\equal{#2}{}}%
        {F_{#1}(\param[]{})}
        {F_{#1}(\param[]{#2})}
    }
}
\newcommand{\grad}[3][]{%
    \ifthenelse{\equal{#1}{} \OR \equal{#2}{}}%
    {
        \ifthenelse{\equal{#3}{}}%
        {
            \ifthenelse{\equal{#1}{}}%
            {\nabla F({#2})}
            {\nabla F_{#1}}
        }
        {\PackageError{grad}{Cannot use third parameter without using both first two parameters}{}}
    }
    {
        \ifthenelse{\equal{#3}{}}%
        {\nabla F_{#1}({#2})}
        {\nabla F_{#1}({#2,#3})}
    }
} 
\newcommand{\gexp}{\E_{\xi^{(t)}, \batch[]{t} \mid \mathcal{H}^{(t)}}}
\newcommand{\berexp}{\E_{\xi^{(t)} \mid \batch[]{t}, \mathcal{H}^{(t)}}}
\begin{document}


\begin{frontmatter}


\paperid{2590\vspace{-0.5cm}} 






\title{\huge FedStale: leveraging stale client updates in federated learning}

\author[A]{\fnms{Angelo}~\snm{Rodio}\footnote{This research was supported by the French government through the 3IA Côte d’Azur Investments by the National Research Agency (ANR-19-P3IA-0002), and by Groupe La Poste through the FedMalin Inria Challenge.}}
\author[A]{\fnms{Giovanni}~\snm{Neglia}\footnotemark\vspace{-0.2cm}}

\address[A]{Inria, Université Côte d’Azur, France. \textit{Email: \{firstname.lastname\}@inria.fr}\vspace{-0.5cm}}

\begin{abstract}
Federated learning algorithms, such as \fedavg{}, are negatively affected by data heterogeneity and partial client participation.
To mitigate the latter problem, global variance reduction methods, like \fedvarp{}, leverage stale model updates for non-participating clients.
These methods are effective under homogeneous client participation. 
Yet, this paper shows that, when some clients participate much less than others, aggregating updates with different levels of staleness can detrimentally affect the training process. 
Motivated by this observation, we introduce \fedstale{}, a novel algorithm that updates the global model in each round through a convex combination of ``fresh'' updates from participating clients and ``stale'' updates from non-participating ones.
By adjusting the weight in the convex combination, \fedstale{} interpolates between \fedavg{}, which only uses fresh updates, and \fedvarp{}, which treats fresh and stale updates equally.
Our analysis of \fedstale{} convergence yields the following novel findings: 
\emph{i)}~it integrates and extends previous \fedavg{} and \fedvarp{} analyses to heterogeneous client participation;
\emph{ii)}~it underscores how the least participating client influences convergence error; 
\emph{iii)}~it provides practical guidelines to best exploit stale updates, showing that their usefulness diminishes as data heterogeneity decreases and participation heterogeneity increases.
Extensive experiments featuring diverse levels of client data and participation heterogeneity not only confirm these findings but also show that \fedstale{} outperforms both \fedavg{} and \fedvarp{} in many settings.

\end{abstract}

\end{frontmatter}

\thispagestyle{plain}
\pagestyle{plain}

\section{Introduction}
Edge devices generate critical data for training machine learning models. However, centralizing this data is often impractical due to substantial communication overhead or simply impossible due to privacy regulations. Federated Learning (FL)~\cite{mcmahanCommunicationEfficientLearningDeep2017, konecnyFederatedLearningStrategies2017} offers a solution. In this paradigm, edge devices---also referred to as clients---collaborate to train a shared machine learning model. This collaboration is coordinated by a central server and allows data to remain decentralized, effectively addressing privacy and communication challenges.

In Federated Averaging (\fedavg{})~\cite{mcmahanCommunicationEfficientLearningDeep2017} and similar FL algorithms~\cite{liFederatedOptimizationHeterogeneous2020, reddiAdaptiveFederatedOptimization2023, acar2021federated, karimireddySCAFFOLDStochasticControlled2020}, clients perform multiple stochastic gradient descent (SGD) steps on their local datasets and then transmit their updated models to the central server. The server aggregates these client models to form a new global model, which is subsequently disseminated to the clients for further iterations.

The \emph{multiple} local updates 
performed by each client are crucial for enhancing communication efficiency. However, these updates can negatively impact the training process, as local client models  progressively diverge towards client-specific local minimizers due to \emph{data heterogeneity}~\cite{liConvergenceFedAvgNonIID2019, karimireddySCAFFOLDStochasticControlled2020}.





Another significant source of heterogeneity arises from varying levels of client participation in the training process. This \emph{participation heterogeneity} is influenced by factors beyond server control~\cite{bonawitzFederatedLearningScale2019, wangFieldGuideFederated2021, yangAnarchicFederatedLearning2022}, such as diverse hardware specifications (CPU power, memory capacity), network connectivity types (WiFi, 5G), and power availability (e.g., clients may only participate when charging to prevent battery drain)~\cite{verbraeken2020survey, kairouzAdvancesOpenProblems2021a, ludwig2022federated}.
Despite this, much of the prior research assumes partial yet homogeneous client participation~\citep{liConvergenceFedAvgNonIID2019, karimireddySCAFFOLDStochasticControlled2020, liFederatedOptimizationHeterogeneous2020, yangAchievingLinearSpeedup2020, fraboniClusteredSamplingLowVariance2021, chenOptimalClientSampling2022, rizkFederatedLearningImportance2022, choConvergenceFederatedAveraging2023b}, overlooking the impact of such heterogeneity on the convergence of \fedavg{}-like algorithms. We identify and illustrate the two main problems due to heterogeneous participation.


First, heterogeneous participation risks biasing the global model in favor of clients that participate more frequently. Intuitively, when some clients participate more often than others, the global model may disproportionately reflect the local objectives of these more participating clients, thereby disadvantaging those who participate less. To counteract this bias, recent studies~\cite{wangUnifiedAnalysisFederated2022, wangLightweightMethodTackling2024} propose an unbiased version of \fedavg{}, which scales clients' model updates inversely with their participation frequency. By assigning greater weight to less participating clients, this approach ensures that the global model fairly represents all clients.

Second, even if the potential bias is mitigated, partial and heterogeneous client participation still exacerbates the variability of the learning process. Indeed, the unbiased scaling process amplifies variations in the magnitude of client updates, leading to increased variance in the learned model and slower convergence. Although a few recent works focus on global variance reduction \citep{guFastFederatedLearning2021, yangAnarchicFederatedLearning2022, jhunjhunwalaFedvarpTacklingVariance2022, yanFederatedOptimizationIntermittent2024}, they are limited to scenarios involving homogeneous client participation. Specifically, \fedvarp{} (\uline{Fed}erated \uline{VA}riance \uline{R}eduction for \uline{P}artial client participation) \citep{jhunjhunwalaFedvarpTacklingVariance2022} utilizes the most recent, albeit potentially stale, model updates in place of unavailable updates from non-participating clients. \fedvarp{} has demonstrated, both theoretically and empirically, its capability to effectively lower variance and consistently outperform \fedavg{} in settings with partial yet homogeneous client participation. It is anticipated to perform similarly well even in heterogeneous settings \cite{jhunjhunwalaFedvarpTacklingVariance2022}. However, when client participation varies widely, global variance reduction methods, including \fedvarp{}, must address the challenge of updates of varying staleness---a complex issue that remains unexplored and is the focus of this paper.

This paper specifically addresses the following questions: \\
\emph{
1) Is it really true that \fedvarp{} outperforms the unbiased \fedavg{} under \uline{heterogeneous client participation}? \\
2) Assuming that each method may be preferable in different settings, can we design an \uline{unbiased} algorithm that combines \uline{fresh and stale updates} and adapts to specific levels of participation heterogeneity?
}

Addressing these questions is challenging and requires a deeper understanding of how stale client updates influence convergence. 

\textbf{Our contributions.} We thoroughly analyze this problem and make the following novel contributions: \\
1) We analytically and experimentally refute the belief that \fedvarp{} consistently outperforms \fedavg{}. Our convergence analysis reveals that leveraging stale updates can be either beneficial or detrimental, depending on the specific level of \uline{client data and participation heterogeneity}. \\
2) We propose \fedstale{} (\uline{Fed}erated Averaging with \uline{Stale} Updates), a novel FL algorithm that updates the global model through a convex, unbiased combination of fresh and stale updates, parameterized by a weight $\beta$.
\fedstale{} spans the spectrum from \fedavg{} ($\beta=0$, exclusively fresh updates) to \fedvarp{} ($\beta=1$, equal weighting of fresh and stale updates). Our analysis provides guidelines to tune the parameter~$\beta$ to match  specific data and client participation heterogeneity scenarios. 
\\
3) We evaluate \fedavg{}, \fedvarp{}, and \fedstale{} across multiple levels of client data and participation heterogeneity. \fedstale{} outperforms both \fedavg{} and \fedvarp{} across the vast majority of heterogeneity levels examined.

The remainder of this paper is organized as follows. Section~\ref{sec:problem} reviews the problem and related work.
Section~\ref{sec:algorithm} introduces \fedstale{}, our staleness-aware algorithm, through a motivating example. Section~\ref{sec:analysis} provides a convergence analysis of \fedstale{} under heterogeneous client participation. \fedstale{} is extensively evaluated in Section~\ref{sec:experiments}, and Section~\ref{sec:conclusion} concludes the paper. Proof outlines are included in the Appendix, while detailed proofs are available in the supplementary material.

\begin{figure*}[!ht]
\centering
\begin{subfigure}[t]{0.245\textwidth}
    \centering
    \includegraphics[width=\textwidth]{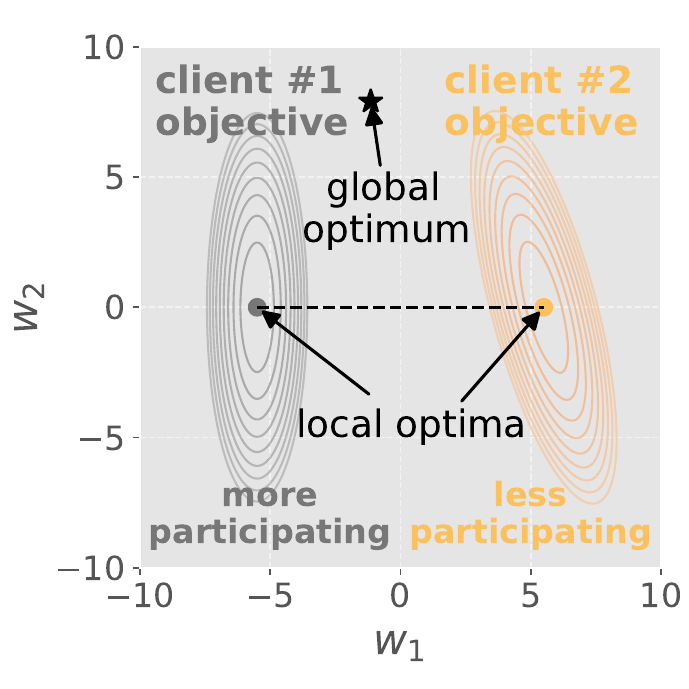}
    \caption{\footnotesize Optimization objectives.} 
    \label{fig:1}
\end{subfigure}
\hfill
\begin{subfigure}[t]{0.245\textwidth}
    \centering
    \includegraphics[width=\textwidth]{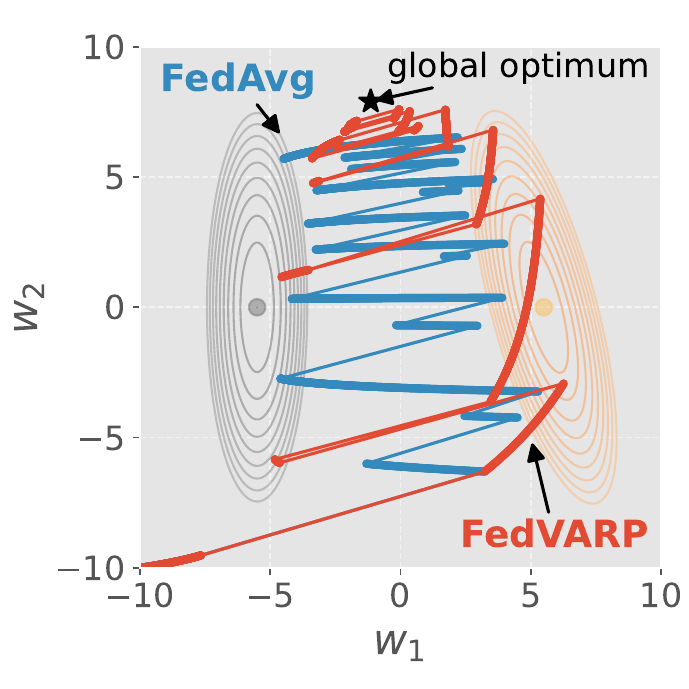}
    \caption{\footnotesize Trajectories: FedAvg, FedVARP.} 
    \label{fig:2}
\end{subfigure}
\hfill
\begin{subfigure}[t]{0.245\textwidth}
    \centering
    \includegraphics[width=\textwidth]{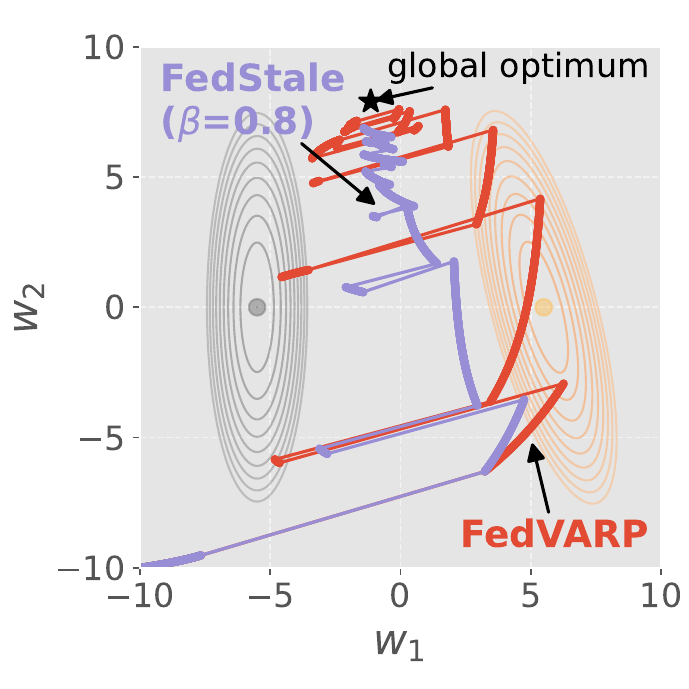}
    \caption{\footnotesize Trajectories: FedVARP, FedStale.} 
    \label{fig:3}
\end{subfigure}
\hfill
\begin{subfigure}[t]{0.245\textwidth}
    \centering
    \includegraphics[width=\textwidth]{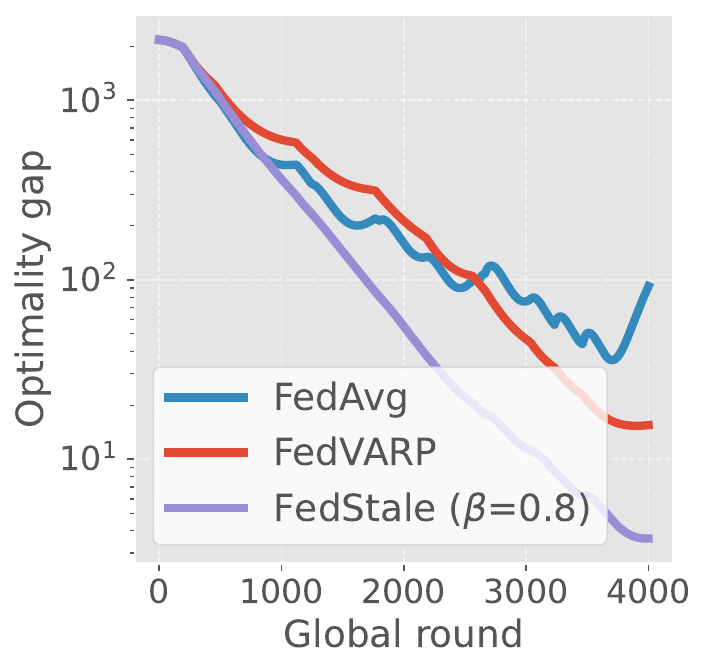}
    \caption{\footnotesize Convergence curves.} 
    \vspace{0.5cm}
    \label{fig:4}
\end{subfigure}
\caption{
\footnotesize
Comparison of FedAvg, FedVARP, and FedStale in a two-clients, 2D quadratic setting with \emph{heterogeneous} client participation. 
\textbf{Fig.~\ref{fig:1}:}
Contour plots of client objectives, their local optima, and global optimum. Client participation ratio is $p_1/p_2=100$. 
\textbf{Fig.~\ref{fig:2}:}
Trajectories by FedAvg and FedVARP over T=4000 rounds with K=5 local iterations each. While both algorithms target the global optimum, FedAvg struggles with large variance and FedVARP follows suboptimal paths due to stale updates.
\textbf{Fig.~\ref{fig:3}:} FedStale ($\beta$=0.8) follow a more stable  trajectory under heterogeneous client participation.
\textbf{Fig.~\ref{fig:4}:} 
Learning curves of FedAvg, FedVARP, and FedStale over 10 runs. With a lower weight on stale updates  ($\beta$=0.8), FedStale achieves faster convergence to the global optimum. 
}
\vspace{0.4cm}
\label{fig:example}
\end{figure*}

\section{Problem Definition and Background}
\label{sec:problem}

We consider a FL setting where $N$ clients, each client $i$ equipped with a dataset $\mathcal{D}_i$ consisting of $n_i$ samples, collaboratively learn the parameters $\param{}\in\R^d$ of a global ML model
(e.g., the weights of a neural network).
Orchestrated by a central server, these clients cooperate to minimize the \emph{global} objective:\footnote{Objective~\eqref{eq:g_obj} corresponds to a ``per-client fairness'' criterion,
while ``per-sample fairness'' weights each local loss by the client's number of samples. 
Our analysis can be directly extended to any weighted sum of local losses.
}
\begin{align}
    \min_{\param{} \in \R^d} \obj{}{} \triangleq \avgcl 
    \left[ \obj[\cl]{}{} \triangleq \frac{1}{n_i} \sum_{\xi_i \in \mathcal{D}_i} f(\param{}, \xi_{\cl}) \right],
    \label{eq:g_obj}
\end{align}
where client $i$ has \emph{local} objective $\obj[\cl]{}{}$ and $f(\param{},\xi_{\cl})$ is the loss function evaluating model performance on data sample $\xi_{\cl}\in\mathcal{D}_i$.

In this paper, we consider algorithms obeying the general operation in Algorithm~\ref{alg:general_algo}, differing in the \texttt{ComputeUpdate}() procedure.

\fedavg{} iteratively solves Problem~\eqref{eq:g_obj} while maintaining data decentralization. 
Model training involves $T$ rounds of communication between server and clients: 
at the beginning of each round $t>0$, the server sends the current global model, $\param{t}$, to a random subset of participating clients $\mathcal{S}^{(t)}$, usually $|\mathcal{S}^{(t)}|\ll N$. 
Each client in $\mathcal{S}^{(t)}$ runs multiple ($K\geq1$) iterations of local stochastic gradient descent (SGD) on its local dataset:
\begin{align*}
    \param[\cl]{t,k+1} = \param[\cl]{t,k} - \lr[c] \grad[\cl]{\param[\cl]{t,k}}{\batch[\cl]{t,k}} 
    \quad
    \text{for } k=0,\dots,K-1
\end{align*}
producing the local model $\param[\cl]{t,K}$,
and the sends the model update $\Delta_{\cl}^{(t)} = (\param{t}-\param[\cl]{t,K})$ to the server. The server aggregates these client updates into the \emph{global} update:
\begin{align}
    \Delta_{\fedavg{}}^{(t)} = \frac{1}{|\mathcal{S}^{(t)}|} \sum_{i\in\mathcal{S}^{(t)}} \Delta_{\cl}^{(t)},
    \label{eq:aggregation_biased}
\end{align} 
and then applies this update to the previous global model in a manner similar to a gradient descent step to produce the new global model $\param{t+1} = \param{t} - \lr[s] \Delta_{\fedavg{}}^{(t)}$.

\begin{algorithm}[t]
\caption{FL algorithm with pluggable global update}
\label{alg:general_algo}

\textbf{Input: }{$\param{1}$, $K$, $\lr[s]$, $\lr[c]$ };
\textbf{Output: }{$\{\param{t} : \forall t\}$}

\For{$t = 1, \dots, T$}{

    \For{$\cl \in \mathcal{S}^{(t)}$, in parallel \label{alg-line:client_opt}}{

    $\param[\cl]{t,0} \gets \param{t}$

    \For{$k = 0,1\dots,K-1$}{
    
    $\param[\cl]{t,k+1} \gets \param[\cl]{t,k} - \lr[c] \grad[\cl]{\param[\cl]{t,k}}{\batch[\cl]{t,k}}$

    }

    $\Delta_{\cl}^{(t)} \gets (\param[]{t} - \param[\cl]{t,K})$

    }

    $\Delta^{(t)} 
    \gets 
    \texttt{ComputeUpdate}(\{ \Delta_{\cl}^{(t)} \}_{\cl \in \mathcal{S}^{(t)}}, \dots)$

    $\param{t+1} = \param{t} - \lr[s] \Delta^{(t)}$ \label{alg-line:server_opt}

}
\end{algorithm}




    


Following standard assumptions~\cite{wangUnifiedAnalysisFederated2022, rodioFederatedLearningHeterogeneous, wangLightweightMethodTackling2024}, we model client participation heterogeneity through the \emph{participation probability} $\prob$:
\begin{align}
    \prob \triangleq \E_{\mathcal{S}^{(t)}} \left[ \mathbb{P}(\cl \in \mathcal{S}^{(t)}) \right].
\end{align}
 
When client participation is \emph{homogeneous} ($\prob = p , \forall \cl$), $\E_{\mathcal{S}^{(t)}}[\Delta_{\fedavg{}}^{(t)}] = \frac{1}{N} \sum_{i=1}^{N} \Delta_{\cl}^{(t)}$. Under this condition,  
Eq.~\eqref{eq:aggregation_biased} is then an unbiased estimator of the model update as if \emph{all} clients were to participate~\cite{liConvergenceFedAvgNonIID2019, fraboniClusteredSamplingLowVariance2021}. This ensures that the final model fairly represents all clients.

Conversely, under \emph{heterogeneous} participation, where probabilities~$\{\prob\}$ vary among clients, Eq.~\eqref{eq:aggregation_biased} becomes a biased estimator of $\frac{1}{N} \sum_{i=1}^{N} \Delta_{\cl}^{(t)}$. This \emph{bias} in the global update tends to overrepresent clients that participate more frequently, disadvantaging those that participate less. Participation heterogeneity can then lead to objective inconsistency, causing \fedavg{} to effectively minimize the \emph{biased} objective:
\begin{align}
    \widetilde{F}(\param{}) = \avgcl \frac{\prob}{\sum_{j=1}^{N} p_j} \obj[\cl]{},
    \label{eq:obj_inconsitency}
\end{align}
which may arbitrarily deviate from the global objective~\eqref{eq:g_obj}.

To effectively minimize objective~\eqref{eq:g_obj} when client participation is heterogeneous, recent works~\cite{fraboniClusteredSamplingLowVariance2021, wangUnifiedAnalysisFederated2022, 
fraboniGeneralTheoryFederated2023,
rodioFederatedLearningHeterogeneous, wangLightweightMethodTackling2024} have discussed the need to debias $\Delta_{\fedavg{}}^{(t)}$. Specifically, Eq.~\eqref{eq:aggregation_biased} has been modified into Eq.~\eqref{eq:fedavg}, resulting in an unbiased version of \fedavg{}, denoted here as \fedavgu{}~\cite{wangUnifiedAnalysisFederated2022, rodioFederatedLearningHeterogeneous, wangLightweightMethodTackling2024}: 
\begin{align}
    \Delta_{\fedavgu{}}^{(t)} 
    = 
    \frac{1}{N} \sum_{i\in\mathcal{S}^{(t)}} \frac{\Delta_{\cl}^{(t)}}{p_i}.
    \label{eq:fedavg}
\end{align}
Intuitively, reweighting each client update by $p_i^{-1}$ compensates for less participating clients by amplifying their update when they do participate.
\fedavgu{} naturally extends \fedavg{} to accommodate heterogeneous client participation---reducing to \fedavg{} when participation is uniform ($\prob = \frac{|\mathcal{S}^{(t)}|}{N}, \forall \cl$)---and effectively \emph{unbiases} the global update ($\E_{\mathcal{S}^{(t)}}[\Delta_{\fedavgu{}}^{(t)}] = \frac{1}{N} \sum_{i=1}^{N} \Delta_{\cl}^{(t)}$). However, 
it also introduces a drawback: the variance of each client updates is now proportional to $p_i^{-2}$. As participation probabilities decrease, this variance rapidly increases, becoming the dominant factor that slows down \fedavgu{}'s convergence~\cite{rodioFederatedLearningHeterogeneous, wangLightweightMethodTackling2024}.

A few recent works have addressed the variance introduced by partial client participation through global variance reduction, leveraging stale updates to compensate for non-participating clients~\citep{guFastFederatedLearning2021, yangAnarchicFederatedLearning2022, jhunjhunwalaFedvarpTacklingVariance2022, yanFederatedOptimizationIntermittent2024}. 
These methods were originally proposed for \emph{homogeneous} participation and, if applied in their original form,  would introduce a bias when client participation becomes heterogeneous.
Fortunately, unbiasing them to work in \emph{heterogeneous} participation scenarios is straightforward, similar to what was done for \fedavg{} in Eq.~\eqref{eq:fedavg}. We select \fedvarp{}~\cite{jhunjhunwalaFedvarpTacklingVariance2022} as the representative algorithm and adapt it into \fedvarpu{} (Unbiased \fedvarp{}).

In \fedvarpu{}, the server retains the most recent, though potentially stale, update for each client:
\begin{align}
    \mem[\cl]{t} =
    \begin{cases}
        \Delta_{\cl}^{(t-1)} & \text{if } \cl \in \mathcal{S}^{(t-1)} \\
        \mem[\cl]{t-1} & \text{otherwise}
    \end{cases},
\end{align}
and then uses these stale updates as proxies for missing contributions from non-participating clients in the current round:
\begin{align}
    \Delta_{\fedvarpu{}}^{(t)} = \avgcl \mem[\cl]{t} + \frac{1}{N} \sum_{\cl \in \mathcal{S}^{(t)}} \frac{\Delta_{\cl}^{(t)} - \mem[\cl]{t}}{p_i}.
    \label{eq:fedvarp}
\end{align}
Unlike \fedavgu{}, which essentially ignores non-participating clients, \fedvarpu{} utilizes their last updates, albeit stale, when they do not participate in the training process. When they participate again, \fedvarpu{} 
subtracts these stale updates to eliminate any inconsistency caused by using stale information, and applies the fresh update. Both corrections are reweighed by $p_i^{-1}$, similarly to \fedavgu{}, ensuring that $\E\left[\Delta_{\fedvarpu{}}^{(t)}\right] = \avgcl \Delta_\cl^{(t)}$.
\fedvarpu{}'s aggregation~\eqref{eq:fedvarp} is then \emph{unbiased}. Moreover, by leveraging stale updates for non-participating clients, \fedvarpu{} acts as a \texttt{SAGA}-like~\cite{defazioSAGAFastIncremental2014} variance reduction method, aiming to reduce the variance caused by partial client participation.
This strategy incurs an additional memory cost of $N \times d$, which must be allocated by the server.


Although variance reduction methods like \fedvarp{} are often believed to outperform simpler algorithms like \fedavg{} under partial and heterogeneous client participation, as suggested for example in \cite{jhunjhunwalaFedvarpTacklingVariance2022, wangLightweightMethodTackling2024}, theoretical support for this belief has been provided only for homogeneous participation scenarios~\cite[Theorem~2]{jhunjhunwalaFedvarpTacklingVariance2022} and empirical results do not lead to definitive conclusions~\cite[Table~5]{wangLightweightMethodTackling2024}. 


This paper challenges the presumed superiority of \fedvarpu{} under client participation heterogeneity. Both theoretical and experimental contributions indicate that the relative effectiveness of \fedvarpu{} and \fedavgu{} varies depending on the specific levels of data heterogeneity and client participation heterogeneity.

In the remainder of the paper, we focus on the unbiased versions of the two algorithms. However, for simplicity, we refer to them simply as \fedvarp{} and \fedavg{}.


\section{The FedStale Algorithm}
\label{sec:algorithm}

We start questioning the expected superiority of \fedvarp{} under client participation heterogeneity though the following illustrative example.

\subsection{A motivating example}
\label{sec:algorithm:example}

Figure~\ref{fig:1} considers a two-clients scenario with quadratic bidimensional objectives $\{\obj[\cl]{}, i=1,2, \param{} \in \mathbb R^2\}$.
The global optimum~$\param{}^*$, minimizer of $\obj{}\triangleq\frac{1}{2}\obj[1]{}+\frac{1}{2}\obj[2]{}$, does not align with the average of the local optima $\{\param[\cl]{}^*, i=1,2\}$. 
Clients participate according to Bernoulli distributions with parameters $\{\prob, i=1,2\}$ and a skewed participation ratio $p_1/p_2=100$.

Figure~\ref{fig:2} compares the model trajectories of \fedavg{} and \fedvarp{} over $T=4000$ rounds, starting from $\param{1}$=(-10,-10) and running the experiments with same clients participation processes for comparability. 
Both algorithms initially share the same trajectory, driven solely by the participation of client 1, who targets $\param[1]{}^*$.
When client~2 first participates, the global update dramatically shifts towards $\param[2]{}^*$ due to the reweighting factor $1/p_2$. 
As client~2 stops participating, the two trajectories diverge: \fedavg{} reverts to approaching $\param[1]{}^*$, influenced only by the participating client~1, while \fedvarp{} continues to factor in stale updates from client~2. 
Both algorithms eventually converge to the global optimum $\param[]{}^*$, consistently with the fact that both Eqs.~\eqref{eq:fedavg} and~\eqref{eq:fedvarp} are \emph{unbiased}. However, \fedavg{} suffers large variance and slow convergence due to significant shifts whenever client 2 participates, whereas \fedvarp{} is affected by progressively more outdated updates from the less participating client, also resulting in  suboptimal trajectories with abrupt corrections.
Figure~\ref{fig:4} compares the losses over these trajectories and confirms that both \fedavg{} and \fedvarp{} exhibit high variability for distinct reasons.
A hybrid approach that combines these two dynamics can potentially improve overall performance.

\subsection{A convex combination of fresh and stale updates}

In Figs.~\ref{fig:3} and~\ref{fig:4}, a convex combination of \fedavg{} and \fedvarp{} updates with a weighting parameter $\beta=0.8$ results in a more stable trajectory and achieves faster convergence than either algorithm alone.
This suggests that, in environments with heterogeneous client participation, parameterizing the weight to stale updates allows us to interpolate the two negative extremes of large variance (\fedavg{}) and outdated trajectories (\fedvarp{}). 
Motivated by these observations, we propose \fedstale{} (\uline{Fed}erated Averaging with \uline{Stale} Updates), outlined in Algorithm~\ref{alg:fedstale}.
In each round, \fedstale{} updates the global model through a convex combination of fresh and stale updates, with parameter $\beta$ in the range $[0,1]$:
\begin{align}
    \Delta_{\fedstale{}}^{(t)} 
    &= (1-\beta) \Delta_{\fedavg{}}^{(t)} + \beta \Delta_{\fedvarp{}}^{(t)} \label{eq:fedstale_convex} \\
    &= \avgcl \beta \mem[\cl]{t} + \frac{1}{N} \sum_{\cl \in \mathcal{S}^{(t)}} \frac{\Delta_{\cl}^{(t)} - \beta \mem[\cl]{t}}{p_i}. \label{eq:fedstale_weight}
\end{align}

\fedstale{} interpolates between the behaviors of \fedavg{} when $\beta=0$ and \fedvarp{} when $\beta=1$, merging the two algorithms into a single, versatile framework.
Moreover, by adjusting~$\beta$, \fedstale{} can control the influence of stale updates, allowing for a continuum of behaviors that adapts with the specific level of client data and participation heterogeneity.

\begin{algorithm}[t]
\caption{Global update computation in \fedstale{}}
\label{alg:fedstale}

\SetKwProg{Proc}{Procedure:}{}{}
\SetKwFunction{FComputeUpdate}{ComputeUpdate}
\DontPrintSemicolon

\textbf{Input: }{$\{ \mem[\cl]{1}=\bm{0}, \prob : \forall \cl \}$, $\beta$};
\textbf{Output: }{$\{\Delta^{(t)}_{\scriptscriptstyle \fedstale{}} : \forall t\}$}

\For{$t = 1, \dots, T$}{

\textbf{Procedure} \texttt{ComputeUpdate}($\{\Delta_{\cl}^{(t)}\}_{\scriptscriptstyle \cl \in \mathcal{S}^{(t)}}$, $\beta$):

    $\Delta^{(t)}_{\scriptscriptstyle \fedstale{}} 
    \gets \frac{\beta}{N} \sum_{\scriptscriptstyle i=1}^{\scriptscriptstyle N} \mem[\cl]{t} + \frac{1}{N} \sum_{\scriptscriptstyle \cl \in \mathcal{S}^{(t)}} ( \Delta_{\cl}^{(t)} - \beta \mem[\cl]{t} )/\prob$

    \For{$\cl \in \mathcal{S}^{(t)}$}{

    $\mem[\cl]{t+1} \gets \Delta_{\cl}^{(t)}$ \tcp{Update memory}

    }
        
}
\end{algorithm}

\textbf{Requirements.} 
In its operation, \fedstale{} maintains the same computational and communication complexity as \fedvarp{}, with tuning $\beta$ as the only additional requirement. Section~\ref{sec:experiments} shows that a coarse adjustment of $\beta$ (e.g., $\beta \in {0,0.2,0.5,0.8,1}$) provides reasonably good performance across varied settings, thus eliminating the need for fine-tuning.

As for storage requirements, \fedstale{} mirrors \fedvarp{} and other global variance reduction methods by storing stale updates from \emph{all clients} at the server.
Typically, servers possess more resources than clients, mitigating potential storage issues. Methods that avoid additional storage would otherwise escalate computational and communication demands on clients or necessitate \emph{full client participation} in certain rounds---a requirement that may be overly demanding or even impractical, as will be discussed in the following section.

\subsection{Comparison to related work}
\label{sec:algorithm:comparison}

We discuss variance reduction methods emerged for centralized and distributed optimization. Some have already been adapted to federated learning, while others are discusses for potential applicability.


\textbf{\texttt{FedLaAvg}~\citep{yanFederatedOptimizationIntermittent2024}, \texttt{MIFA}~\citep{guFastFederatedLearning2021}, \texttt{AFA-CD} and \texttt{AFA-CS}~\citep{yangAnarchicFederatedLearning2022}}, similarly to \fedvarp{}, address partial yet homogeneous client participation by storing the stale model updates for each client. However, their approach of uniformly weighting fresh and stale updates, through a \texttt{SAG}-based~\citep{schmidtMinimizingFiniteSums2017} global variance reduction step, \emph{biases} the global model leading to objective inconsistency.

\textbf{\texttt{SVRG}-based Variance Reduction Methods~\citep{johnsonAcceleratingStochasticGradient2013, leiNonconvexFiniteSumOptimization2017, nguyenSARAHNovelMethod2017, fangSPIDERNearOptimalNonConvex2018}} trade storage demands with computation needs by periodically calculating, in centralized settings, full or large-batch gradients. Although offering superior theoretical performance over \texttt{SAGA}-based~\citep{defazioSAGAFastIncremental2014} variance reduction methods like~\fedvarp{}, their extension to FL settings is constrained by the impractical requirement for \emph{all clients} to participate simultaneously during certain training rounds.

\textbf{\texttt{SCAFFOLD}~\citep{karimireddySCAFFOLDStochasticControlled2020}}
uses control variates to correct for data heterogeneity errors. Adapting this method to handle participation heterogeneity would require clients to perform local \texttt{SAGA}-like~\citep{defazioSAGAFastIncremental2014} corrections, thereby \emph{doubling the communication overhead} as clients must transmit both the model updates and correction vectors to the server. While this extension remains a topic for future research, we underscore the additional communication complexity involved.

In contrast to previous work, \fedstale{}, much like \fedvarp{}, performs corrections at the server level without involving clients in variance reduction, thus maintaining the same communication overhead as \fedavg{} and still matching \texttt{SCAFFOLD}’s convergence rates.

\section{Convergence Analysis}
\label{sec:analysis}



\begin{assumption}[$L$-smoothness]
    \label{asm:smoothness}
    The local objective functions are $L$-smooth, i.e., 
    $\norm{\nabla F_i(\bm{u}) - \nabla F_i(\bm{v})} \leq L \norm{\bm{u} - \bm{v}}$,
    $\forall \bm{u}, \bm{v}, i$.
\end{assumption}

\begin{assumption}[Bounded variance at client level]
    \label{asm:stochastic_gradient}
    The stochastic gradient at each client is an unbiased estimator of the local gradient: $\E_{\xi_i\sim \mathcal{D}_i} [\nabla F_i(\bm{w},\xi_i)] = \nabla F_i(\bm{w})$, 
    with bounded variance: $\E_{\xi_i\sim \mathcal{D}_i} \norm{\nabla F_i(\bm{w},\xi_i) - \nabla F_i(\bm{w})}^2 \leq \sigma^2$, $\forall \bm{w}, i$.
\end{assumption}

\begin{assumption}[Bounded variance across clients]
    \label{asm:heterogeneity}
    There exists a constant $\sigma_g^2 > 0$ such that the difference between the local gradient at the $i$-th client and the global gradient is bounded, that is $\norm{\nabla F_i(\bm{w}) - \nabla F(\bm{w})}^2 \leq \sigma_g^2$, $\forall \bm{w}, i$.
\end{assumption}

\begin{assumption}[Partial and heterogeneous client participation]
    \label{asm:participation}
    In each round $t$, client $\cl$ participates with a probability $\prob$, independently of previous rounds and other clients. 
\end{assumption}

Assumptions~\ref{asm:smoothness}–\ref{asm:heterogeneity} are standard in federated learning convergence analysis~\cite{yangAchievingLinearSpeedup2020, wangUnifiedAnalysisFederated2022, choConvergenceFederatedAveraging2023b}. 
The terms $\sigma^2$ and $\sigma_g^2$ denote the variances from \emph{stochastic gradients} and \emph{data heterogeneity}, respectively. 
Assumption 4, which models \emph{client participation heterogeneity}, also appears in some prior works~\cite{wangUnifiedAnalysisFederated2022, wangLightweightMethodTackling2024}.
Exploring more complex participation dynamics, following the methodologies in~\cite{wangUnifiedAnalysisFederated2022, rodioFederatedLearningHeterogeneous}, remains a task for future research.


We start by providing an upper bound for \fedstale{}'s convergence. 
To focus the discussion on our main results, we defer proof outlines to the appendix and detailed proofs to the supplementary material.

\begin{theorem}[Convergence of \fedstale{}, upper bound]
    \label{thm:fedstale_upper}
    Under Assumptions~\ref{asm:smoothness}--\ref{asm:participation}, if the client and server learning rates, $\eta_c$~and $\eta_s$, are chosen such that $\lr[c] \leq \frac{1}{8LK}$ and $\lr[s] \leq \min \left\{ \frac{N p_{\text{var}}}{12(1-\beta)^2}, \frac{p_{\text{var}}p_{\text{min}}}{3\beta^2 p_{\text{avg}}} \right\}$,
    the sequence of 
    \fedstale{} iterates satisfies 
    \begin{align}
        &\min_{t\in\{1,T\}} \E \norm{\nabla F(\bm{w}_{\fedstale{}}^{(t)})}^2 
        ~ \leq ~
        \underbrace{\mathcal{O} \biggl( \frac{F(\bm{w}^{(1)}) - F^*}{\lr[s]\lr[c] K T} \biggr)}_{\text{iterate initialization error}}
        \label{eq:fedstale_upper} \\[-0.12cm]
        &\quad + \underbrace{\mathcal{O} \biggl( \frac{\beta^2 \lr[s]\lr[c] L K H^{(1)}}{p_{\text{var}}p_{\text{min}}T} \biggr)}_{\text{memory initialization error}} 
        + \underbrace{\mathcal{O} \biggl( \left[ \frac{1}{N} + \beta^2 \frac{p_{\text{avg}}}{p_{\text{min}}} \right] \frac{\lr[s]\lr[c] L \sigma^2}{p_{\text{var}}}   \biggr)}_{\text{stochastic gradient error}}
        \notag \\[-0.12cm]
        &\quad + \underbrace{\mathcal{O} \biggl( \left[ \frac{(1-\beta)^2}{N} + \beta^2 \eta_c^2 L^2 K (K-1) \frac{ p_{\text{avg}}}{p_{\text{min}}}  \right] \frac{\lr[s]\lr[c] L K \sigma_g^2}{p_{\text{var}}}\biggr)}_{\text{error from data heterogeneity}}, \notag
    \end{align}
    where $F^* \triangleq \min_{\bm{w}} F(\bm{w})$,
    $H^{(1)} \triangleq \avgcl ||\grad[\cl]{\param[]{1}}{} - \mem[\cl]{1}||^2$,
    $p_{\text{var}} \triangleq ( \avgcl \frac{1-p_i}{p_i})^{-1}$, $p_{\text{avg}} \triangleq \avgcl p_i$, 
    and $p_{\text{min}} \triangleq \min_i p_i$.
\end{theorem}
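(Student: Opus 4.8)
The plan is to run a standard nonconvex descent analysis on the global iterates, augmented by a \texttt{SAGA}-style potential that tracks the staleness of the stored updates. I would begin from $L$-smoothness of $F$ (Assumption~\ref{asm:smoothness}, inherited by the average objective), writing the one-step descent inequality
\begin{align*}
    \E[\obj{t+1}] \leq \obj{t} - \lr[s]\scalar{\nabla F(\param[]{t})}{\E[\Delta^{(t)}]} + \frac{L\lr[s]^2}{2}\E\norm{\Delta^{(t)}}^2,
\end{align*}
where the expectation is over the current round's participation set and stochastic gradients conditioned on the history. The first task is to control the inner-product term: since the \fedstale{} aggregation~\eqref{eq:fedstale_weight} is unbiased, $\E_{\mathcal{S}^{(t)}}[\Delta^{(t)}] = \avgcl \Delta_i^{(t)}$, and this conditional mean is independent of both $\beta$ and the memory. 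I would then relate $\avgcl\Delta_i^{(t)}$ to $\lr[c]K\nabla F(\param[]{t})$ by unrolling the $K$ local SGD steps, paying for the discrepancy with a ``client drift'' term that is standard in local-SGD analyses; bounding the local iterate deviations via Assumptions~\ref{asm:stochastic_gradient}--\ref{asm:heterogeneity} produces the $\sigma^2$ and $K(K-1)\sigma_g^2$ pieces that ultimately populate the data-heterogeneity error.

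The crux is the second-moment bound on $\E\norm{\Delta^{(t)}}^2$. I would split $\Delta^{(t)}$ into its conditional mean and a zero-mean fluctuation, then decompose the fluctuation over clients. Using the independence of participation across clients (Assumption~\ref{asm:participation}), the cross terms vanish and the variance reduces to a sum of per-client contributions, each scaling like $\frac{1-p_i}{p_i}\norm{\Delta_i^{(t)}-\beta\mem[\cl]{t}}^2$. This is precisely where $p_{\text{var}}=(\avgcl\frac{1-p_i}{p_i})^{-1}$ enters, and where variance reduction manifests: subtracting $\beta\mem[\cl]{t}$ shrinks the per-client term whenever the memory is a good proxy. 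Separating the stochastic-gradient noise from the deterministic drift inside this sum yields the $\frac{1}{N}$ and $\beta^2\frac{p_{\text{avg}}}{p_{\text{min}}}$ coefficients multiplying $\sigma^2$ in the stated bound.

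To close the recursion I would introduce the Lyapunov function $\lyap{t} = \obj{t} - F^* + c\,H^{(t)}$, where $H^{(t)}=\avgcl\norm{\nabla F_i(\param[]{t})-\mem[\cl]{t}}^2$ measures aggregate staleness (matching the stated $H^{(1)}$, up to the $\lr[c]K$ rescaling between gradient- and update-space) and $c>0$ is tuned later. The memory recursion $\mem[\cl]{t+1}=\Delta_\cl^{(t)}$ on participation, unchanged otherwise, gives in expectation a contraction $\E[H^{(t+1)}] \leq (1-\Theta(p_{\text{min}}))H^{(t)} + \text{(iterate movement)} + \sigma^2\text{-noise}$; the factor $1-\Theta(p_{\text{min}})$ arises because the least-participating client refreshes its memory most slowly, which is exactly why $p_{\text{min}}$ governs the memory-initialization and the $\beta^2$-scaled error terms. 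Choosing $c$ so the positive $H^{(t)}$ term produced by the variance bound is absorbed by this contraction, the $H^{(t)}$ terms telescope; summing $\E[\lyap{t}]-\E[\lyap{t+1}]$ over $t=1,\dots,T$, dividing by $\lr[s]\lr[c]KT$, and using $\min_t \leq \text{average}$ produces the four advertised terms, with $H^{(1)}$ surfacing as the memory-initialization error.

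I expect the main obstacle to be the coupled, stochastic evolution of the memory: the discrepancy $H^{(t)}$ depends jointly on past participation realizations and on the iterate path, so the contraction constant, the learning-rate constraints $\lr[c]\leq\frac{1}{8LK}$ and the two-sided bound on $\lr[s]$, and the tuning of $c$ must all be fixed \emph{simultaneously} to keep every coefficient nonnegative while isolating the clean $p_{\text{min}}$ and $p_{\text{var}}$ dependence. The delicate bookkeeping is to carry the $\beta=0$ regime (memory terms vanish, recovering \fedavg{}) and the $\beta=1$ regime (full \fedvarp{} variance reduction) through one unified potential argument without losing the sharp $\beta^2$ scaling.
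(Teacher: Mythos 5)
Your proposal follows essentially the same route as the paper's proof: a smoothness-based descent lemma, unbiasedness of the aggregation, a per-client Bernoulli variance decomposition yielding the $\frac{1-p_i}{p_i}$ weights (hence $p_{\text{var}}$), local-SGD drift bounds producing the $K(K-1)\sigma_g^2$ terms, a memory-error recursion contracting at rate $1-\Theta(p_{\text{min}})$, and a Lyapunov telescoping argument under the same learning-rate constraints. The only organizational difference is that the paper's potential carries a third term proportional to $\norm{\Delta^{(t)}}^2$ (alongside the objective and $H^{(t)}$) to absorb the iterate-movement terms $\eta^2 L^2 \norm{\Delta^{(t-1)}}^2$ that its lagged definition of $H^{(t)}$ injects into both the variance and memory recursions — exactly the coupling your outline defers to the ``simultaneous tuning'' of constants, so this is a bookkeeping variant rather than a different argument.
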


Theorem~\ref{thm:fedstale_upper} relates \fedstale{}'s convergence to the iterate and memory initial errors, and variances from stochastic gradients ($\sigma^2$) and data heterogeneity ($\sigma_g^2$). It also quantifies the impact of client participation heterogeneity through the terms $p_{\text{var}}$, $p_{\text{avg}}$, and $p_{\text{min}}$.
By scaling the client learning rate as $\mathcal{O}(\frac{1}{\sqrt{T}})$, all error components asymptotically vanish, proving the \emph{unbiasedness} of update~\eqref{eq:fedstale_weight}. 

Theorem~\ref{thm:fedstale_upper} integrates \fedavg{} and \fedvarp{} convergence analyses in a single framework, 
providing new insights on their different behaviors. 
First, for $\beta=0$, the bound provides a convergence result for \fedavg{}.


\begin{corollary}[Convergence of \fedavg{}, upper bound]
\label{crl:fedavg}
    Under same assumptions as Theorem~\ref{thm:fedstale_upper},
    the sequence of \fedavg{} iterates satisfies
    \begin{align}
        &\min_{t\in\{1,T\}} \E \norm{\nabla F(\bm{w}_{\fedavg{}}^{(t)})}^2 
        \leq \label{eq:fedavg_convergence} \\
        &\underbrace{\mathcal{O} \biggl( \frac{F(\bm{w}^{(1)}) - F^*}{\lr[s] \lr[c] K T}\biggr)}_{\text{iterate initialization error}}
        + \underbrace{\mathcal{O} \biggl( \frac{\lr[s]\lr[c] L \sigma^2}{N p_{\text{var}}}   \biggr)}_{\text{stochastic gradient error}}
        + \underbrace{\mathcal{O} \biggl( \frac{\lr[s]\lr[c] L K \sigma_g^2}{N p_{\text{var}}}  \biggr)}_{\text{error from data heterogeneity}}. \notag
    \end{align}
\end{corollary}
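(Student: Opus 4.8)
The plan is to derive Corollary~\ref{crl:fedavg} directly as the $\beta=0$ specialization of Theorem~\ref{thm:fedstale_upper}, exploiting the fact that \fedstale{} was designed to coincide with \fedavg{} at $\beta=0$. First I would establish the algorithmic reduction. From the convex combination~\eqref{eq:fedstale_convex}, setting $\beta=0$ gives $\Delta_{\fedstale{}}^{(t)} = \Delta_{\fedavg{}}^{(t)}$, and in the equivalent form~\eqref{eq:fedstale_weight} every occurrence of the stale memory $\mem[\cl]{t}$ is multiplied by $\beta$ and therefore disappears. Consequently the two iterate sequences coincide pathwise, $\bm{w}_{\fedstale{}}^{(t)}=\bm{w}_{\fedavg{}}^{(t)}$, so it suffices to evaluate the right-hand side of~\eqref{eq:fedstale_upper} at $\beta=0$.

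Next I would verify that the learning-rate hypotheses collapse to those implicitly required by the corollary. The client-rate condition $\lr[c]\le\frac{1}{8LK}$ is independent of $\beta$ and is retained verbatim. For the server rate, the first argument of the minimum becomes $\frac{N p_{\text{var}}}{12}$, while the second, $\frac{p_{\text{var}}p_{\text{min}}}{3\beta^2 p_{\text{avg}}}$, diverges; since it enters only as an upper bound on $\lr[s]$, it is vacuous at $\beta=0$ and the admissible range reduces to $\lr[s]\le\frac{N p_{\text{var}}}{12}$.

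Finally I would substitute $\beta=0$ termwise into~\eqref{eq:fedstale_upper}. The iterate initialization error is unchanged. The memory initialization error, being proportional to $\beta^2$, vanishes, reflecting that \fedavg{} ignores stale updates. In the stochastic-gradient error the factor $\frac{1}{N}+\beta^2\frac{p_{\text{avg}}}{p_{\text{min}}}$ collapses to $\frac{1}{N}$, yielding $\mathcal{O}(\frac{\lr[s]\lr[c]L\sigma^2}{N p_{\text{var}}})$; similarly, in the data-heterogeneity error the factor $\frac{(1-\beta)^2}{N}+\beta^2\eta_c^2L^2K(K-1)\frac{p_{\text{avg}}}{p_{\text{min}}}$ collapses to $\frac{1}{N}$, yielding $\mathcal{O}(\frac{\lr[s]\lr[c]LK\sigma_g^2}{N p_{\text{var}}})$. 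These are precisely the three terms in~\eqref{eq:fedavg_convergence}.

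There is no substantive obstacle here, as the result is an immediate corollary of an already-proved inequality. The only point demanding care is the $\beta^{-2}$ appearing in the server learning-rate constraint, which I would handle by treating it as a vacuous upper bound at $\beta=0$ rather than by literally evaluating the expression---equivalently, by noting in the proof of Theorem~\ref{thm:fedstale_upper} that every quantity controlled by that constraint carries a $\beta^2$ prefactor and hence imposes no restriction once $\beta=0$.
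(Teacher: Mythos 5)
Your proposal is correct and follows exactly the paper's own route: the paper obtains Corollary~\ref{crl:fedavg} precisely by setting $\beta=0$ in Theorem~\ref{thm:fedstale_upper}, under which \fedstale{} coincides with \fedavg{}, the memory-error term vanishes, and the two variance factors collapse to $\frac{1}{N}$. Your additional care in treating the $\beta^{-2}$ server learning-rate constraint as vacuous at $\beta=0$ is a sound and welcome clarification, but it does not change the argument.
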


Corollary~\ref{crl:fedavg} shows that client participation heterogeneity only affects \fedavg{} convergence through the variance factor $1/p_{\text{var}}$. This term captures the variability of participation probabilities $\prob$ and is minimized---and equal to $(1-p_{\text{avg}})/p_{\text{avg}}$---when client participation is homogeneous. 
Conversely, this variance term increases with larger participation heterogeneity, and may become the dominant factor in Eq.~\eqref{eq:fedavg_convergence} that slows down \fedavg{} convergence. This justifies our observations for \fedavg{} in Figure~\ref{fig:2}.

Second, for $\beta=1$, Theorem~\ref{thm:fedstale_upper} extends \fedvarp{} known convergence results~\cite[Theorem~2]{jhunjhunwalaFedvarpTacklingVariance2022} to heterogeneous client participation. 
\begin{corollary}[Convergence of \fedvarp{}, upper bound]
\label{crl:fedvarp}
    Under the same assumptions as in Theorem~\ref{thm:fedstale_upper},
    \fedvarp{}'s iterates satisfy
    \begin{align}
       &\min_{t\in\{1,T\}} \E \norm{\nabla F(\bm{w}_{\fedvarp{}}^{(t)})}^2 
        \leq
        \underbrace{\mathcal{O} \biggl( \frac{F(\bm{w}^{(1)}) - F^*}{\lr[s]\lr[c] T} + \frac{\lr[s]\lr[c] H^{(1)}}{p_{\text{var}}p_{\text{min}}T} \biggr)}_{\text{iterate and memory initialization errors}}
        \notag \\
       &~~ + \underbrace{\mathcal{O} \biggl( \frac{\lr[s] \lr[c] L p_{\text{avg}} \sigma^2}{p_{\text{var}}p_{\text{min}}}  \biggr)}_{\text{stochastic gradient error}}
        + \underbrace{\mathcal{O} \biggl( \frac{\lr[s]\lr[c]^3 L^3 K^2 (K-1) p_{\text{avg}} \sigma_g^2}{p_{\text{var}}p_{\text{min}}}\biggr)}_{\text{error from data heterogeneity}}.
    \end{align}
\end{corollary}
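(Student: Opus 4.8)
The plan is to obtain Corollary~\ref{crl:fedvarp} as the $\beta=1$ specialization of Theorem~\ref{thm:fedstale_upper}, so that no new analysis is required: it suffices to substitute $\beta=1$ into the general bound~\eqref{eq:fedstale_upper} and simplify. First I would examine the step-size condition. As $\beta \to 1$ the bound $\eta_s \leq \frac{N p_{\text{var}}}{12(1-\beta)^2}$ becomes vacuous, so the admissible learning rates collapse to $\eta_c \leq \frac{1}{8LK}$ together with $\eta_s \leq \frac{p_{\text{var}}p_{\text{min}}}{3 p_{\text{avg}}}$, which is exactly the regime in which Corollary~\ref{crl:fedvarp} is stated.

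Next I would substitute $\beta=1$ into the four error terms. The data-heterogeneity and stochastic-gradient terms follow almost immediately. In the heterogeneity bracket $\frac{(1-\beta)^2}{N} + \beta^2 \eta_c^2 L^2 K(K-1)\frac{p_{\text{avg}}}{p_{\text{min}}}$ the first summand vanishes, leaving $\eta_c^2 L^2 K(K-1)\frac{p_{\text{avg}}}{p_{\text{min}}}$, which reproduces the stated $\mathcal{O}\bigl(\eta_s \eta_c^3 L^3 K^2(K-1) p_{\text{avg}}\sigma_g^2/(p_{\text{var}}p_{\text{min}})\bigr)$ term verbatim. In the stochastic-gradient bracket $\frac{1}{N} + \frac{p_{\text{avg}}}{p_{\text{min}}}$ I would absorb $\frac{1}{N}$ using $p_{\text{avg}} \geq p_{\text{min}}$, hence $\frac{p_{\text{avg}}}{p_{\text{min}}} \geq 1 \geq \frac{1}{N}$; the bracket is then $\Theta(p_{\text{avg}}/p_{\text{min}})$ inside the big-$\mathcal{O}$, yielding $\mathcal{O}\bigl(\eta_s \eta_c L\, p_{\text{avg}}\sigma^2/(p_{\text{var}}p_{\text{min}})\bigr)$, as claimed.

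Finally I would assemble the two initialization terms into the displayed ``iterate and memory initialization errors.'' The iterate term $\mathcal{O}\bigl((F(\bm{w}^{(1)})-F^*)/(\eta_s\eta_c K T)\bigr)$ is relaxed to $\mathcal{O}\bigl((F(\bm{w}^{(1)})-F^*)/(\eta_s\eta_c T)\bigr)$ using $K \geq 1$, and the memory term (at $\beta=1$) is brought to the stated $\mathcal{O}\bigl(\eta_s\eta_c H^{(1)}/(p_{\text{var}}p_{\text{min}}T)\bigr)$ form by invoking the learning-rate constraint $\eta_c \leq \frac{1}{8LK}$ to control its $LK$ prefactor. I expect this last step — the bookkeeping of the $K$ and $L$ factors in the two initialization terms — to be the only part requiring genuine care rather than mechanical substitution, since the general bound carries explicit $LK$ prefactors that must be reconciled with the step-size constraints $\eta_c \leq \frac{1}{8LK}$ and $\eta_s \leq \frac{p_{\text{var}}p_{\text{min}}}{3 p_{\text{avg}}}$ to recover the compact form reported here and in~\cite[Theorem~2]{jhunjhunwalaFedvarpTacklingVariance2022}. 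Every other term follows directly from Theorem~\ref{thm:fedstale_upper}.
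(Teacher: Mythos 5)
Your overall route --- specializing Theorem~\ref{thm:fedstale_upper} at $\beta=1$ --- is exactly what the paper does; Corollary~\ref{crl:fedvarp} is presented as an immediate consequence of the theorem with no separate argument. Your handling of the step-size condition (the $(1-\beta)^{-2}$ constraint becomes vacuous, leaving $\lr[c]\leq\frac{1}{8LK}$ and $\lr[s]\leq\frac{p_{\text{var}}p_{\text{min}}}{3p_{\text{avg}}}$), of the stochastic-gradient bracket (absorb $\frac{1}{N}$ via $\frac{p_{\text{avg}}}{p_{\text{min}}}\geq 1$), of the data-heterogeneity term (the $\frac{(1-\beta)^2}{N}$ summand vanishes), and of the iterate term (relax $\frac{1}{KT}\leq\frac{1}{T}$) are all correct and match the paper's intent.

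The one step that does not work as you describe is the memory-initialization term. At $\beta=1$, Theorem~\ref{thm:fedstale_upper} gives $\mathcal{O}\bigl(\lr[s]\lr[c] L K H^{(1)}/(p_{\text{var}}p_{\text{min}}T)\bigr)$, and invoking $\lr[c]\leq\frac{1}{8LK}$ gives $\lr[c] LK\leq\frac{1}{8}$, i.e.\ the bound $\mathcal{O}\bigl(\lr[s] H^{(1)}/(p_{\text{var}}p_{\text{min}}T)\bigr)$: the factor $\lr[c]$ is consumed by the inequality, not preserved. Since $\lr[c]<1$, this is strictly \emph{weaker} than the form $\mathcal{O}\bigl(\lr[s]\lr[c] H^{(1)}/(p_{\text{var}}p_{\text{min}}T)\bigr)$ stated in the corollary, and indeed no step-size inequality can remove the factor $LK$ while keeping $\lr[c]$, because doing so would shrink the bound by the factor $LK>1$. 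This mismatch is inherited from the paper itself: the corollary's memory term drops $LK$ relative to the theorem (presumably to mirror the form of \cite[Theorem~2]{jhunjhunwalaFedvarpTacklingVariance2022}), and since both versions decay as $1/T$ nothing downstream is affected; but your specific claim that the constraint $\lr[c]\leq\frac{1}{8LK}$ ``recovers the compact form'' is incorrect as algebra. The defensible statements are either to retain $\mathcal{O}\bigl(\lr[s]\lr[c] LK H^{(1)}/(p_{\text{var}}p_{\text{min}}T)\bigr)$, or to declare $L$ and $K$ constants absorbed by the $\mathcal{O}(\cdot)$ in this term --- in which case the same convention should be acknowledged for the remaining terms, where the corollary keeps $L$ and $K$ explicit.
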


We highlight two differences with respect to \fedavg{}. 
First, 
\fedvarp{} mitigates \emph{data heterogeneity error}: by scaling the learning rate $\lr[c]$ as $\mathcal{O}(T^{-1/2})$, the term in $\sigma_g^2$ decreases as $\mathcal{O}(T^{-3/2})$ versus $\mathcal{O}(T^{-1/2})$ for \fedavg{} in~\eqref{eq:fedavg_convergence}. 
However, \fedvarp{} amplifies the stochastic gradient error through the ratio~$p_{\text{avg}}/p_{\text{min}}$, and this terms may become dominant 
as \emph{client participation} becomes more \emph{heterogeneous}.
This drawback, caused from stale updates, was not highlighted by earlier analyses, which considered only \emph{homogeneous} client participation. 

One may wonder whether the appearance of the factor $1/p_{\text{min}}$ in \fedvarp{} bound may not be just an artifact of our proof technique. The following lower bound for \fedvarp{} and \fedstale{} convergence suggests that this is not the case.
\begin{theorem}[Convergence of \fedstale{}, lower bound]
\label{thm:fedstale_lower}
    Under Assumption~\ref{asm:smoothness}, 
    for any time horizon $T \leq \frac{d-1}{2}$,
    there exist $N$ local objectives $\{\obj[\cl]{}: \mathbb{R}^d \rightarrow \mathbb{R}\}$ for which the iterates of any first-order black-box optimization procedure which leverages both fresh and stale updates satisfy
    \begin{align}
        \min_{t\in\{1,T\}} \E \norm{\nabla F(\bm{w}_{\fedstale{}}^{(t)})}^2
        \geq
        \Omega \left( \frac{F(\param{1}) - F^*}{p_{\text{min}}^3 T^3 + 1} \right)
        \label{eq:fedstale_lower}.
    \end{align}
\end{theorem}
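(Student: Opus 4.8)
The plan is to prove Theorem~\ref{thm:fedstale_lower} by an adversarial, information-theoretic argument in the spirit of the classical zero-chain (``zero-respecting'') lower bounds of Nesterov and of Carmon, Duchi, Hinder and Sidford, adapted to the federated setting with heterogeneous participation. The guiding intuition is that on a suitably hard instance a first-order method can only make progress by \emph{activating} the coordinates of a long ``chain'' one checkpoint at a time, and that crossing the bottleneck checkpoints requires \emph{fresh} gradient information from the least-participating client. Since that client participates on average only $p_{\text{min}}T$ times over $T$ rounds (Assumption~\ref{asm:participation}), progress along the chain is throttled to $\mathcal{O}(p_{\text{min}}T)$ checkpoints, and the residual gradient at the first uncrossed checkpoint is bounded away from zero.

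Concretely, I would first construct $N$ local objectives $\{F_i : \mathbb{R}^d \to \mathbb{R}\}$ around a smoothed chain function of the Nesterov type, $F(\bm{w}) \propto \sum_j (w_j - w_{j+1})^2$ up to boundary and smoothing terms chosen to enforce the smoothness constant $L$ of Assumption~\ref{asm:smoothness}. The coordinates would be partitioned among clients so that the bulk of the chain can be advanced by any client (generically one coordinate per observed gradient), but a sparse set of ``checkpoint'' derivatives is held \emph{exclusively} by the designated client with participation probability $p_{\text{min}}$. The core structural step is to establish the zero-respecting property for the full class of procedures in the theorem: because every method is first-order and black-box, each iterate lies in the span of the gradients observed so far, and the chain geometry guarantees that a gradient queried at a point supported on the first $k$ coordinates can activate at most coordinate $k+1$. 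Critically, a \emph{stale} update is by definition a copy of a gradient computed in an earlier round and therefore already lies in the span of past observations; it thus \emph{cannot} activate any new coordinate, so only fresh gradients from the designated client can cross a checkpoint.

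Next I would carry out the probabilistic counting. Under Assumption~\ref{asm:participation} the designated client participates independently with probability $p_{\text{min}}$, so its number of fresh participations over $T$ rounds is a $\mathrm{Binomial}(T, p_{\text{min}})$ variable concentrated around $p_{\text{min}}T$; hence the number $m$ of checkpoints crossed satisfies $m = \mathcal{O}(p_{\text{min}}T)$. The hypothesis $T \le (d-1)/2$, i.e. $d \ge 2T+1$, guarantees that the chain is never exhausted within the horizon, so an uncrossed checkpoint always remains to carry a nonzero gradient. Evaluating the scaled chain's gradient on any iterate supported below the $m$-th checkpoint leaves a fixed nonzero component there; matching the quadratic profile of the chain so that crossing $m$ checkpoints yields suboptimality $\Theta(m^{-2})$ and residual squared gradient norm $\Theta(m^{-3})$ per unit of the initial gap $F(\param{1}) - F^*$, and substituting $m = \mathcal{O}(p_{\text{min}}T)$, produces $\min_{t} \E \norm{\nabla F(\bm{w}_{\fedstale{}}^{(t)})}^2 = \Omega\bigl( (F(\param{1}) - F^*) / (p_{\text{min}}^3 T^3 + 1) \bigr)$, where the $+1$ covers the regime $p_{\text{min}}T \lesssim 1$ in which essentially no checkpoint is crossed and the gradient is bounded below by a constant multiple of the initial gap.

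The main obstacle is making the zero-respecting argument airtight in the presence of \emph{both} fresh and stale updates \emph{and} of $K$ local SGD steps: I must show that neither the server's reuse of the stored stale gradients $\mem[\cl]{t}$ nor the clients' multiple local iterations can leak information about an as-yet-unactivated checkpoint coordinate. The local-iteration part in particular requires that the within-client iterates also respect the chain, so that a block of $K$ local steps advances the support by at most one checkpoint rather than $K$. A secondary obstacle is pinning down the cubic exponent: this forces a careful choice of the chain's low-degree polynomial profile and of its constant scaling so that the smoothness bound $L$ and the gap $F(\param{1}) - F^*$ are simultaneously matched while the residual gradient norm exhibits exactly the $(p_{\text{min}}T)^{-3}$ decay. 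Finally, the concentration step that turns the \emph{expected} participation count $p_{\text{min}}T$ into an almost-sure cap on crossed checkpoints must be handled—e.g. via a high-probability truncation or a worst-case realization of the participation sequence—so that the bound holds in the stated expectation form.
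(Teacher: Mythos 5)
Your proposal follows essentially the same route as the paper's proof: a Nesterov-type zero-chain hard instance split across clients so that only the least-participating client's fresh gradients can activate the gating coordinates (with stale updates lying in the span of past observations and hence unable to advance the chain), an $\mathcal{O}(p_{\text{min}}T)$ bound on the number of activated coordinates over the horizon, and the $\Omega\bigl( L\,(F(\bm{w}^{(1)})-F^*)/k^3 \bigr)$ residual-gradient bound for iterates restricted to the span of the first $k$ coordinates. The only notable execution difference is the probabilistic step you flag as an obstacle: where you propose binomial concentration or truncation, the paper bounds $\E[k^{(t)}]\le 3+2p_{\text{min}}t$ and applies Jensen's inequality to the convex map $k\mapsto 1/\bigl((k+1)(2k+3)^2\bigr)$, which yields the expectation bound directly and avoids concentration arguments entirely.
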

Theorem~\ref{thm:fedstale_lower} proves that 
\fedstale{} for any $\beta >0$, and then \fedvarp{}, requires at least $T \geq \Omega(1/p_{\text{min}})$ rounds to minimize objective~\eqref{eq:g_obj}.

\subsection{Finding the optimal weight $\beta^*$}

\fedstale{} leverages the parameter $\beta$ to balance the multiple sources of variance in Theorem~\ref{thm:fedstale_upper}: stochastic gradients~($\sigma^2$), data heterogeneity ($\sigma_g^2$), and client participation heterogeneity (through the ratio $p_{\text{avg}}/p_{\text{min}}$).

The quadratic dependency on $\beta$ of the bound in Theorem~\ref{thm:fedstale_upper}, Eq.~\eqref{eq:fedstale_upper}, guarantees a unique minimizer $\beta^*\in[0,1]$, generally different from the boundaries values of 0 and 1. 
The optimal $\beta^*$ is:
\begin{align}
    \beta^* 
    =
    \frac{\sigma_g^2/N}{a_1 \frac{p_{\text{avg}}}{p_{\text{min}}}\frac{\sigma^2}{K} + \left[ \frac{1}{N} + a_2 \frac{p_{\text{avg}}}{p_{\text{min}}} \lr[c]^2 L^2 K (K-1)\right] \sigma_g^2},
    \label{eq:beta_opt}
\end{align}
where $a_1$ and $a_2$ are positive constants.

In practice, computing $\beta^*$ is challenging due to the unknowns $L$, $\sigma^2$, and $\sigma_g^2$ in Eqs.~\eqref{eq:fedstale_upper} and~\eqref{eq:beta_opt}, which are difficult to estimate since they depend on the client objectives and on the specific heterogeneity setting. 
Moreover, Eq.~\eqref{eq:fedstale_upper} provides a worst-case upper bound for the gradient norm, but convergence may be significantly faster. For instance, the bound becomes vacuous as $p_{\text{min}}$ approaches zero, yet, if all clients share the same local objective, convergence is unaffected by non-participating clients.
Therefore, we primarily use Eq.~\eqref{eq:beta_opt} to derive \emph{qualitative}, 
yet important guidelines.

The monotonically increasing behavior of $\beta^*$ with $\sigma_g^2$ in Eq.~\eqref{eq:beta_opt} suggests
\ul{\emph{Guideline A: Increase the weight to stale updates, $\beta$, when data heterogeneity, $\sigma_g^2$, increases.}} 

Guideline A is in line with our previous comparison of Corollary~\ref{crl:fedvarp} and Corollary~\ref{crl:fedavg}.
As we observed, stale updates become more beneficial when data heterogeneity ($\sigma_g^2$) is dominant.
Conversely, as data heterogeneity decreases, the benefit from stale updates diminishes. This outcome is intuitive: in the extreme case where all clients share same datasets, each local objective aligns with the global objective.
Relying solely on updates from participating clients is then optimal, as stale updates may only introduce unnecessary noise. 

The monotonically decreasing behavior of $\beta^*$ with the ratio $p_{\text{avg}}/p_{\text{min}}$ in Eq.~\eqref{eq:beta_opt} informs
\ul{\emph{Guideline B: Decrease the weight to stale updates, $\beta$, as client participation heterogeneity, $p_{\text{avg}}/p_{\text{min}}$, increases.}}

Also Guideline B is grounded in intuition: as client participation is more \emph{heterogeneous} ($p_{\text{min}} \ll p_{\text{avg}}$), the least participating clients refresh their stale update less frequently, leading to more \emph{outdated} global updates: leveraging them may yield poor results.
Conversely, when client participation is \emph{homogeneous} ($p_{\text{min}} \approx p_{\text{avg}}$), all clients \emph{uniformly} refresh their update, and global variance reduction methods perform best.

\section{Experimental Results}
\label{sec:experiments}

We evaluate the performance of \fedstale{} in experiments. 
The source code of our experimental framework is in the  supplementary material and will made publicly available after publication.

\subsection{Experimental setup}

\hspace{1ex}
\textbf{System, Datasets, and Models.}
We simulate a FL system with $N=24$ clients. We consider two image classification tasks: handwritten digits recognition on MNIST~\cite{lecun-mnisthandwrittendigit-2010} and natural image classification on CIFAR-10~\citep{krizhevsky2009learning}. Each dataset has 10 classes, or labels. We train two convolutional neural network (CNN) models with slightly different architectures. These models, with cross-entropy loss, define non-convex objectives~\eqref{eq:g_obj}. 

\textbf{Participation heterogeneity.} 
Client participation follows a Bernoulli distribution, in line with Assumption~\ref{asm:participation}.
To simulate heterogeneity in client participation, we randomly divide clients into two groups based on their participation dynamics: one group of clients always participate, while the other, less participating group, have participation probabilities $p_{\text{min}}$ varying in the range $\{50,20,10,5,2,1,0.5,0.2\}\%$. 
The ratio $p_{\text{avg}}/p_{\text{min}}$ specifies the degree of client participation heterogeneity.

\textbf{Data heterogeneity.}
Following existing work~\cite{sattlerClusteredFederatedLearning2021}, we simulate data heterogeneity across clients' local datasets by: 1) randomly partitioning the dataset among clients; 2)~swapping a fraction $\hat{\sigma}_g^2$ of two labels in  the second group, with $\hat{\sigma}_g^2 \in\{0.0, 0.2, 0.4, 0.6, 0.8, 1.0\}$. 
The empirical parameter $\hat{\sigma}_g^2$ mirrors the theoretical variance $\sigma_g^2$ in Assumption~\ref{asm:heterogeneity}, measuring the degree of data heterogeneity: $\hat{\sigma}_g^2=0$ represents homogeneous (IID) data distributions and $\hat{\sigma}_g^2=1$ indicates maximum 
heterogeneity among client datasets.

\textbf{Baselines.}
We compare \fedavg{} ($\beta=0$), \fedvarp{} ($\beta=1$), and \fedstale{} (for $\beta\in\{0.2,0.5,0.8\}$) across diverse heterogeneity settings.
Previous work~\cite{jhunjhunwalaFedvarpTacklingVariance2022} showed that, under partial client participation, \fedvarp{} consistently outperformed both \texttt{MIFA}~\cite{guFastFederatedLearning2021}, due to its biased variance correction, and \texttt{SCAFFOLD}~\cite{karimireddySCAFFOLDStochasticControlled2020}, that also incurs higher communication costs.
We benchmark all algorithms over a consistent time horizon, corresponding, on average, to the first ten participation instances by the least participating client.
Clients perform $K=5$ local iterations. 
We use a batch size of 128 in all experiments. 
For all algorithms, we fix the server learning rate $\lr[s]$ to 1 and tune the client learning rate $\lr[c]$ over the grid $\{10^{-2},10^{-2.5},10^{-3},10^{-3.5},10^{-4}\}$. 
While we initially assume all algorithms have exact knowledge of client participation probabilities, we relax this assumption in Section~\ref{sec:experiments:online}.
We average results over three random seeds.


 

\subsection{Existence of different regimes}

\begin{figure}[t]
    \centering
    \includegraphics[width=\linewidth]{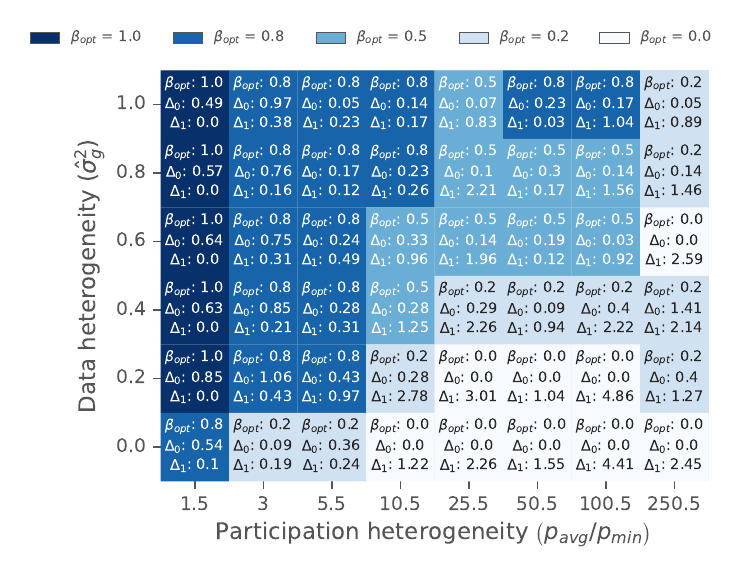}
    \vspace{-0.3cm}
    \caption{\footnotesize $\beta_{\text{opt}}$ values for FedAvg ($\beta$=0), FedVARP ($\beta$=1), and FedStale ($\beta$$\in$\{0.2, 0.5, 0.8\}) across 48 heterogeneity settings on the MNIST dataset. 
    Color gradients range from lighter shades ($\beta_{\text{opt}}$=0) to darker shades ($\beta_{\text{opt}}$=1).} 
    \label{fig:optimal_beta}
\vspace{0.8cm}
\end{figure}

\begin{figure*}
\centering
\begin{minipage}{.49\textwidth}
\centering
\begin{subfigure}[t]{0.495\textwidth}
    \centering
    \includegraphics[width=\textwidth]{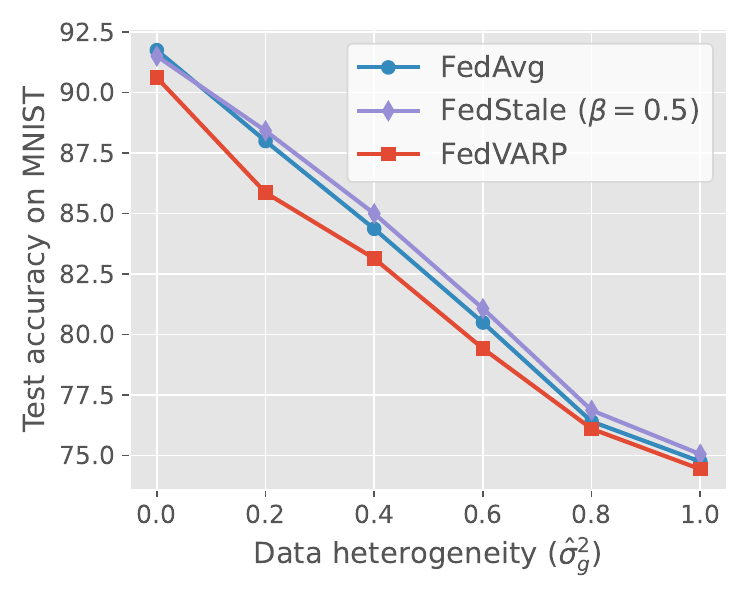}
    \vspace{-0.5cm}
    \caption{\footnotesize MNIST dataset} 
    \label{fig:data_heterogeneity:mnist}
\end{subfigure}
\hfill
\begin{subfigure}[t]{0.495\textwidth}
    \centering
    \includegraphics[width=\textwidth]{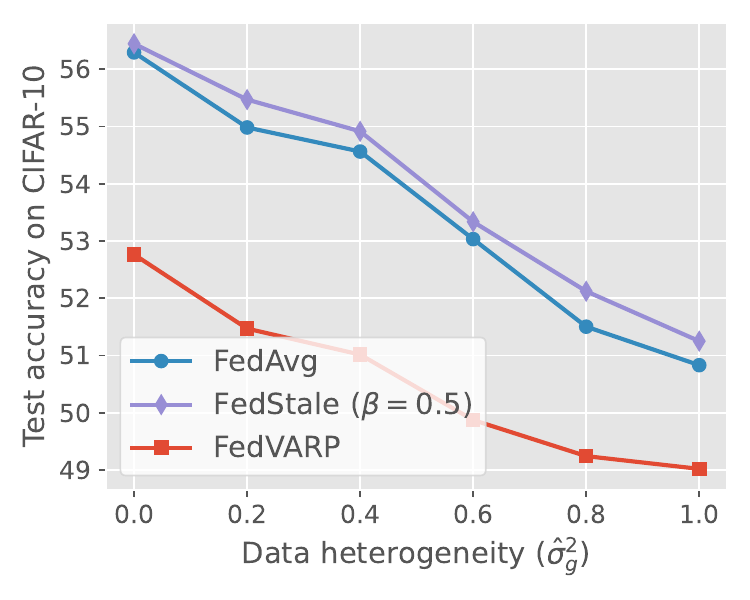}
    \vspace{-0.5cm}
    \caption{\footnotesize CIFAR-10 dataset} 
    \vspace{0.4cm}
    \label{fig:data_heterogeneity:cifar}
\end{subfigure}
\caption{
\footnotesize
Test accuracy of FedAvg ($\beta$=0), FedVARP ($\beta$=1), and FedStale ($\beta$=0.5) varying data heterogeneity at fixed participation ratio $p_{\text{avg}}/p_{\text{min}}=10$.
}
\label{fig:data_heterogeneity}
\end{minipage}%
\hfill
\begin{minipage}{.49\textwidth}
\centering
\begin{subfigure}[t]{0.495\textwidth}
    \centering
    \includegraphics[width=\textwidth]{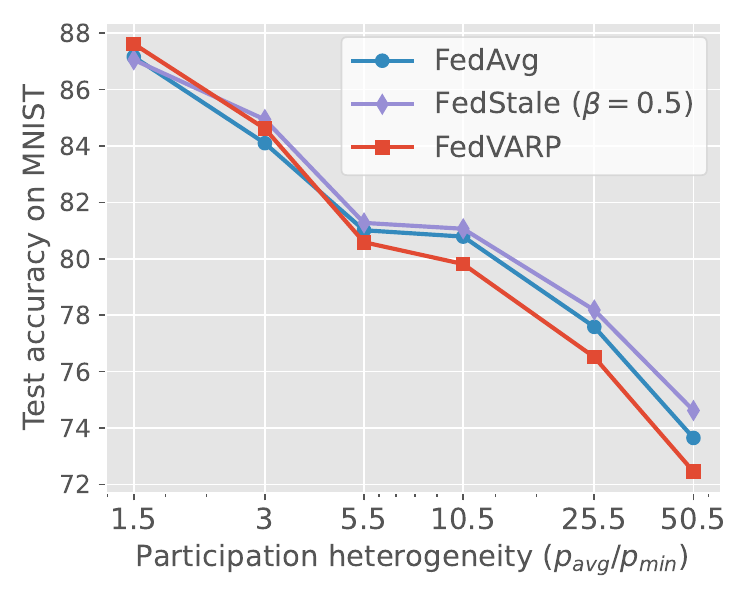}
    \vspace{-0.5cm}
    \caption{\footnotesize MNIST dataset} 
    \label{fig:participation_heterogeneity:mnist}
\end{subfigure}
\hfill
\begin{subfigure}[t]{0.495\textwidth}
    \centering
    \includegraphics[width=\textwidth]{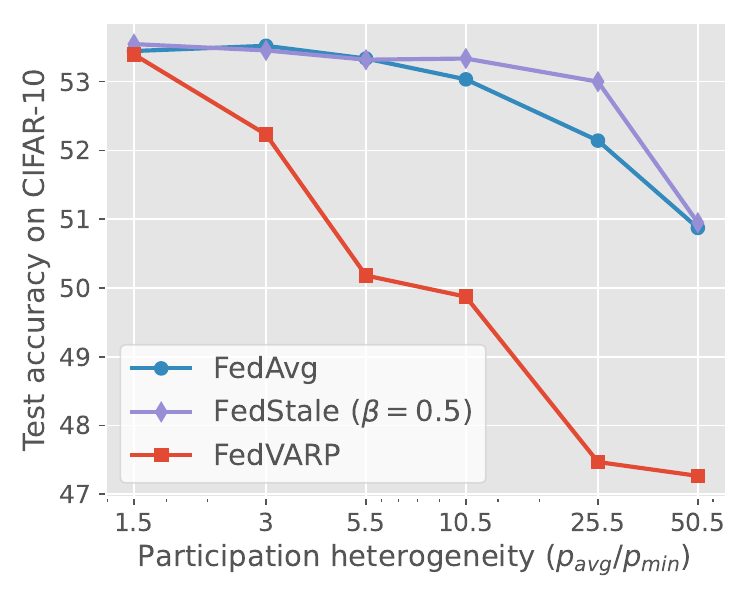}
    \vspace{-0.5cm}
    \caption{\footnotesize CIFAR-10 dataset} 
    \vspace{0.4cm}
    \label{fig:participation_heterogeneity:cifar}
\end{subfigure}
\caption{
\footnotesize
Test accuracy of FedAvg ($\beta$=0), FedVARP ($\beta$=1), and FedStale ($\beta$=0.5) varying client participation ratio at fixed data heterogeneity $\hat{\sigma}_g^2=0.6$.
}
\label{fig:participation_heterogeneity}
\end{minipage}%
\vspace{0.4cm}
\end{figure*}


In Figure~\ref{fig:optimal_beta}, we show the empirical values of $\beta$ that yield the highest test accuracies among \fedavg{} ($\beta=0$), \fedvarp{} ($\beta=1$), and \fedstale{} ($\beta \in \{0.2, 0.5, 0.8\}$) across diverse heterogeneity settings on the MNIST dataset. We denote these values as $\beta_{\text{opt}}$. 

The heatmap shows how $\beta_{\text{opt}}$ varies with client participation heterogeneity ($p_{\text{avg}}/p_{\text{min}}$, in the x-axis) and data heterogeneity ($\hat{\sigma}_g^2$, in the y-axis). 
Moreover, each cell reports the performance gains of the best setting for \fedstale{}. $\Delta_0$ and~$\Delta_1$ denote, respectively, the accuracy improvements of \fedstale{}($\beta_{\text{opt}}$) over \fedavg{} ($\beta=0$) and \fedvarp{} ($\beta=1$).
This visualization aggregates results from 720 training runs, across 8 participation heterogeneity setups and 6 data heterogeneity setups, each comparing 5 algorithms for 3 independent seeds.

\textbf{Multiple regimes in heterogeneity settings.}
No single algorithm consistently outperforms others across all settings. Instead, Figure~\ref{fig:optimal_beta} shows different zones where the best-performing algorithm depends on the interplay between data heterogeneity ($\hat{\sigma}_g^2$) and client participation heterogeneity ($p_{\text{avg}}/p_{\text{min}}$).
The observed trends reflect our qualitative guidelines.

Specifically, Figure~\ref{fig:optimal_beta} identifies three distinct zones where specific patterns in performance emerge:
\emph{i)}~\fedvarp{} yields the best results for large data heterogeneity ($\hat{\sigma}_g^2 \geq 0.2$) and homogeneous client participation ($p_{\text{min}} \approx p_{\text{avg}}$), favoring larger weights to stale updates ($\beta_{\text{opt}}=1$);
\emph{ii)}~conversely, \fedavg{} best fits settings with low data heterogeneity ($\hat{\sigma}_g^2 \leq 0.2$) and large participation heterogeneity ($p_{\text{avg}} \geq 25 p_{\text{min}}$), where using stale updates overall reduces performance;
\emph{iii)}~finally, a significant transitional zone exists where moderate heterogeneity levels ($3 p_{\text{min}} \leq p_{\text{avg}} \leq 25 p_{\text{min}}$) favor intermediate $\beta_{\text{opt}}$ values ($\beta_{\text{opt}} \in \{0.2, 0.5, 0.8\}$), which yield the best performance.

Overall, \fedstale{} prevails in 72\% of scenarios within our $6 \times 8$ grid, against \fedvarp{}, 18\%, and \fedavg{}, 10\%.
Therefore, \fedstale{} plays a key role---we believe---in bridging the gaps posed by \fedavg{} and \fedvarp{} in real-world federated settings, which often exhibit intermediate levels of client data and participation heterogeneity.



\textbf{Effect of data heterogeneity.}
Figure~\ref{fig:optimal_beta} shows that $\beta_{\text{opt}}$ increases with data heterogeneity, in line with Guideline A.
Figure~\ref{fig:data_heterogeneity} explores this trend in more detail, by holding participation heterogeneity constant at $p_{\text{avg}}/p_{\text{min}}=10$ and varying data heterogeneity ($\hat{\sigma}_g^2$).
For all algorithms, increased data heterogeneity corresponds to lower test accuracies.
In Figures~\ref{fig:data_heterogeneity:mnist} and~\ref{fig:data_heterogeneity:cifar}, \fedstale{} ($\beta=0.5$), without particular fine-tuning,
consistently outperforms \fedvarp{} in settings of moderate participation heterogeneity and improves over \fedavg{} as client data become heterogeneous (already at $\hat{\sigma}_g^2 \geq 0.2$).
Moreover, Figure~\ref{fig:data_heterogeneity:cifar} shows that \fedvarp{}, despite its overall lower accuracy, proves to perform better in extremely heterogeneous data scenarios (when $\hat{\sigma}_g^2 \geq 0.8$).

\textbf{Effect of participation heterogeneity.}
Figure~\ref{fig:optimal_beta} shows that $\beta_{\text{opt}}$ decreases as the participation heterogeneity ($p_{\text{avg}}/p_{\text{min}}$) increases, in line with Guideline B.
Figure~\ref{fig:participation_heterogeneity} details this dynamic by fixing data heterogeneity at $\hat{\sigma}_g^2=0.6$ and only varying participation heterogeneity.
In both Figures~\ref{fig:participation_heterogeneity:mnist} and~\ref{fig:participation_heterogeneity:cifar},
it is evident how \fedvarp{} performs well when client participation is homogeneous ($p_{\text{min}} \approx p_{\text{avg}}$), yet struggles with increasing participation heterogeneity.
\fedavg{} exhibits dual behavior, which confirms that the usefulness of stale updates progressively diminishes as participation heterogeneity increases (already at $p_{\text{avg}} \geq 3 p_{\text{min}}$).
Figure~\ref{fig:participation_heterogeneity:cifar} also shows that \fedstale{} ($\beta=0.5$), without specific tuning, maintains robust performance across a wide range of participation levels (until $p_{\text{avg}} \approx 25 p_{\text{min}}$), and only drops accuracy at $p_{\text{avg}} \approx 50 p_{\text{min}}$.

\subsection{Online estimation of participation probabilities}
\label{sec:experiments:online}


We evaluate \fedstale{} with online estimation of client participation probabilities, to simulate scenarios where these probabilities are unknown before training
~\citep{riberoFederatedLearningIntermittent2023, rodioFederatedLearningHeterogeneous, wangLightweightMethodTackling2024}.
To this purpose, 
we integrate \fedstale{} with \texttt{FedAU}~\cite{wangLightweightMethodTackling2024}, 
a state-of-the-art algorithm for tracking client participation dynamics, that balances bias and variance in the estimation through a cutoff mechanism.

Figure~\ref{fig:estimate} shows that the integration of \fedstale{} with \texttt{FedAU}'s estimation technique still aligns with our guidelines.
Moreover, \fedvarp{} performs significantly worse than \fedstale{}($\beta_{\text{opt}}$) when client participation probabilities are estimated ($\Delta_1$ values in Fig.~\ref{fig:estimate}). Also, we observe overall lower $\beta_{\text{opt}}$ values in this scenario. These trends suggest that methods leveraging stale updates, like~\fedvarp{}, might be particularly sensitive to inaccurate $\prob$ estimations.

\begin{figure}[ht]
    \centering
    \vspace{-0.3cm}
    \includegraphics[width=0.85\columnwidth]{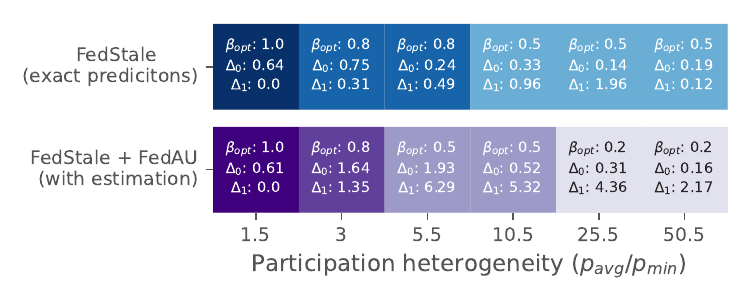}
    \caption{\footnotesize ``Exact'' vs. ``Estimated'' participation probabilities, $\hat{\sigma}_g^2=0.6$.}
    \label{fig:estimate}
\end{figure}


\section{Conclusion}
\label{sec:conclusion}

This paper addresses global variance reduction in federated learning beyond the common assumption of homogeneous client participation. 
Unlike prior work, our research explores not only the advantages but also the challenges of leveraging stale client updates across varying heterogeneity scenarios. 
Our algorithm, \fedstale{}, is equipped with guidelines: practitioners can decide whether storing stale updates is worthwhile or if solely relying on participating client updates is more efficient.
Exploring this tradeoff paves the way---we believe---for developing federated learning algorithms more attuned to the varied dynamics of client data and participation heterogeneity.

\appendix
\section{Appendix}

\subsection{Proof sketch, Theorem~\ref{thm:fedstale_upper}}

Previous work analyzed convergence of FL algorithms in various settings. Closest to our setting are~\citet{wangTacklingObjectiveInconsistency2020} and~\citet{jhunjhunwalaFedvarpTacklingVariance2022}, that analyzed \fedavg{} and \fedvarp{}, respectively, under non-iid data and partial yet \emph{homogeneous} client participation.

Our analysis in Theorem~\ref{thm:fedstale_upper} builds on~\citep{jhunjhunwalaFedvarpTacklingVariance2022} and relies on a similar Lyapunov optimization function as in~\citep[Appendix C.2, Eq.~(20)]{jhunjhunwalaFedvarpTacklingVariance2022}:
\begin{align}
    \lyap{t} 
    \triangleq
    \obj[]{t} 
    + 
    \delta \norm{\gps{t}_{\fedstale{}}}^2 
    + 
    \gamma H^{(t)},
    \quad
    \delta, \gamma >0,
    \label{eq:lyapunov}
\end{align}
where $H^{(t)} \triangleq \avgcl \norm{\grad[\cl]{\param[]{t}}{} - \mem[\cl]{t}}^2$ quantifies the deviation of stale client updates from the ``true'' local gradients.

This section provides the proof outline for Theorem~\ref{thm:fedstale_upper}, focusing mostly on the novel contributions of our analysis:
\begin{itemize}[noitemsep,topsep=0pt]
    \item We apply the standard descent lemma~\cite{wangTacklingObjectiveInconsistency2020}, for smooth and non-convex objectives, to Eq.~\eqref{eq:lyapunov} [Supplementary, Lemma 1];
    \item Under Assumptions~\ref{asm:smoothness}--\ref{asm:participation}, the variance of the global update $\Delta^{(t)}_{\fedstale{}}$ is bounded by variances from stochastic gradients ($\sigma^2$) and data heterogeneity ($\sigma_g^2$), by the square norm of the previous global update $\Delta^{(t-1)}_{\fedstale{}}$, and by the memory error $H^{(t)}$. 
    The last two terms, emerging from stale updates for non-participating clients, are multiplied by $\beta^2$ and contribute to both optimization and error. Additionally, the client participation variance $1/p_{\text{var}}$, consequence of the Bernoulli assumption on the client participation (Assumption~\ref{asm:participation}), equally impacts all these terms.
    More details are provided in [Supplementary, Lemmas~6 and~8];
    \item Under Assumptions~\ref{asm:smoothness}--\ref{asm:participation}, the error from stale updates ($H^{(t)}$) is also bounded by the variances $\sigma^2$ and $\sigma_g^2$, the square norm of the previous global update $\Delta^{(t-1)}_{\fedstale{}}$, and the previous memory error $H^{(t-1)}$. 
    This error, under Assumption~\ref{asm:participation}, depends on the participation probability of the least participating client ($p_{\text{min}}$), is consistently weighted by $\beta^2$, and does not affect \fedavg{}.
    More details in [Supplementary, Lemmas~7 and~9];
    \item Through the Lyapunov recursion, the dependency on $p_{\text{min}}$ remains consistent across all $\beta^2$-weighted terms. This suggests that the influence of $p_{\text{min}}$ stems from stale updates and can be balanced by controlling $\beta$.
    More details in [Supplementary, Theorem 1].
\end{itemize}

\vspace{-0.3cm}
\subsection{Proof sketch, Theorem~\ref{thm:fedstale_lower}}

Our proof builds upon~\citet{nesterovIntroductoryLecturesConvex2004} and~\citet{bubeckConvexOptimizationAlgorithms2015}, who established lower bounds in \emph{centralized} settings,
and~\citet{scamanOptimalConvergenceRates2019}, for general \emph{decentralized} setting.
We adapt the analysis to non-convex federated settings with heterogeneous client participation:
\begin{itemize}[noitemsep,topsep=0pt]
    \item We split Nesterov's function for \emph{centralized}  optimization~\citep{nesterovIntroductoryLecturesConvex2004, bubeckConvexOptimizationAlgorithms2015} between the most and least participating clients ($p_{\text{max}}$ and $p_{\text{min}}$);
    \item Most dimensions of the parameters $\param[\fedstale{}]{t}$ remains zero, and (fresh or stale) client updates only increase non-zero dimensions once every $1/p_{\text{min}}$ steps on average;
    \item We standardize the lower bound measure to squared gradient norms for direct comparison with non-convex counterparts (Theorem~\ref{thm:fedstale_upper}), in expectation over the randomness in client participation.
        
\end{itemize}


\clearpage
\bibliography{bibliography}

\newpage
\onecolumn

\setlength{\parindent}{0pt}
\setlength{\parskip}{0.1in}

\title{\LARGE Supplementary Material. FedStale: leveraging stale client updates in federated learning}
\author{}
{\maketitle}
\begingroup
\normalsize
\section{FedStale, Upper bound}
\subsection{Preliminaries}

In this section, we provide an overview of the \texttt{FedStale} algorithm and establish the necessary notation used throughout this supplementary material.

\subsubsection*{Algorithm Description}

The algorithm's structure, as outlined in Algorithm~\ref{alg:algorithm}, is detailed below:

\begin{algorithm}
    \For{each round $t=1,\dots,T$}{
        \For{all clients $\cl=1,\dots,N$, in parallel}{
            Initialize $\param[\cl]{t,0} = \param[]{t}$ \\
            \For{local iterations $k=0,\dots,K-1$}{
            Sample data $\batch[\cl]{t,k} \stackrel{\text{iid}}{\sim} \mathcal{D}_{\cl}$ \\
            Compute stochastic gradient $\grad[\cl]{\param[\cl]{t,k}}{\batch[\cl]{t,k}}$ \\
            Update $\param[\cl]{t,k+1} = \param[\cl]{t,k} - \lr[c] \grad[\cl]{\param[\cl]{t,k}}{\batch[\cl]{t,k}}$
            } 
            Compute and return $\pg[\cl]{t} = \frac{1}{\lr[c] K} (\param[\cl]{t,0} - \param[\cl]{t,E})$ to the server
            }
        Aggregate client updates $\Delta^{(t)} = \avgcl \beta \mem[\cl]{t} + \avgcl \frac{\ber[i]}{p_i} ( \pg[\cl]{t} - \beta \mem[\cl]{t} )$ \\
        Update global model $\param[]{t+1} = \param[]{t} - \lr[] \Delta^{(t)}_{\texttt{FedStale}}, ~\lr[]=\lr[s]\lr[c] K$ \\
        At the server level, update memory $\forall \cl, \mem[\cl]{t+1} = 
        \begin{dcases}
            \pg[\cl]{t} & \text{if } \ber[\cl]=1\\
            \mem[\cl]{t} & \text{otherwise}
        \end{dcases}$
        }
    \caption{\texttt{FedStale}($\beta$)}
    \label{alg:algorithm}
\end{algorithm}

We detail some modifications from the main text, introduced to streamline notation for this proof:
\begin{enumerate}
    \item In Algorithm~\ref{alg:algorithm}, we assume that \emph{all} clients partake in the local optimization step and compute $\pg[\cl]{t}$. However, the server aggregates only the model updates from participating clients (where $\ber[\cl] = 1$). This assumption simplifies notation and is equivalent to a scenario where only participating clients return their model updates to the server.
    \item To condense notation, we normalize the client update $\Delta_{\cl}^{(t)}$ by the client learning rate $\lr[c]$ and the number of local iterations $K$. This results in rescaling the client update $\Delta_{\cl}^{(t)} = (\param[\cl]{t,0} - \param[\cl]{t,E})$ by $\lr[c]$ and $K$. 
    The server update step is then rewritten as $\param[]{t+1} = \param[]{t} - \lr[] \Delta^{(t)}$, where $\lr[]=\lr[s]\lr[c] K$ represents an ``equivalent'' learning rate at the server level. 
    \item We explicitly write the participation indicator function $\ber[\cl]$ in the server update $\Delta^{(t)}$. This formulation not only brings transparency to the notation but also allows for a more clear understanding of the \texttt{FedAvg}, \texttt{FedVARP}, and \texttt{FedStale} updates:
    \begin{align}
        \Delta_{\texttt{FedAvg}}^{(t)} &= \avgcl \frac{\ber[i]}{p_i} \pg[\cl]{t} 
        \label{eq:app:fedavg} \\
        \Delta_{\texttt{FedVARP}}^{(t)} &= \avgcl \mem[\cl]{t} + \avgcl \frac{\ber[i]}{p_i} \left( \pg[\cl]{t} - \mem[\cl]{t} \right) 
        \label{eq:app:fedvarp} \\
        \Delta^{(t)}_{\texttt{FedStale}}
        &= (1-\beta) \Delta_{\texttt{FedAvg}}^{(t)} + \beta \Delta_{\texttt{FedVARP}}^{(t)}
        \label{eq:fedstale1} \\
        &=\avgcl \beta \mem[\cl]{t} + \avgcl \frac{\ber[i]}{p_i} \left( \pg[\cl]{t} - \beta \mem[\cl]{t} \right) 
        \label{eq:fedstale2} \\
        &= \avgcl \frac{\ber[i]}{p_i} \pg[\cl]{t} - \frac{\beta}{N} \sumcl \left(\frac{\ber[i]}{p_i} - 1 \right) \mem[\cl]{t}
        \label{eq:fedstale3}
    \end{align}
\end{enumerate}

The comparison of Equations~\eqref{eq:app:fedavg}--\eqref{eq:fedstale3} allows for the following considerations:
\begin{enumerate}
    \item \texttt{FedVARP}'s update (Eq.~\eqref{eq:app:fedvarp}) recovers \texttt{FedAvg}'s update (Eq.~\eqref{eq:app:fedavg}) when:
    \begin{enumerate}
        \item All clients participate in the current round ($\ber[\cl]=1, \forall \cl$), or
        \item All memory terms are set to zero ($\mem[\cl]{t} = 0, \forall \cl$).
    \end{enumerate}
    \item \texttt{FedStale}'s update can be rewritten in three different forms (Equations~\eqref{eq:fedstale1}--\eqref{eq:fedstale3}), each offering a unique perspective:
    \begin{enumerate}
        \item Eq.~\eqref{eq:fedstale1} interprets \texttt{FedStale}'s update as a convex combination of \texttt{FedAvg}'s update (Eq.~\eqref{eq:app:fedavg}) and \texttt{FedVARP}'s update (Eq.~\eqref{eq:app:fedvarp}), where $\beta$ is the parameter of the convex combination;
        \item Eq.~\eqref{eq:fedstale2} relates \texttt{FedStale}'s update to \texttt{FedVARP}  (Eq.~\eqref{eq:app:fedvarp}), where $\beta$ acts as a weight for the memory terms $\{\mem[\cl]{t}, \forall\cl\}$;
        \item Eq.~\eqref{eq:fedstale3} frames \texttt{FedStale}'s in relation to \texttt{FedAvg}(Eq.~\eqref{eq:app:fedavg}), introducing the memory term $\mem[\cl]{t}$ whenever client $\cl$ does not participate and subtracting cumulative memory terms $\mem[\cl]{t}/\prob$ when client $\cl$ does participate again.
    \end{enumerate}
\end{enumerate}

The normalized client update $\pg[\cl]{t}$ is the average of $K$ local stochastic gradients computed by client $\cl$ during round $t$.\\ We denote it as \emph{local stochastic pseudo-gradient}:
\begin{remark}[]
The local update $\pg[\cl]{t}$ can be considered as a local stochastic pseudo-gradient:
    \begin{align}
        \pg[\cl]{t} = \frac{1}{\lr[c] K} \Delta_{\cl}^{(t)} 
        = \frac{1}{\lr[c] K} (\param[\cl]{t,0} - \param[\cl]{t,E}) 
        = \frac{1}{K} \sum_{k=0}^{K-1} \grad[\cl]{\param[\cl]{t,k}}{\batch[\cl]{t,k}}.
        \label{eq:remark1}
    \end{align}
\end{remark}
\begin{proof}
    Unroll the recursion $\param[\cl]{t,k+1} = \param[\cl]{t,k} - \lr[c] \grad[\cl]{\param[\cl]{t,k}}{\batch[\cl]{t,k}}$ for $k=0,\dots,K-1$.
\end{proof}

\subsubsection*{Additional Notation}
\begin{align}
\text{Local Stochastic Pseudo-Gradient:} \quad \pg[\cl]{t} &= \frac{1}{K} \sum_{k=0}^{K-1} \grad[\cl]{\param[\cl]{t,k}}{\batch[\cl]{t,k}}; \label{def:local_stochastic_pg} \\
\text{Local Pseudo-Gradient:} \quad \barpg[\cl]{t} &= \frac{1}{K} \sum_{k=0}^{K-1} \grad[\cl]{\param[\cl]{t,k}}{}; \label{def:local_pg} \\
\text{Global Stochastic Pseudo-Gradient:} \quad \pg[]{t} &= \avgcl \pg[\cl]{t}; \label{def:global_stochastic_pg} \\
\text{Global Pseudo-Gradient:} \quad \barpg[]{t} &= \avgcl \barpg[\cl]{t}; \label{def:global_pg} \\
\text{Global Stale Pseudo-Gradient:} \quad \mem[]{t} &= \avgcl \mem[\cl]{t}. \label{def:global_mem} 
\end{align}

\subsubsection*{Main Assumptions}

\begin{assumption}
    \label{asm:app:smoothness}
    The gradients of $\obj[\cl]{}$ are $L$-Lipschitz continuous, $\forall \param{}, \cl$.
\end{assumption}
\begin{assumption}
    \label{asm:app:stochastic_grad}
    The stochastic gradients are unbiased: $\E_{\batch[]{}\sim\mathcal{D}_{\cl}}[\grad[\cl]{\param[]{}}{\batch[]{}}] = \grad[\cl]{\param[]{}}{}$ \\
    with bounded variance: $\E_{\batch[]{}\sim\mathcal{D}_{\cl}} \norm{\grad[\cl]{\param[]{}}{\batch[]{}} - \grad[\cl]{\param[]{}}{}}^2 \leq \sigma^2$, $\forall \param{}, \cl$.
\end{assumption}
\begin{assumption}
    \label{asm:app:heterogeneity}
    The divergence between local and global gradients is uniformly bounded: $\norm{\grad[\cl]{\param[]{}}{} - \grad[]{\param[]{}}{}}^2 \leq \sigma_g^2$, $\forall \param{}, \cl$.
\end{assumption}
\begin{assumption}
    \label{asm:app:participation}
    The client participation outcomes follow a Bernoulli distribution with parameter $\prob$, i.e., $\ber[\cl] \sim \text{Bern}(\prob)$.
\end{assumption}

\subsubsection*{Sources of Randomness}
In this system, we model two sources of randomness. The first arises from the partial and heterogeneous participation of clients, which follows a Bernoulli distribution as stated in Assumption~\ref{asm:app:participation}. The second source of randomness originates from the random sampling of data points for stochastic gradients computation.
Recall that $\mathds{1}^{(t)}$ denotes the random set of clients participating at the $t$-th communication round and that $\xi_{\cl}^{(t,k)}$ denotes the random data point independently sampled from client-$\cl$'s local dataset at round~$t$, local iteration~$k$. For the analysis, we introduce the following additional notation:
\begin{itemize}
    \item $\mathds{1}^{(s:q)} \coloneqq \{ \mathds{1}^{(s)}, \dots, \mathds{1}^{(q)} \}$: the random set of clients participating from the $s$-th to the $q$-th communication rounds, $s {<} q$;
    \item $\xi^{(t)}_{\cl} \coloneqq \{ \xi^{(t,k)}_{\cl}\}_{k=0}^{K-1}$: the set of random batches sampled by the $\cl$-th client at the $t$-th communication round;
    \item $\xi^{(t)} \coloneqq \{ \xi^{(t)}_{\cl}  \}_{\cl \in \mathds{1}^{(t)}}$: the set of random batches sampled by the participating clients ($\mathds{1}^{(t)}$) in the $t$-th round;
    \item $\xi^{(t,s:q)}_{\cl} \coloneqq \{ \xi^{(t,s)}_{\cl}, \dots, \xi^{(t,q)}_{\cl} \}$: the set of random batches sampled by the $\cl$-th client at the $t$-th communication round between the $s$-th and the $q$-th local iterations, $s < q$;
    \item $\xi^{(s:q)} \coloneqq \{ \xi^{(s)}, \dots, \xi^{(q)} \}$: the set of random batches sampled by the available clients ($\mathds{1}^{(s:q)}$) between the $s$-th and $q$-th communication rounds, $s < q$.
\end{itemize}

With this notation established, the randomness in the $t$-th communication round, which starts with the initial model $\param{t}$ and yields the updated model $\param{t+1}$, is fully captured by the sets $\mathds{1}^{(t)}$ and $\xi^{(t)}$. 
Thus, the stochastic progression of our algorithm from the first round to round $t$ can be comprehensively described by the tuple:
\begin{align}
    \mathcal{H}^{(t)} \coloneqq \left( \mathds{1}^{(1)}, \dots, \mathds{1}^{(t-1)}; \batch{1}, \dots, \batch{t-1} \right),
\end{align}
which represents the historical information up to the $t$-th communication round.

\begin{remark}
    For any algorithm, $\texttt{Algm} \in [\texttt{FedAvg},~\texttt{FedVARP},~\texttt{FedStale}]$, the global pseudo-gradient $\Delta_{\texttt{Algm}}^{(t)}$ is unbiased with respect to both sources of randomness---client participation and stochastic gradients:
\begin{align*}
    \berexp [\Delta_{\texttt{Algm}}^{(t)}]
    &=
    \pg[]{t}, \\
    \gexp [\Delta_{\texttt{Algm}}^{(t)}]
    &=
    \barpg[]{t},
\end{align*}
and consequently, the global model $\param[]{t+1}$ is also unbiased:
\begin{align*}
    \gexp[\param[]{t+1}]
    = 
    \param[]{t} - \lr[] \gexp [\Delta_{\texttt{Algm}}^{(t)}]
    =
    \param[]{t} - \lr[] \barpg[]{t}.
\end{align*}
\end{remark}

\subsection{Supporting Lemmas}

In this section, we present key lemmas that underpin the theoretical analysis and facilitate the proof of Theorem~\ref{thm:fedhist}.

\begin{lemma}[Descent lemma]
\label{lem:descent_nc}
Let $F: \R^d \rightarrow \R$ be an $L$-smooth function (Assumption~\ref{asm:app:smoothness}), optimized via the sequence of parameters $\{\param{t}\}$. At each iteration $t$, an SGD update is made according to a learning rate $\eta$ and a stochastic gradient $\gps{t}$. Let $\gexp \left[ \gps{t} \right] = \barpg[]{t}$. Then, the expected reduction in $F$ after one iteration is bounded by: 
\begin{align}
    &\gexp \left[ \obj[]{t+1} \right]\notag \\
    &\leq
    \obj[]{t}
    - \frac{\lr[]}{2} \left[ \norm{\grad[]{\param[]{t}}{}}^2 + \norm{\barpg[]{t}}^2 - \norm{\barpg[]{t} - \grad[]{\param[]{t}}{}}^2  \right] 
    + \frac{\lr[]^2 L}{2} \gexp \norm{\gps{t}}^2.
\end{align}
\end{lemma}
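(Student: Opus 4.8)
The plan is to obtain the claim directly from the quadratic upper bound furnished by $L$-smoothness, then exploit the unbiasedness of the stochastic update recorded in the Remark above, and finally rewrite the resulting inner product with an elementary polarization identity. There is no genuine obstacle here; this is the standard one-step descent inequality for smooth non-convex objectives, so the work is in sequencing three routine moves and tracking the randomness carefully.

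First I would invoke Assumption~\ref{asm:app:smoothness}: $L$-smoothness of $F$ yields the quadratic majorization
\begin{align*}
    \obj[]{t+1}
    \leq
    \obj[]{t}
    + \scalar{\grad[]{\param[]{t}}{}}{\param[]{t+1} - \param[]{t}}
    + \frac{L}{2} \norm{\param[]{t+1} - \param[]{t}}^2.
\end{align*}
Substituting the SGD step $\param[]{t+1} - \param[]{t} = -\lr[] \gps{t}$ turns the right-hand side into $\obj[]{t} - \lr[] \scalar{\grad[]{\param[]{t}}{}}{\gps{t}} + \frac{\lr[]^2 L}{2}\norm{\gps{t}}^2$, which is an inequality between random quantities at this stage.

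Next I would apply the conditional expectation $\gexp[\cdot]$ over both the participation set and the sampled minibatches. Because $\param[]{t}$, and hence $\grad[]{\param[]{t}}{}$, is fixed given $\mathcal{H}^{(t)}$, the gradient can be pulled out of the expectation; the Remark preceding this section then gives $\gexp[\gps{t}] = \barpg[]{t}$, so the cross term collapses to $-\lr[] \scalar{\grad[]{\param[]{t}}{}}{\barpg[]{t}}$, while the quadratic term becomes $\frac{\lr[]^2 L}{2}\gexp\norm{\gps{t}}^2$. Finally I would rewrite the inner product via the polarization identity $\scalar{a}{b} = \tfrac12(\norm{a}^2 + \norm{b}^2 - \norm{a-b}^2)$ with $a = \grad[]{\param[]{t}}{}$ and $b = \barpg[]{t}$; using $\norm{a-b} = \norm{b-a}$ this produces precisely the bracketed expression in the statement, completing the argument.

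The only point needing care is that unbiasedness must be applied with respect to \emph{both} sources of randomness at once, exactly as the Remark asserts, and that the conditioning on $\mathcal{H}^{(t)}$ is what legitimizes treating $\grad[]{\param[]{t}}{}$ as a constant under $\gexp$. No bound on $\gexp\norm{\gps{t}}^2$ is attempted here—that second-moment term is left intact and deferred to the later variance lemmas, which is exactly why this descent lemma can be stated so cleanly.
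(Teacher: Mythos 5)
Your proposal is correct and follows essentially the same route as the paper's proof: the $L$-smoothness quadratic majorization, substitution of the update $\param[]{t+1}-\param[]{t} = -\lr[]\gps{t}$, conditional expectation using $\gexp[\gps{t}] = \barpg[]{t}$, and expansion of the inner product via the identity $\scalar{\bm{a}}{\bm{b}} = \tfrac{1}{2}(\norm{\bm{a}}^2 + \norm{\bm{b}}^2 - \norm{\bm{a}-\bm{b}}^2)$, which is exactly the law-of-cosines step the paper uses. Your added remark about conditioning on $\mathcal{H}^{(t)}$ legitimizing treating $\grad[]{\param[]{t}}{}$ as constant is implicit in the paper and correctly stated.
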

\begin{proof}[Proof of Lemma~\ref{lem:descent_nc}] 
By the $L$-smoothness of $F$, it follows that:
    \begin{align}
        \obj[]{t+1}
        &\leq
        \obj[]{t} + \scalar{\grad[]{\param[]{t}}{}}{\param[]{t+1} - \param[]{t}} + \frac{L}{2} \norm{\param[]{t+1} - \param[]{t}}^2 \label{eq:descent_smoothness} \\
        &\leq
        \obj[]{t} - \lr[] \scalar{\grad[]{\param[]{t}}{}}{\gps{t}} + \frac{\lr[]^2 L}{2} \norm{\gps{t}}^2 \label{eq:descent_global_step},
    \end{align}
where Eq.~\eqref{eq:descent_global_step} applies the update rule $\param[]{t+1} = \param[]{t} - \lr[] \Delta^{(t)}$.

Taking expectation over the randomness at the $t$-th round, due to client participation (inherent in $\ber[]$) and to stochastic gradients (inherent in $\batch[]{t} \coloneqq \{ \batch[\cl]{t,k} \}_{i,k}$), yields:
\begin{align}
    &\gexp \left[ \obj[]{t+1} \right]
    \notag \\
    &\leq
    \obj[]{t} - \lr[] \gexp \scalar{\grad[]{\param[]{t}}{}}{\gps{t}} + \frac{\lr[]^2 L}{2} \gexp \norm{\gps{t}}^2 \\
    &\leq
    \obj[]{t} - \lr[] \left[ \scalar{\grad[]{\param[]{t}}{}}{\barpg[]{t}} \right] + \frac{\lr[]^2 L}{2} \gexp \norm{\gps{t}}^2 \label{eq:descent_expectation} \\
    &\leq
    \obj[]{t} 
    - \frac{\lr[]}{2} \left[ \norm{\grad[]{\param[]{t}}{}}^2 + \norm{\barpg[]{t}}^2 - \norm{\barpg[]{t} - \grad[]{\param[]{t}}{}}^2 \right] 
    + \frac{\lr[]^2 L}{2} \gexp \norm{\gps{t}}^2 \label{eq:descent_law_cosines},
\end{align}
where Eq.~\eqref{eq:descent_expectation} uses $\gexp \left[ \gps{t} \right] = \barpg[]{t}$ and Eq.~\eqref{eq:descent_law_cosines} applies the identity $\norm{\bm{a}-\bm{b}}^2 = \norm{\bm{a}}^2 + \norm{\bm{b}}^2 - 2 \scalar{\bm{a}}{\bm{b}}$.
\end{proof}

\begin{lemma}[Expected value of the local stochastic pseudo-gradients]
\label{lem:stochastic_exp_local}
Let $\pg[\cl]{t}$ and $\barpg[\cl]{t}$ be defined in Eqs.~\eqref{def:local_stochastic_pg} and~\eqref{def:local_pg}, respectively. 
If the stochastic gradients are unbiased (Assumption~\ref{asm:app:stochastic_grad}), the following identity holds:
\begin{align}
    \E_{\batch[\cl]{t} \mid \mathcal{H}^{(t)}} \left[ \pg[\cl]{t} \right]  = \barpg[\cl]{t}.
\end{align}
\begin{proof}[Proof of Lemma~\ref{lem:stochastic_exp_local}]
We observe that the randomness in the iterate for a specific client, $\param[\cl]{t,k}$, is influenced both by the sequence of events up to time $t$ (denoted as $\mathcal{H}^{(t)}$) and by the random batches used for training up to the $k$-th iteration ($\batch[\cl]{t,0:k-1}$). \\
We then rely on a fundamental property of expectations to decompose the expected value of the gradient $\grad[\cl]{\param[\cl]{t,k}}{\batch[\cl]{t,k}}$ as:
\begin{align}
    \E_{\batch[\cl]{t,0:k}\mid\mathcal{H}^{(t)}} \left[\grad[\cl]{\param[\cl]{t,k}}{\batch[\cl]{t,k}}\right] = \E_{\batch[\cl]{t,0:k-1}\mid\mathcal{H}^{(t)}} \left[ \E_{\batch[\cl]{t,k}\mid\batch[\cl]{t,0:k-1},\mathcal{H}^{(t)}} \left[\grad[\cl]{\param[\cl]{t,k}}{\batch[\cl]{t,k}}\right] \right].
    \label{eq:law_total_expectation}
\end{align}
We finally use Assumption~\ref{asm:app:stochastic_grad} to conclude that $\E_{\batch[\cl]{t,k}\mid\batch[\cl]{t,0:k-1},\mathcal{H}^{(t)}} [\grad[\cl]{\param[\cl]{t,k}}{\batch[\cl]{t,k}}] = \grad[\cl]{\param[\cl]{t,k}}{}$. 

Below, we present the detailed derivations of the proof.
\begin{align}
    \E_{\batch[\cl]{t} \mid \mathcal{H}^{(t)}} \left[ \pg[\cl]{t} \right] 
    &=
    \frac{1}{K} \sum_{k=0}^{K-1} \E_{\batch[\cl]{t} \mid \mathcal{H}^{(t)}} \left[ \grad[\cl]{\param[\cl]{t,k}}{\batch[\cl]{t,k}} \right] \label{eq:lem_exp_def_local_stochastic_pg} \\
    &=
    \frac{1}{K} \Biggl[ 
    \underbrace{\E_{\batch[\cl]{t,0} \mid \mathcal{H}^{(t)}} \left[\grad[\cl]{\param[]{t}}{\batch[\cl]{t,0}}\right]}_{\text{bounded by Assumption~\ref{asm:app:stochastic_grad}}} 
    + 
    \E_{\batch[\cl]{t,0},\batch[\cl]{t,1}\mid \mathcal{H}^{(t)}} \left[\grad[\cl]{\param[\cl]{t,1}}{\batch[\cl]{t,1}}\right] 
    + \cdots +
    \notag \\
    &\hspace{7.4cm} 
    + \cdots +
    \E_{\batch[\cl]{t,0:K-1} \mid \mathcal{H}^{(t)}} \left[\grad[\cl]{\param[\cl]{t,K-1}}{\batch[\cl]{t,K-1}}\right] \Biggr] \label{eq:lem_exp_explicit_exp} \\
    &= 
    \frac{1}{K} \Biggl[ 
    \grad[\cl]{\param[]{t}}{} 
    + 
    \E_{\batch[\cl]{t,0} \mid \mathcal{H}^{(t)}} \biggl[ \underbrace{\E_{\batch[\cl]{t,1} \mid \batch[\cl]{t,0}, \mathcal{H}^{(t)}} \left[\grad[\cl]{\param[\cl]{t,1}}{\batch[\cl]{t,1}}\right] \biggr]}_{\text{bounded by Assumption~\ref{asm:app:stochastic_grad}}} 
    + \cdots + 
    \notag \\
    &\hspace{4cm}
    + \cdots + 
    \E_{\batch[\cl]{t,0:K-2} \mid \mathcal{H}^{(t)}} \biggl[ \underbrace{\E_{\batch[\cl]{t,K-1} \mid \batch[\cl]{t,0:K-2}, \mathcal{H}^{(t)}} \left[\grad[\cl]{\param[\cl]{t,K-1}}{\batch[\cl]{t,K-1}}\right]}_{\text{bounded by Assumption~\ref{asm:app:stochastic_grad}}} \biggr] \Biggr] \label{eq:lem_exp_property_exp} \\
    &= 
    \frac{1}{K} \left[ 
    \grad[\cl]{\param[]{t}}{} 
    + 
    \E_{\batch[\cl]{t,0} \mid \mathcal{H}^{(t)}} \left[\grad[\cl]{\param[\cl]{t,1}}{}\right] 
    + \cdots + 
    \E_{\batch[\cl]{t,0:K-2} \mid \mathcal{H}^{(t)}} \left[\grad[\cl]{\param[\cl]{t,K-1}}{}\right] \right] \label{eq:lem_exp_assumption_exp} \\
    &= 
    \frac{1}{K} \sum_{k=0}^{K-1} \grad[\cl]{\param[\cl]{t,k}}{}
    =
    \barpg[\cl]{t} \vphantom{\frac{1}{K}}, \label{eq:lem_exp_def_local_pg}
\end{align}
where Eq.~\eqref{eq:lem_exp_def_local_stochastic_pg} uses the definition of $\pg[\cl]{t}$ given in~\eqref{def:local_stochastic_pg}, Eq.~\eqref{eq:lem_exp_explicit_exp} makes explicit the dependency of the iterate $\param[\cl]{t,k}$ on the random batches $\batch[\cl]{t,0:k-1}$, Eq.~\eqref{eq:lem_exp_property_exp} uses the law of total expectation given in~\eqref{eq:law_total_expectation}, Eq.~\eqref{eq:lem_exp_assumption_exp} applies the unbiasedness of the stochastic gradient (Assumption~\ref{asm:app:stochastic_grad}), and Eq.~\eqref{eq:lem_exp_def_local_pg} uses the definition of $\barpg[\cl]{t}$ given in~\eqref{def:local_pg}.
\end{proof}
\end{lemma}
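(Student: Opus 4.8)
The plan is to reduce the claim to a per-iteration conditional-unbiasedness statement and then peel off the $K$ local steps one at a time using the tower property of conditional expectation. First I would apply linearity of expectation to the definition~\eqref{def:local_stochastic_pg}, writing
$\E_{\batch[\cl]{t}\mid\mathcal{H}^{(t)}}[\pg[\cl]{t}] = \frac{1}{K}\sum_{k=0}^{K-1} \E_{\batch[\cl]{t}\mid\mathcal{H}^{(t)}}[\grad[\cl]{\param[\cl]{t,k}}{\batch[\cl]{t,k}}]$,
so that the lemma follows once I show that each summand equals $\grad[\cl]{\param[\cl]{t,k}}{}$, after which the definition~\eqref{def:local_pg} of $\barpg[\cl]{t}$ closes the argument.

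The key structural observation is that the iterate $\param[\cl]{t,k}$ is itself random: unrolling the SGD recursion exhibits it as a deterministic function of the round's starting point $\param[]{t}$ (fixed given $\mathcal{H}^{(t)}$) and of the earlier batches $\batch[\cl]{t,0:k-1}$, but crucially \emph{not} of the current batch $\batch[\cl]{t,k}$. Consequently I cannot pull the gradient out of the expectation naively. Instead, for each $k$ I would condition on $\batch[\cl]{t,0:k-1}$ so that $\param[\cl]{t,k}$ becomes measurable with respect to the conditioning, and apply the tower rule
$\E_{\batch[\cl]{t,0:k}\mid\mathcal{H}^{(t)}}[\cdot] = \E_{\batch[\cl]{t,0:k-1}\mid\mathcal{H}^{(t)}}\bigl[\E_{\batch[\cl]{t,k}\mid\batch[\cl]{t,0:k-1},\mathcal{H}^{(t)}}[\cdot]\bigr]$.
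Inside the inner expectation the batch $\batch[\cl]{t,k}$ is drawn freshly and independently from $\mathcal{D}_\cl$, while $\param[\cl]{t,k}$ acts as a constant; Assumption~\ref{asm:app:stochastic_grad} then yields $\E_{\batch[\cl]{t,k}\mid\batch[\cl]{t,0:k-1},\mathcal{H}^{(t)}}[\grad[\cl]{\param[\cl]{t,k}}{\batch[\cl]{t,k}}] = \grad[\cl]{\param[\cl]{t,k}}{}$, with the outer expectation simply carrying along the randomness of the iterate.

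The main obstacle is precisely this coupling between the iterate and the noise: because $\param[\cl]{t,k}$ inherits the randomness of every previous batch, one must establish conditional unbiasedness \emph{given each gradient's own past} rather than invoke Assumption~\ref{asm:app:stochastic_grad} in a single unconditional stroke. Handling this correctly is what forces the nested conditioning across the $k=0,\dots,K-1$ local iterations; once each term is reduced to the true local gradient $\grad[\cl]{\param[\cl]{t,k}}{}$ evaluated at the (random) iterate, summing and dividing by $K$ reproduces $\barpg[\cl]{t}$, completing the proof.
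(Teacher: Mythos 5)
Your proposal is correct and follows essentially the same route as the paper's proof: linearity of expectation to split the pseudo-gradient into its $K$ stochastic gradient terms, then the tower property (law of total expectation) conditioning each term on its own past batches $\batch[\cl]{t,0:k-1}$ so that the iterate $\param[\cl]{t,k}$ is measurable and Assumption~\ref{asm:app:stochastic_grad} applies conditionally. Your identification of the iterate--noise coupling as the reason a single unconditional application of the assumption fails is precisely the point the paper's nested-expectation derivation is built around.
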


\begin{lemma}[Variance of the local stochastic pseudo-gradients]
\label{lem:stochastic_variance_local}
Let $\pg[\cl]{t}$ and $\barpg[\cl]{t}$ be defined as in Eqs.~\eqref{def:local_stochastic_pg} and~\eqref{def:local_pg}, respectively. 
If the variance of the local stochastic gradients is bounded by $\sigma^2$ (Assumption~\ref{asm:app:stochastic_grad}), the following inequality holds:
\begin{align}
    \E_{\batch[\cl]{t} \mid \mathcal{H}^{(t)}} \norm{\pg[\cl]{t} - \barpg[\cl]{t}}^2 
    \leq
    \frac{\sigma^2}{K}.
\end{align}
\begin{proof}[Proof of Lemma~\ref{lem:stochastic_variance_local}]
The proof builds on similar observations to those presented in Lemma~\ref{lem:stochastic_exp_local},
but additionally relies on the bounded variance of local stochastic gradients (Assumption~\ref{asm:app:stochastic_grad}).

Below, the detailed derivations.
\begin{align}
    &\E_{\batch[\cl]{t} \mid \mathcal{H}^{(t)}} \norm{\pg[\cl]{t} - \barpg[\cl]{t}}^2 \notag \\
    &=
    \E_{\batch[\cl]{t} \mid \mathcal{H}^{(t)}} \norm{\frac{1}{K} \sum_{k=0}^{K-1} \left[ \grad[\cl]{\param[\cl]{t,k}}{\batch[\cl]{t,k}} - \grad[\cl]{\param[\cl]{t,k}}{} \right]}^2 \label{eq:lem_loc_var_def} \\
    &= 
    \frac{1}{K^2} \sum_{k=0}^{K-1} \E_{\batch[\cl]{t} \mid \mathcal{H}^{(t)}} \norm{\grad[\cl]{\param[\cl]{t,k}}{\batch[\cl]{t,k}} - \grad[\cl]{\param[\cl]{t,k}}{}}^2 \notag \\
    &\quad
    + 
    \frac{1}{K^2} \sum_{k=0}^{K-1} \sum_{\substack{k'=0\\k'\neq k}}^{K-1} \E_{\batch[\cl]{t} \mid \mathcal{H}^{(t)}} \scalar{\grad[\cl]{\param[\cl]{t,k}}{\batch[\cl]{t,k}} - \grad[\cl]{\param[\cl]{t,k}}{}}{\grad[\cl]{\param[\cl]{t,k'}}{\batch[\cl]{t,k'}} - \grad[\cl]{\param[\cl]{t,k'}}{}}, \label{eq:lem_loc_var_squares}
\end{align}
where Eq.~\eqref{eq:lem_loc_var_def} applies the definitions for $\pg[\cl]{t}$ and $\barpg[\cl]{t}$ given in~\eqref{def:local_stochastic_pg} and~\eqref{def:local_pg}, and Eq.~\eqref{eq:lem_loc_var_squares} expands the squared norm.
To show that the second term in~\eqref{eq:lem_loc_var_squares} is zero, we use the law of total expectation in a similar way as in~\eqref{eq:law_total_expectation}. Indeed, denote $k''=\max\{k, k'\}$. The following relation holds:
\begin{align}
    &\E_{\batch[\cl]{t,0:k''}\mid\mathcal{H}^{(t)}}\left[ \grad[\cl]{\param[\cl]{t,k''}}{\batch[\cl]{t,k''}} - \grad[\cl]{\param[\cl]{t,k''}}{} \right] \notag \\
    &=
    \E_{\batch[\cl]{t,0:k''-1}\mid\mathcal{H}^{(t)}} \Biggl[ \underbrace{\E_{\batch[\cl]{t,k''}\mid\batch[\cl]{t,0:k''-1},\mathcal{H}^{(t)}} \left[ \grad[\cl]{\param[\cl]{t,k''}}{\batch[\cl]{t,k''}} - \grad[\cl]{\param[\cl]{t,k''}}{} \right]}_{=0 \text{ by Assumption~\ref{asm:app:stochastic_grad}}} \Biggr]
    =
    0.
    \label{eq:lem_loc_var_exp}
\end{align}
Therefore, only the first term remains:
\begin{align}
    &\E_{\batch[\cl]{t} \mid \mathcal{H}^{(t)}} \norm{\pg[\cl]{t} - \barpg[\cl]{t}}^2 \notag \\
    &=
    \frac{1}{K^2} \sum_{k=0}^{K-1} \E_{\batch[\cl]{t} \mid \mathcal{H}^{(t)}} \norm{\grad[\cl]{\param[\cl]{t,k}}{\batch[\cl]{t,k}} - \grad[\cl]{\param[\cl]{t,k}}{}}^2 \label{eq:lem_loc_var_first_term} \\
    &=
    \frac{1}{K^2} \Biggl[ 
    \underbrace{\E_{\batch[\cl]{t,0} \mid \mathcal{H}^{(t)}} \norm{\grad[\cl]{\param[]{t}}{\batch[\cl]{t,0}} - \grad[\cl]{\param[]{t}}{}}^2}_{\leq \sigma^2 \text{ by Assumption~\ref{asm:app:stochastic_grad}}}
    +
    \E_{\batch[\cl]{t,0},\batch[\cl]{t,1}\mid \mathcal{H}^{(t)}} \norm{\grad[\cl]{\param[\cl]{t,1}}{\batch[\cl]{t,1}} - \grad[\cl]{\param[\cl]{t,1}}{}}^2 
    + \cdots +
    \notag \\
    &\hspace{6.5cm} 
    + \cdots +
    \E_{\batch[\cl]{t,0:K-1} \mid \mathcal{H}^{(t)}} \norm{\grad[\cl]{\param[\cl]{t,K-1}}{\batch[\cl]{t,K-1}} - \grad[\cl]{\param[\cl]{t,K-1}}{}}^2 \Biggr] 
    \label{eq:lem_loc_var_explicit_exp} \\
    &\leq 
    \frac{1}{K^2} \Biggl[ 
    \sigma^2
    + 
    \E_{\batch[\cl]{t,0} \mid \mathcal{H}^{(t)}} \biggl[ \underbrace{\E_{\batch[\cl]{t,1} \mid \batch[\cl]{t,0}, \mathcal{H}^{(t)}} \norm{\grad[\cl]{\param[\cl]{t,1}}{\batch[\cl]{t,1}} - \grad[\cl]{\param[\cl]{t,1}}{}}^2}_{\leq \sigma^2 \text{ by Assumption~\ref{asm:app:stochastic_grad}}} \biggr]
    + \cdots + 
    \notag \\
    &\hspace{3.15cm}
    + \cdots + 
    \E_{\batch[\cl]{t,0:K-2} \mid \mathcal{H}^{(t)}} \biggl[ \underbrace{\E_{\batch[\cl]{t,K-1} \mid \batch[\cl]{t,0:K-2}, \mathcal{H}^{(t)}} \norm{\grad[\cl]{\param[\cl]{t,K-1}}{\batch[\cl]{t,K-1}} - \grad[\cl]{\param[\cl]{t,K-1}}{}}^2 \biggr]}_{\leq \sigma^2 \text{ by Assumption~\ref{asm:app:stochastic_grad}}} \Biggr] 
    \label{eq:lem_loc_var_property_exp} \\
    &\leq
    \frac{1}{K^2} \sum_{k=0}^{K-1} \sigma^2 = \frac{\sigma^2}{K} \vphantom{\frac{1}{K}},
    \label{eq:lem_loc_var_assumption_loc_var}
\end{align}
where Eq.~\eqref{eq:lem_loc_var_first_term} uses~\eqref{eq:lem_loc_var_exp}, Eq.~\eqref{eq:lem_loc_var_explicit_exp} explicits the dependency of the iterate $\param[\cl]{t,k}$ on the random batches $\batch[\cl]{t,0:k-1}$, Eq.~\eqref{eq:lem_loc_var_property_exp} applies the law of total expectation given in~\eqref{eq:law_total_expectation}, and Eq.~\eqref{eq:lem_loc_var_assumption_loc_var} uses the uniform bound on the variance of local stochastic gradients (Assumption~\ref{asm:app:stochastic_grad}). 
\end{proof}
\end{lemma}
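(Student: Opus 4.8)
The plan is to expand the squared norm as a normalized sum of per-iteration stochastic-gradient noises, then exploit the sequential (filtration) structure of the local SGD trajectory to eliminate the cross terms and bound the diagonal terms via Assumption~\ref{asm:app:stochastic_grad}.

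First I would use the definitions in Eqs.~\eqref{def:local_stochastic_pg} and~\eqref{def:local_pg} to write $\pg[\cl]{t} - \barpg[\cl]{t} = \frac{1}{K}\sum_{k=0}^{K-1}\bm{e}_k$, where $\bm{e}_k \triangleq \grad[\cl]{\param[\cl]{t,k}}{\batch[\cl]{t,k}} - \grad[\cl]{\param[\cl]{t,k}}{}$ denotes the per-step noise. Expanding the squared norm of this average produces a diagonal part and an off-diagonal part, $\frac{1}{K^2}\sum_{k}\E_{\batch[\cl]{t}\mid\mathcal{H}^{(t)}}\norm{\bm{e}_k}^2 + \frac{1}{K^2}\sum_{k\neq k'}\E_{\batch[\cl]{t}\mid\mathcal{H}^{(t)}}\scalar{\bm{e}_k}{\bm{e}_{k'}}$, so it remains to treat the two sums separately.

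Second, for the off-diagonal terms I would fix $k'' = \max\{k,k'\}$ and condition on the earlier batches $\batch[\cl]{t,0:k''-1}$. Since the iterate $\param[\cl]{t,k''}$ is a deterministic function of those earlier batches (and of $\mathcal{H}^{(t)}$), Assumption~\ref{asm:app:stochastic_grad} yields $\E_{\batch[\cl]{t,k''}\mid\batch[\cl]{t,0:k''-1},\mathcal{H}^{(t)}}[\bm{e}_{k''}] = \bm{0}$; applying the law of total expectation exactly as in Eq.~\eqref{eq:law_total_expectation} then makes every cross term vanish. For each diagonal term I would instead condition on $\batch[\cl]{t,0:k-1}$, under which $\param[\cl]{t,k}$ is fixed, and invoke the conditional variance bound of Assumption~\ref{asm:app:stochastic_grad} to obtain $\E\norm{\bm{e}_k}^2 \leq \sigma^2$. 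Summing the $K$ resulting bounds and multiplying by $1/K^2$ gives the claimed $\sigma^2/K$.

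The main obstacle is the careful bookkeeping of the sequential dependence: the iterate $\param[\cl]{t,k}$ is itself a random function of the earlier batches $\batch[\cl]{t,0:k-1}$, so neither unbiasedness nor the variance bound can be applied unconditionally. The subtlety lies in repeatedly invoking the tower property, conditioning on the correct sub-history before applying Assumption~\ref{asm:app:stochastic_grad}, precisely mirroring the conditioning already used in Lemma~\ref{lem:stochastic_exp_local}. Everything else is a routine expansion of the squared norm.
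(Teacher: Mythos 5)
Your proposal is correct and follows essentially the same route as the paper's proof: the same decomposition into diagonal and cross terms, the same use of $k''=\max\{k,k'\}$ with the tower property to kill the cross terms via the unbiasedness in Assumption~\ref{asm:app:stochastic_grad}, and the same conditioning on $\batch[\cl]{t,0:k-1}$ to apply the variance bound to each diagonal term before summing to $\sigma^2/K$. No gaps to report.
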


\begin{lemma}[Variance of the global stochastic pseudo-gradient]
\label{lem:stochastic_variance}
Let $\pg[\cl]{t}$ and $\barpg[\cl]{t}$ be defined as in Eqs.~\eqref{def:local_stochastic_pg} and~\eqref{def:local_pg}, respectively. 
Assuming that client participation outcomes ($\ber[\cl]$) are Bernoulli-distributed with parameter $\prob$, and that the variance of the local stochastic gradients is bounded by $\sigma^2$ (Assumption~\ref{asm:app:stochastic_grad}), the following inequality holds:
\begin{align}
    \gexp 
    \norm{\avgcl \frac{\ber[\cl]}{\prob} \left( \pg[\cl]{t} - \barpg[\cl]{t} \right)}^2 
    \leq
    \left(\avgcl \frac{1}{\prob} \right) \frac{\sigma^2}{N K}.
\end{align}
\end{lemma}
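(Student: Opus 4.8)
The plan is to expand the squared norm of the average into a double sum over client pairs and then exploit two distinct independence structures: (i)~the participation indicators $\ber[\cl]$ are independent of the stochastic batches (in Algorithm~\ref{alg:algorithm} every client performs the local step and draws its batches regardless of whether it is later aggregated), and (ii)~distinct clients sample independent batches. Writing $Z_i \triangleq \pg[\cl]{t} - \barpg[\cl]{t}$ for the per-client stochastic noise, I would first expand
\[
\norm{\avgcl \frac{\ber[\cl]}{\prob} Z_i}^2 = \frac{1}{N^2}\sum_{i=1}^N\sum_{j=1}^N \frac{\ber[\cl]\,\ber[j]}{\prob\, p_j}\scalar{Z_i}{Z_j},
\]
and take the expectation $\gexp$ term by term, splitting the double sum into diagonal ($i=j$) and off-diagonal ($i\neq j$) contributions.

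For the diagonal terms, since $\ber[\cl]$ is a Bernoulli indicator we have $(\ber[\cl])^2=\ber[\cl]$, hence $\E[(\ber[\cl])^2]=\prob$; using that participation is independent of the batches, the $i=j$ term collapses to $\frac{1}{\prob}\,\E_{\batch[\cl]{t}\mid\mathcal{H}^{(t)}}\norm{Z_i}^2$, which is bounded by $\sigma^2/K$ directly from Lemma~\ref{lem:stochastic_variance_local}. Summing the diagonal then gives $\frac{1}{N^2}\sum_i \frac{1}{\prob}\cdot\frac{\sigma^2}{K} = \bigl(\avgcl \frac{1}{\prob}\bigr)\frac{\sigma^2}{NK}$, which is exactly the claimed bound. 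Thus the entire argument reduces to showing that the off-diagonal terms contribute nothing, and the key care point is to use $\E[(\ber[\cl])^2]=\prob$ rather than $\prob^2$.

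The main obstacle is therefore the off-diagonal terms. Here I would factor out the participation expectation (legitimate because $\ber[\cl],\ber[j]$ are independent of all batches), reducing each term to a constant multiple of $\E_{\batch[]{t}\mid\mathcal{H}^{(t)}}\scalar{Z_i}{Z_j}$, and argue that this batch expectation vanishes. To make this clean I would work with the per-iteration increments $\delta_i^{(t,k)} \triangleq \grad[\cl]{\param[\cl]{t,k}}{\batch[\cl]{t,k}} - \grad[\cl]{\param[\cl]{t,k}}{}$, so that $Z_i = \frac1K\sum_k \delta_i^{(t,k)}$, and reuse the martingale-difference identity already established inside the proof of Lemma~\ref{lem:stochastic_variance_local}, namely $\E[\delta_i^{(t,k)}\mid \batch[\cl]{t,0:k-1},\mathcal{H}^{(t)}]=0$, which yields $\E[\delta_i^{(t,k)}\mid\mathcal{H}^{(t)}]=0$ by the tower property. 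Because clients $i\neq j$ draw independent batches, conditional on $\mathcal{H}^{(t)}$ the increments $\delta_i^{(t,k)}$ and $\delta_j^{(t,k')}$ are independent, so $\E[\scalar{\delta_i^{(t,k)}}{\delta_j^{(t,k')}}\mid\mathcal{H}^{(t)}] = \scalar{\E[\delta_i^{(t,k)}\mid\mathcal{H}^{(t)}]}{\E[\delta_j^{(t,k')}\mid\mathcal{H}^{(t)}]}=0$ for every pair $(k,k')$; summing over $k,k'$ kills each off-diagonal term. Collecting the surviving diagonal contribution gives the stated inequality. The delicate points are keeping the two independence layers (participation versus sampling, and across clients) cleanly separated and invoking the conditional-mean-zero property at the level of increments to avoid any circularity in treating $\barpg[\cl]{t}$ as the conditional mean of $\pg[\cl]{t}$.
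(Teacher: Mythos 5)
Your proposal is correct and follows essentially the same route as the paper's proof: expand the squared norm into diagonal and cross terms, kill the cross terms by combining the participation/batch independence and the across-client batch independence with the zero-mean property of the stochastic noise, and bound the diagonal via $\E[(\xi_i^{(t)})^2]=p_i$ together with Lemma~\ref{lem:stochastic_variance_local}. The only (cosmetic) difference is that you re-derive the zero-mean property of $\pg[\cl]{t}-\barpg[\cl]{t}$ at the level of per-iteration martingale increments, whereas the paper simply invokes Lemma~\ref{lem:stochastic_exp_local}, whose own proof uses exactly that tower-property argument.
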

\begin{proof}[Proof of Lemma~\ref{lem:stochastic_variance}]
The proof starts by expanding the squared norm of the average stochastic gradient deviations into a variance term accounting for individual client gradients and a covariance term between gradients from different clients:
\begin{align}
    &\gexp 
    \norm{\avgcl \frac{\ber[\cl]}{\prob} \left( \pg[\cl]{t} - \barpg[\cl]{t} \right)}^2 \notag \\
    &= \gexp  
    \left[\frac{1}{N^2} \sumcl \frac{\left[\ber[\cl]\right]^2}{\prob^2} \norm{\pg[\cl]{t} - \barpg[\cl]{t}}^2 + 
    \frac{1}{N^2} \sumcl \sum_{\substack{\cl'=1 \\ \cl' \neq \cl}}^{N} \frac{\ber[\cl]\ber[\cl']}{p_{\cl} p_{\cl'}}
    \scalar{\pg[\cl]{t} - \barpg[\cl]{t}}{\pg[\cl']{t} - \barpg[\cl']{t}} \right].\label{eq:stochastic_squares}
\end{align}
We leverage the linearity of expectation, the independence of client participation ($\ber[]$) and batch sampling ($\batch[]{t}$), the independence of batch sampling among clients ($\batch[\cl]{t}$ and $\batch[\cl']{t}$), and Lemma~\ref{lem:stochastic_exp_local} to show that:
\begin{align}
    \gexp \left[ \frac{\ber[\cl]\ber[\cl']}{p_{\cl} p_{\cl'}}
    \scalar{\pg[\cl]{t} - \barpg[\cl]{t}}{\pg[\cl']{t} - \barpg[\cl']{t}} \right]
    &=
    \frac{\E_{\xi^{(t)}|\mathcal{H}^{(t)}}[\ber[\cl]\ber[\cl']]}{p_{\cl} p_{\cl'}}
    \E_{\batch[]{t}\mid\mathcal{H}^{(t)}} \left[ \scalar{\pg[\cl]{t} - \barpg[\cl]{t}}{\pg[\cl']{t} - \barpg[\cl']{t}} \right] \\
    &=
    \frac{\E_{\xi^{(t)}|\mathcal{H}^{(t)}}[\ber[\cl]\ber[\cl']]}{p_{\cl} p_{\cl'}}
   \scalar{\underbrace{\E_{\batch[\cl]{t}\mid\mathcal{H}^{(t)}}[\pg[\cl]{t} - \barpg[\cl]{t}]}_{=0 \text{ by Lemma~\ref{lem:stochastic_exp_local}}}}{\underbrace{\E_{\batch[\cl']{t}\mid\mathcal{H}^{(t)}}[\pg[\cl']{t} - \barpg[\cl']{t}]}_{=0 \text{ by Lemma~\ref{lem:stochastic_exp_local}}}}
   = 0.
   \label{eq:lem_glob_var_exp}
\end{align}

Finally, we bound the remaining term using Lemma~\ref{lem:stochastic_variance_local}:
\begin{align}
    \gexp 
    \norm{\avgcl \frac{\ber[\cl]}{\prob} \left( \pg[\cl]{t} - \barpg[\cl]{t} \right)}^2
    &=
    \frac{1}{N^2} \sumcl \frac{\E_{\xi_{\cl}^{(t)} \mid \mathcal{H}^{(t)}} \left[\left(\ber[\cl]\right)^2\right]}{\prob^2} \underbrace{\E_{\batch[\cl]{t} \mid \mathcal{H}^{(t)}} \norm{\pg[\cl]{t} - \barpg[\cl]{t}}^2}_{\text{bounded in Lemma~\ref{lem:stochastic_variance_local}}} \label{eq:stochastic_expectation_split} \\
    &\leq
    \left(\avgcl \frac{1}{\prob} \right) \frac{\sigma^2}{N K},
    \label{eq:lem_glob_var_final}
\end{align}
where Equation~\eqref{eq:stochastic_expectation_split} derives from~\eqref{eq:lem_glob_var_exp} and requires the independence of client participation ($\ber[]$) and batch sampling ($\batch[]{t}$); Equation~\eqref{eq:lem_glob_var_final} replaces the Bernoulli's second-order moment ($\prob$) and applies Lemma~\ref{lem:stochastic_variance_local}.
\end{proof}

\begin{lemma}[Client drift due to multiple local iterations]
\label{lem:local_iterations}
Under bounded local stochastic gradient variance ($\sigma^2$, as per Assumption~\ref{asm:app:stochastic_grad}) and the client learning rate $\lr[c] \leq \frac{1}{2LK}$, the expected squared deviation of a client's pseudo-gradient ($\barpg[\cl]{t}$) from its local gradient ($\grad[\cl]{\param[]{t}}{}$) is bounded as:
\begin{align}
    \E_{\batch[\cl]{t} \mid \mathcal{H}^{(t)}} \norm{\barpg[\cl]{t} - \grad[\cl]{\param[]{t}}{}}^2 
    &\leq 
    2 \lr[c]^2 L^2 K(K - 1) \left[ \frac{\sigma^2}{K} + 2 \norm{\grad[\cl]{\param[]{t}}{}}^2 \right].
    \label{eq:lem_drift_part1}
\end{align}
Additionally, if the variance of local gradients is uniformly bounded across clients (by $\sigma_g^2$, as per Assumption~\ref{asm:app:heterogeneity}):
\begin{align}
    \E_{\batch[\cl]{t} \mid \mathcal{H}^{(t)}} \norm{\barpg[\cl]{t} - \grad[\cl]{\param[]{t}}{}}^2 
    &\leq 
    2 \lr[c]^2 L^2 K(K - 1) \left[ \frac{\sigma^2}{K} + 4 \sigma_g^2 + 4 \norm{\grad[]{\param[]{t}}{}}^2 \right].
    \label{eq:lem:local_iterations}
\end{align}
\end{lemma}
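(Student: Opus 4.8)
The plan is to control the client drift by the accumulated displacement of the local iterates, and then to bound that displacement by unrolling the local SGD recursion while exploiting the martingale structure of the stochastic-gradient noise.

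First I would reduce the claim to a displacement bound. Writing $\barpg[\cl]{t} - \grad[\cl]{\param[]{t}}{} = \frac{1}{K}\sum_{k=0}^{K-1}\bigl(\grad[\cl]{\param[\cl]{t,k}}{} - \grad[\cl]{\param[]{t}}{}\bigr)$ and noting that the $k=0$ term vanishes (since $\param[\cl]{t,0}=\param[]{t}$), Jensen's inequality for $\norm{\cdot}^2$ followed by $L$-smoothness (Assumption~\ref{asm:app:smoothness}) gives $\norm{\barpg[\cl]{t} - \grad[\cl]{\param[]{t}}{}}^2 \le \frac{L^2}{K}\sum_{k=1}^{K-1}\norm{\param[\cl]{t,k}-\param[]{t}}^2$. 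It then suffices to bound $D_k \triangleq \E_{\batch[\cl]{t}\mid\mathcal{H}^{(t)}}\norm{\param[\cl]{t,k}-\param[]{t}}^2$ for each $k$; the vanishing $k=0$ term already foreshadows the factor $(K-1)$.

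Next I would bound $D_k$. Unrolling the local update gives $\param[\cl]{t,k}-\param[]{t} = -\lr[c]\sum_{j=0}^{k-1}\grad[\cl]{\param[\cl]{t,j}}{\batch[\cl]{t,j}}$, and I would split each stochastic gradient into its conditional mean $\grad[\cl]{\param[\cl]{t,j}}{}$ and a zero-mean noise term. As in Lemma~\ref{lem:stochastic_variance_local}, the noise increments form a martingale-difference sequence, so their cross terms vanish by the tower property under Assumption~\ref{asm:app:stochastic_grad} and they contribute at most $k\sigma^2$ --- crucially \emph{linear} in $k$, which is what ultimately produces the $\sigma^2/K$ scaling rather than $\sigma^2$. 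For the mean part, Cauchy--Schwarz yields $k\sum_{j=0}^{k-1}\norm{\grad[\cl]{\param[\cl]{t,j}}{}}^2$, and a further application of $L$-smoothness, $\norm{\grad[\cl]{\param[\cl]{t,j}}{}}^2 \le 2L^2\norm{\param[\cl]{t,j}-\param[]{t}}^2 + 2\norm{\grad[\cl]{\param[]{t}}{}}^2$, reintroduces the displacement $D_j$ and the anchor term $\norm{\grad[\cl]{\param[]{t}}{}}^2$, turning the estimate into a recursion in $\{D_j\}$.

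The main obstacle is precisely this self-referential feedback: the drift appears on both sides of the recursion. I would close it using the learning-rate condition $\lr[c]\le\frac{1}{2LK}$, which keeps the coefficient multiplying the accumulated $D_j$ controlled (of order $1+O(1/K)$ per step), so that iterating over the $K$ local steps does not blow up and the $D_j$-feedback is absorbed as a higher-order correction. Summing the resulting per-step bound over $k$ and using $\sum_{k=0}^{K-1}k = \tfrac{K(K-1)}{2}$ then yields the factor $K(K-1)$ and the clean constant of Eq.~\eqref{eq:lem_drift_part1}, the $\tfrac12$ being cancelled by the factor $2$ that arises when decoupling the noise and mean contributions. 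Finally, the second inequality~\eqref{eq:lem:local_iterations} follows immediately from the first by invoking the data-heterogeneity bound (Assumption~\ref{asm:app:heterogeneity}): applying $\norm{\grad[\cl]{\param[]{t}}{}}^2 \le 2\norm{\grad[\cl]{\param[]{t}}{}-\grad[]{\param[]{t}}{}}^2 + 2\norm{\grad[]{\param[]{t}}{}}^2 \le 2\sigma_g^2 + 2\norm{\grad[]{\param[]{t}}{}}^2$ inside the bracket of Eq.~\eqref{eq:lem_drift_part1} replaces $2\norm{\grad[\cl]{\param[]{t}}{}}^2$ by $4\sigma_g^2 + 4\norm{\grad[]{\param[]{t}}{}}^2$, which is the desired form.
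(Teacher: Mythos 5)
Your proposal is correct and follows essentially the same route as the paper's proof: Jensen plus $L$-smoothness to reduce to the iterate displacement, unrolling the local SGD recursion with a bias–variance split (martingale cross-terms vanishing, giving the $k\sigma^2$ noise term), closing the self-referential recursion by summing over $k$ and using $2\lr[c]^2L^2K(K-1)\le\tfrac12$ from the condition $\lr[c]\le\frac{1}{2LK}$, and finally invoking Assumption~\ref{asm:app:heterogeneity} via the add-and-subtract of $\grad[]{\param[]{t}}{}$ to obtain Eq.~\eqref{eq:lem:local_iterations}.
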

The bound in Eq.~\eqref{eq:lem:local_iterations} captures that, when the number of local iterations $K$ equals 1, $\barpg[\cl]{t}$ and $\grad[\cl]{\param[]{t}}{}$ become equivalent.
\begin{proof}[Proof of Lemma~\ref{lem:local_iterations}]
This proof is borrowed from~\cite{wangTacklingObjectiveInconsistency2020},~\cite[Lemma~6]{jhunjhunwalaFedvarpTacklingVariance2022}. It is included for completeness. \\
The proof starts by replacing the definition of $\barpg[\cl]{t}$ given in~\eqref{def:local_pg}:
\begin{align}
    \E_{\batch[\cl]{t} \mid \mathcal{H}^{(t)}} \norm{\barpg[\cl]{t} - \grad[\cl]{\param[]{t}}{}}^2 
    &=
    \E_{\batch[\cl]{t} \mid \mathcal{H}^{(t)}} \norm{\frac{1}{K} \sum_{k=0}^{K-1} \left( \grad[\cl]{\param[\cl]{t,k}}{} - \grad[\cl]{\param[]{t}}{} \right)}^2 \\
    &\leq
    \frac{1}{K} \sum_{k=0}^{K-1} \E_{\batch[\cl]{t} \mid \mathcal{H}^{(t)}} \norm{\grad[\cl]{\param[\cl]{t,k}}{} - \grad[\cl]{\param[]{t}}{}}^2 \label{eq:lem_drift_jensen} \\
    &\leq
    \frac{L^2}{K} \sum_{k=0}^{K-1} \E_{\batch[\cl]{t} \mid \mathcal{H}^{(t)}} \norm{\param[\cl]{t,k} - \param[]{t}}^2
    \label{eq_proof:lem:FedAvg_grad_ht_err_1},
\end{align}
where Eq.~\eqref{eq:lem_drift_jensen} follows from the Jensen's inequality; Eq.~\eqref{eq_proof:lem:FedAvg_grad_ht_err_1} uses the $L$-smoothness of local gradients (Assumption~\ref{asm:app:smoothness}). \\
Next, the individual difference is bounded as:
\begin{align}
    &\E_{\batch[\cl]{t} \mid \mathcal{H}^{(t)}} \norm{\param[\cl]{t,k} - \param[]{t}}^2 \notag \\
    &= 
    \lr[c]^2 \E_{\batch[\cl]{t} \mid \mathcal{H}^{(t)}} 
    \norm{\sum_{k'=0}^{k-1} \grad[\cl]{\param[\cl]{t,k'}}{\batch[\cl]{t,k'}}}^2 
    \label{eq:lem_drift_update} \\
    &= 
    \lr[c]^2 \left[ \E_{\batch[\cl]{t} \mid \mathcal{H}^{(t)}} 
    \norm{\sum_{k'=0}^{k-1} \left[ \grad[\cl]{\param[\cl]{t,k'}}{\batch[\cl]{t,k'}} - \grad[\cl]{\param[\cl]{t,k'}}{} \right]}^2 
    + \E_{\batch[\cl]{t} \mid \mathcal{H}^{(t)}} \norm{\sum_{k'=0}^{k-1} \grad[\cl]{\param[\cl]{t,k'}}{}}^2 \right] 
    \label{eq:lem_drift_exp} \\
    &\leq 
    \lr[c]^2 \Biggl[ \underbrace{\sum_{k'=0}^{k-1} \E_{\batch[\cl]{t} \mid \mathcal{H}^{(t)}} 
    \norm{\grad[\cl]{\param[\cl]{t,k'}}{\batch[\cl]{t,k'}} - \grad[\cl]{\param[\cl]{t,k'}}{}}^2}_{\text{bounded by Assumption~\ref{asm:app:stochastic_grad}, in a similar way as Lemma~\ref{lem:stochastic_variance_local}}} 
    + k \sum_{k'=0}^{k-1} \E_{\batch[\cl]{t} \mid \mathcal{H}^{(t)}} \norm{\grad[\cl]{\param[\cl]{t,k'}}{}}^2 \Biggr] 
    \label{eq:lem_drift_jensen2} \\
    &\leq 
    \lr[c]^2 \left[ k \sigma^2 + k \sum_{k'=0}^{k-1} \E_{\batch[\cl]{t} \mid \mathcal{H}^{(t)}}
    \norm{\grad[\cl]{\param[\cl]{t,k'}}{} - \grad[\cl]{\param[]{t}}{} + \grad[\cl]{\param[]{t}}{}}^2 \right] 
    \label{eq:lem_drift_plus_minus} \\
    &\leq
    \lr[c]^2 \left[ k \sigma^2 + 2k \sum_{k'=0}^{k-1} \biggl[ L^2 \E_{\batch[\cl]{t} \mid \mathcal{H}^{(t)}} \norm{\param[\cl]{t,k'} - \param[]{t}}^2 + \norm{\grad[\cl]{\param[]{t}}{}}^2 \biggr] \right]
    \label{eq:lem_drift_smoothness},
\end{align}
where Eq.~\eqref{eq:lem_drift_update} applies the local update rule $\param[\cl]{t,k} = \param[]{t} - \lr[c] \sum_{k'=0}^{k-1} \grad[\cl]{\param[\cl]{t,k'}}{\batch[\cl]{t,k'}}$; Eq.~\eqref{eq:lem_drift_exp} leverages the local stochastic gradient unbiasedness (as per Lemma~\ref{lem:stochastic_exp_local}) and its bias-variance decomposition; Eq.~\eqref{eq:lem_drift_jensen2} involves squaring the former term, zeroing the cross terms as per~\eqref{eq:lem_loc_var_exp}, and applying Jensen's inequality to the latter term; Eq.~\eqref{eq:lem_drift_plus_minus} accounts for the bounded variance of local stochastic gradients in the former term (Assumption~\ref{asm:app:stochastic_grad}, as in Lemma~\ref{lem:stochastic_variance_local}), and modifies the latter term by adding and subtracting the initial local gradient ($\grad[\cl]{\param[]{t}}{}$); finally, Eq.~\eqref{eq:lem_drift_smoothness} uses the norm inequality ($\norm{\bm{a}+\bm{b}}^2 \leq 2 \norm{\bm{a}}^2 + 2 \norm{\bm{b}}^2$) and the $L$-smoothness of local objectives (Assumption~\ref{asm:app:smoothness}).

Summing over $k = 0, \dots, K-1$, it yields:
\begin{align}
    &\frac{1}{K} \sum_{k=0}^{K-1} \E_{\batch[\cl]{t} \mid \mathcal{H}^{(t)}} 
    \norm{\param[\cl]{t,k} - \param[]{t}}^2 \notag \\
    &\leq 
    \frac{\lr[c]^2\sigma^2}{K} \sum_{k=0}^{K-1} k + \frac{2 \lr[c]^2 L^2}{K} \sum_{k=0}^{K-1} k \sum_{k'=0}^{k-1} \E_{\batch[\cl]{t} \mid \mathcal{H}^{(t)}} \norm{\param[\cl]{t,k'} - \param[]{t}}^2 + \frac{2 \lr[c]^2}{K} \sum_{k=0}^{K-1} k \sum_{k'=0}^{k-1} \norm{\grad[\cl]{\param[]{t}}{}}^2 
    \\
    &\leq 
    \lr[c]^2 (K - 1) \sigma^2 + 2 \lr[c]^2 L^2 K (K - 1) \left[ \frac{1}{K} \sum_{k=0}^{K-1} \E_{\batch[\cl]{t} \mid \mathcal{H}^{(t)}} 
    \norm{\param[\cl]{t,k} - \param[]{t}}^2 \right] + 2 \lr[c]^2 K (K - 1) \norm{\grad[\cl]{\param[]{t}}{}}^2,
    \label{eq:lem_drift_sums}
\end{align}
where Eq.~\eqref{eq:lem_drift_sums} uses $\sum_{k'=0}^{k-1} \norm{\param[\cl]{t,k'} - \param[]{t}}^2 \leq \sum_{k=0}^{K-1} \norm{\param[\cl]{t,k} - \param[]{t}}^2$ and $\sum_{k=0}^{K-1} k = \frac{1}{2}(K-1)K$. 

Define $D \coloneqq 2 \lr[c]^2 L^2 K (K - 1)$. Choose $\lr[c]$ small enough such that $D \leq 1/2$ ($\Rightarrow \lr[c] \leq \frac{1}{2LK}$). Rearranging the terms:
\begin{align}
    \frac{1}{K} \sum_{k=0}^{K-1} \E_{\batch[\cl]{t} \mid \mathcal{H}^{(t)}} 
    \norm{\param[\cl]{t,k} - \param[]{t}}^2 
    &\leq 
    \frac{\lr[c]^2 (K - 1) \sigma^2}{1-D} + \frac{2 \lr[c]^2 K (K - 1)}{1-D} \norm{\grad[\cl]{\param[]{t}}{}}^2
    \label{eq:lemma_drift_partial}
\end{align}
Substituting~\eqref{eq:lemma_drift_partial} back into~\eqref{eq_proof:lem:FedAvg_grad_ht_err_1}:
\begin{align}
    \E_{\batch[\cl]{t} \mid \mathcal{H}^{(t)}} \norm{\barpg[\cl]{t} - \grad[\cl]{\param[]{t}}{}}^2 
    &\leq 
    \frac{D}{2(1-D)} \frac{\sigma^2}{K} + \frac{D}{1-D} \norm{\grad[\cl]{\param[]{t}}{}}^2 \\
    & \leq D \frac{\sigma^2}{K} + 2D \norm{\grad[\cl]{\param[]{t}}{}}^2,
    \label{eq:lem_drift_part1_proof}
\end{align}
where Eq.~\eqref{eq:lem_drift_part1_proof} uses $D \leq 1/2$. Replacing $D \coloneqq 2 \lr[c]^2 L^2 K (K - 1)$ into~\eqref{eq:lem_drift_part1_proof} completes the proof of Inequality~\eqref{eq:lem_drift_part1}. \\
Additionally, inequality~\eqref{eq:lem:local_iterations} removes the dependency on $\grad[\cl]{\param[]{t}}{}$ by adding and subtracting $\grad[]{\param[]{t}}{}$ in the squared norm:
\begin{align}
    \E_{\batch[\cl]{t} \mid \mathcal{H}^{(t)}} \norm{\barpg[\cl]{t} - \grad[\cl]{\param[]{t}}{}}^2 
    &\leq 
    D \frac{\sigma^2}{K} + 2D \norm{\grad[\cl]{\param[]{t}}{} - \grad[]{\param[]{t}}{} + \grad[]{\param[]{t}}{}}^2 \\
    &\leq 
    D \frac{\sigma^2}{K} + 4D \norm{\grad[\cl]{\param[]{t}}{} - \grad[]{\param[]{t}}{}}^2 + 4D \norm{\grad[]{\param[]{t}}{}}^2 
    \label{eq:lem_drift_norm_inequality} \\
    & \leq D \frac{\sigma^2}{K} + 4 D \sigma_g^2 + 4D \norm{\grad[]{\param[]{t}}{}}^2
    \label{eq:lemma_drift_final},
\end{align}
where Eq.~\eqref{eq:lem_drift_norm_inequality} uses the norm inequality ($\norm{\bm{a}+\bm{b}}^2 \leq 2 \norm{\bm{a}}^2 + 2 \norm{\bm{b}}^2$) and Eq.~\eqref{eq:lemma_drift_final} leverages the uniform variance bound of local gradients across clients ($\sigma_g^2$, from Assumption~\ref{asm:app:heterogeneity}).

Replacing $D \coloneqq 2 \lr[c]^2 L^2 K (K - 1)$ into~\eqref{eq:lemma_drift_final} concludes the proof of inequality~\eqref{eq:lem:local_iterations}.
\end{proof}

\begin{lemma}[Variance of \texttt{FedStale}'s update]
\label{lem:variance}
Let $\Delta^{(t)}$ denote \texttt{FedStale}'s global update with randomness from client participation ($\ber[]$) and batch sampling ($\batch[]{}$), and $\barpg[]{t}$ its unbiased counterpart, as per Eqs.~\eqref{eq:fedstale2} and~\eqref{def:global_pg}, respectively. Under Assumptions~\ref{asm:app:smoothness}--\ref{asm:app:participation}, we bound the variance of \texttt{FedStale}'s pseudo-gradient, due to partial participation and batch sampling, as:
\begin{align}
    &\gexp \norm{\Delta^{(t)} - \barpg[]{t}}^2 \notag \\
    &\leq 
    \frac{6(1-\beta)^2}{N} \Biggl[ 
    \avgcl \frac{1-\prob}{\prob} \underbrace{\E_{\batch[\cl]{t}\mid\mathcal{H}^{(t)}} \norm{\barpg[\cl]{t} - \grad[\cl]{\param[]{t}}{}}^2}_{\text{uniformly bounded in Lemma~\ref{lem:local_iterations}}}
    + \left(\avgcl \frac{1-\prob}{\prob} \right) \sigma_g^2
    + \left(\avgcl \frac{1-\prob}{\prob} \right) \norm{\grad[]{\param[]{t}}{}}^2 \Biggr]
    \notag \\
    &
    + \frac{6\beta^2}{N} \Biggl[ \avgcl \frac{1-\prob}{\prob} \underbrace{\E_{\batch[\cl]{t}\mid\mathcal{H}^{(t)}} \norm{\barpg[\cl]{t} - \grad[\cl]{\param[]{t}}{}}^2}_{\text{uniformly bounded in Lemma~\ref{lem:local_iterations}}}
    + 
    \eta^2L^2 \left( \avgcl \frac{1-\prob}{\prob} \right) \norm{\gps{t-1}}^2
    +
    N \left( \avgcl \frac{1-\prob}{\prob} \right) H^{(t)} \Biggr] \notag \\
    & +\left(\avgcl \frac{1}{\prob} \right) \frac{\sigma^2}{N K}.
    \label{eq:lem_variance_statement}
\end{align}
\end{lemma}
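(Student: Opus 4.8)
The plan is to isolate the two independent sources of randomness — stochastic gradient sampling and Bernoulli client participation — so that the variance splits into a term already controlled by Lemma~\ref{lem:stochastic_variance} and a pure participation term. Starting from form~\eqref{eq:fedstale3} of the update and adding and subtracting $\avgcl \frac{\ber[\cl]}{\prob}\barpg[\cl]{t}$, I would decompose
\begin{align*}
    \gps{t} - \barpg[]{t}
    =
    \underbrace{\avgcl \frac{\ber[\cl]}{\prob}\bigl(\pg[\cl]{t} - \barpg[\cl]{t}\bigr)}_{T_1}
    +
    \underbrace{\avgcl \Bigl(\frac{\ber[\cl]}{\prob} - 1\Bigr)\bigl(\barpg[\cl]{t} - \beta\mem[\cl]{t}\bigr)}_{T_2},
\end{align*}
where $T_1$ collects the batch-sampling fluctuation and $T_2$ the participation fluctuation. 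Taking the stochastic-gradient expectation first (conditionally on the participation pattern), Lemma~\ref{lem:stochastic_exp_local} gives $\E[\pg[\cl]{t}-\barpg[\cl]{t}]=0$; since the participation indicators weighting $T_2$ are independent of this batch noise, the cross term $\E\scalar{T_1}{T_2}$ averages to zero, so that $\gexp\norm{\gps{t}-\barpg[]{t}}^2 = \gexp\norm{T_1}^2 + \gexp\norm{T_2}^2$.

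Second, I would observe that $T_1$ is exactly the object bounded in Lemma~\ref{lem:stochastic_variance}, which immediately produces the last, $\sigma^2$-proportional term $(\avgcl \frac{1}{\prob})\frac{\sigma^2}{NK}$. For $T_2$, expanding the squared norm and using that the $\ber[\cl]$ are independent across clients with $\E[\frac{\ber[\cl]}{\prob}-1]=0$ and $\E[(\frac{\ber[\cl]}{\prob}-1)^2]=\frac{1-\prob}{\prob}$, all off-diagonal terms vanish and
\begin{align*}
    \gexp\norm{T_2}^2 = \frac{1}{N^2}\sumcl \frac{1-\prob}{\prob}\,\E\norm{\barpg[\cl]{t} - \beta\mem[\cl]{t}}^2 .
\end{align*}

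Third, I would bound the per-client quantity $\norm{\barpg[\cl]{t}-\beta\mem[\cl]{t}}^2$ by splitting it as $(1-\beta)\barpg[\cl]{t} + \beta(\barpg[\cl]{t}-\mem[\cl]{t})$ and applying $\norm{\bm a+\bm b}^2\le 2\norm{\bm a}^2+2\norm{\bm b}^2$ (this supplies the factor $2$ that, combined with a further three-way split, gives the constant $6$). For the fresh part I would write $\barpg[\cl]{t} = (\barpg[\cl]{t}-\grad[\cl]{\param[]{t}}{}) + (\grad[\cl]{\param[]{t}}{}-\grad[]{\param[]{t}}{}) + \grad[]{\param[]{t}}{}$ and use $\norm{\bm a+\bm b+\bm c}^2\le 3(\norm{\bm a}^2+\norm{\bm b}^2+\norm{\bm c}^2)$, Assumption~\ref{asm:app:heterogeneity} for $\norm{\grad[\cl]{\param[]{t}}{}-\grad[]{\param[]{t}}{}}^2\le\sigma_g^2$, and Lemma~\ref{lem:local_iterations} for the drift term $\norm{\barpg[\cl]{t}-\grad[\cl]{\param[]{t}}{}}^2$; this reproduces the $(1-\beta)^2$ bracket. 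For the stale part I would use the three-way split $\barpg[\cl]{t}-\mem[\cl]{t} = (\barpg[\cl]{t}-\grad[\cl]{\param[]{t}}{}) + (\grad[\cl]{\param[]{t}}{}-\grad[\cl]{\param[]{t-1}}{}) + (\grad[\cl]{\param[]{t-1}}{}-\mem[\cl]{t})$, bounding the middle term by $L$-smoothness (Assumption~\ref{asm:app:smoothness}) together with the server recursion $\param[]{t}=\param[]{t-1}-\lr[]\gps{t-1}$ to obtain $\lr[]^2L^2\norm{\gps{t-1}}^2$, and bounding the last (memory) term for a single client by the full sum $\sumcl\norm{\grad[\cl]{\param[]{t-1}}{}-\mem[\cl]{t}}^2 = N H^{(t)}$, which supplies the factor $N$ in front of $H^{(t)}$. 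Reassembling the client weights $\frac{1}{N^2}\sum_\cl\frac{1-\prob}{\prob}(\cdot)$ into the stated $\frac{1}{N}[\avgcl\frac{1-\prob}{\prob}](\cdot)$ form then yields~\eqref{eq:lem_variance_statement}.

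The main obstacle is twofold. First, the clean additive split $\gexp\norm{T_1}^2+\gexp\norm{T_2}^2$ with no leftover inflation of the $\sigma^2$ term requires the cross term to vanish exactly; the delicate point is the order of expectations and the conditional unbiasedness of Lemma~\ref{lem:stochastic_exp_local}, since $\barpg[\cl]{t}$ is itself a function of the round-$t$ batches. Second, producing the $\norm{\gps{t-1}}^2$ term is only possible through the three-way decomposition that bridges $\grad[\cl]{\param[]{t}}{}$ to $\grad[\cl]{\param[]{t-1}}{}$ via smoothness and the global update rule, after which the residual memory deviation — anchored at the previous iterate — must be folded into $H^{(t)}$ by the single-term-by-sum bound; keeping the client-specific weights $\frac{1-\prob}{\prob}$ correctly aligned with the $\sigma_g^2$, $\norm{\gps{t-1}}^2$, and $H^{(t)}$ contributions throughout this bookkeeping is where most of the care is needed.
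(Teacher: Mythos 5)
Your proposal is correct and takes essentially the same route as the paper's own proof: your $T_1$/$T_2$ split is algebraically identical to the paper's add-and-subtract of $\bar{\bm{g}}_i^{(t)}$, the cross term is killed in the same way (Lemma~\ref{lem:stochastic_exp_local} plus independence of participation and batch noise), $T_1$ is handed to Lemma~\ref{lem:stochastic_variance}, the Bernoulli variance computation, the $2\times3$ norm splits producing the constant $6$, the smoothness-plus-update-rule step yielding $\eta^2L^2\norm{\Delta^{(t-1)}}^2$, and the folding of the memory deviation into $H^{(t)}$ all match the paper step for step. The only cosmetic difference is that you bound the per-client memory term by the full sum $N H^{(t)}$ whereas the paper uses $\sum_i a_i b_i \leq (\sum_i a_i)(\sum_i b_i)$; both give the identical constant.
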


\begin{proof}[Proof of Lemma~\ref{lem:variance}] 
The proof starts by substituting the definitions for $\Delta^{(t)}$ and $\barpg[]{t}$, given in~\eqref{eq:fedstale2} and~\eqref{def:global_pg}:
\begin{align}
    &\gexp \norm{\Delta^{(t)} - \barpg[]{t}}^2 \notag \\
    &= 
    \gexp  
    \norm{\avgcl \frac{\ber[i]}{\prob} \left( \pg[\cl]{t} - \beta \mem[\cl]{t} \right) - \avgcl \left( \barpg[\cl]{t} - \beta \mem[\cl]{t} \right)}^2 \label{eq:lem_fedhist_variance_def}
    \\
    &= 
    \gexp  
    \norm{\avgcl \frac{\ber[i]}{\prob} \left( \pg[\cl]{t} - \barpg[\cl]{t} + \barpg[\cl]{t} - \beta \mem[\cl]{t} \right) -  \avgcl \left( \barpg[\cl]{t} - \beta \mem[\cl]{t} \right)}^2 \label{eq:lem_fedhist_variance_add_sub}
    \\
    &= 
    \gexp  
    \norm{\avgcl \frac{\ber[i]}{\prob} \left( \barpg[\cl]{t} - \beta \mem[\cl]{t} \right) - \avgcl \left( \barpg[\cl]{t} - \beta \mem[\cl]{t} \right)}^2 
    +
    \underbrace{ \gexp 
    \norm{\avgcl \frac{\ber[i]}{\prob} \left( \pg[\cl]{t} - \barpg[\cl]{t} \right)}^2}_{\text{bounded by Lemma~\ref{lem:stochastic_variance}}} 
    \label{eq:lem_fedhist_variance_unbiased},
\end{align}
where Eq.~\eqref{eq:lem_fedhist_variance_def} uses definitions~\eqref{eq:fedstale2} and~\eqref{def:global_pg}; Eq.~\eqref{eq:lem_fedhist_variance_add_sub} involves adding and subtracting $\barpg[\cl]{t}$ within the squared norm; Eq.~\eqref{eq:lem_fedhist_variance_unbiased} is based on $\barpg[\cl]{t}$ being an unbiased estimator of $\pg[\cl]{t}$ (Lemma~\ref{lem:stochastic_exp_local}). 
The latter term is bounded by Lemma~\ref{lem:stochastic_variance}.

Conditioning on $\mathcal{H}^{(t)}$, $\mem[\cl]{t}$ is constant, and the first term in~\eqref{eq:lem_fedhist_variance_unbiased} represents a variance, due to client participation ($\ber[]$) and stochastic gradients ($\batch[]{t}$). Moreover, conditioning on $\batch[]{t}$, the randomness in Eq.~\eqref{eq:lem_fedhist_variance_varvec} is only due to client participation ($\ber[]$):
\begin{align}
    \VarVec_{\xi^{(t)} \mid \batch[]{t}, \mathcal{H}^{(t)}} 
    \left( \avgcl \frac{\ber[i]}{\prob}  \left( \barpg[\cl]{t} - \beta \mem[\cl]{t} \right) \right) 
    &=
    \VarVec_{\xi^{(t)} \mid \batch[]{t}, \mathcal{H}^{(t)}} 
    \left( \frac{1}{N}  \frac{\ber[i]}{\prob}  \left[ (1-\beta) \barpg[\cl]{t} + \beta \left( \barpg[\cl]{t} - \mem[\cl]{t} \right) \right] \right)  
    \label{eq:lem_fedhist_variance_varvec} \\
    &=
    \frac{1}{N^2} \sumcl \frac{1-\prob}{\prob} \norm{(1-\beta) \barpg[\cl]{t} + \beta \left( \barpg[\cl]{t} - \mem[\cl]{t} \right)}^2 
    \label{eq:lem_fedhist_variance_ber} \\
    &\leq
    \underbrace{\frac{2(1-\beta)^2}{N^2} \sumcl \frac{1-\prob}{\prob} \norm{\barpg[\cl]{t}}^2}_{T_1}
    +
    \underbrace{\frac{2\beta^2}{N^2} \sumcl \frac{1-\prob}{\prob} \norm{\barpg[\cl]{t} - \mem[\cl]{t}}^2}_{T_2},
    \label{eq:lem_fedhist_variance_norm}
\end{align}
where Eq.~\eqref{eq:lem_fedhist_variance_varvec} adds and subtracts $\beta \barpg[\cl]{t}$, then rearranges terms for $\beta$; Eq.~\eqref{eq:lem_fedhist_variance_ber} derives from the Bernoulli variance ($\Var(\ber[\cl])=\prob(1-\prob)$), under Assumption~\ref{asm:app:participation}; Eq.~\eqref{eq:lem_fedhist_variance_norm} leverages the norm inequality $\norm{\bm{a}+\bm{b}}^2 \leq 2 \norm{\bm{a}}^2 + 2 \norm{\bm{b}}^2$.

We proceed by bounding the first term of Eq.~\eqref{eq:lem_fedhist_variance_norm} as follows:
\begin{align}
    &T_1 = \frac{2(1-\beta)^2}{N^2} \sumcl \frac{1-\prob}{\prob} \norm{\barpg[\cl]{t}}^2 \notag \\
    &=
    \frac{2(1-\beta)^2}{N^2} \sumcl \frac{1-\prob}{\prob} \norm{\barpg[\cl]{t} - \grad[\cl]{\param[]{t}}{} + \grad[\cl]{\param[]{t}}{} - \grad[]{\param[]{t}}{} + \grad[]{\param[]{t}}{}}^2 
    \label{eq:lem_fedhist_variance_add_sub2} \\
    &\leq
    \frac{6(1-\beta)^2}{N^2} \sumcl \frac{1-\prob}{\prob} \Biggl[
    \norm{\barpg[\cl]{t} - \grad[\cl]{\param[]{t}}{}}^2
    +
    \underbrace{\norm{\grad[\cl]{\param[]{t}}{} - \grad[]{\param[]{t}}{}}^2}_{\leq \sigma_g^2 \text{ by Assumption~\ref{asm:app:heterogeneity}}}
    + 
    \norm{\grad[]{\param[]{t}}{}}^2 \Biggr]
    \label{eq:lem_fedhist_variance_norm2} \\
    &\leq
    \frac{6(1-\beta)^2}{N} \Biggl[
    \avgcl \frac{1-\prob}{\prob} \underbrace{\norm{\barpg[\cl]{t} - \grad[\cl]{\param[]{t}}{}}^2}_{\text{bounded in expectation by Lemma~\ref{lem:local_iterations}}}
    + \left(\avgcl \frac{1-\prob}{\prob} \right) \sigma_g^2
    + \left(\avgcl \frac{1-\prob}{\prob} \right) \norm{\grad[]{\param[]{t}}{}}^2 \Biggr],
    \label{eq:lem_fedhist_variance_het}
\end{align}
where Eq.~\eqref{eq:lem_fedhist_variance_add_sub2} adds and subtracts the local gradient ($\grad[\cl]{\param[]{t}}{}$) and the global gradient ($\grad[\cl]{\param[]{t}}{}$) within the squared norm; Eq.~\eqref{eq:lem_fedhist_variance_norm2} leverages the norm inequality $\norm{\bm{a}+\bm{b}+\bm{c}}^2 \leq 3 \norm{\bm{a}}^2 + 3 \norm{\bm{b}}^2 + 3 \norm{\bm{c}}^2$; Eq.~\eqref{eq:lem_fedhist_variance_het} applies Assumption~\ref{asm:app:heterogeneity}.

We separately bound the second term of~\eqref{eq:lem_fedhist_variance_norm} as follows:
\begin{align}
    &T_2 = \frac{2\beta^2}{N^2} \sumcl \frac{1-\prob}{\prob} \norm{\barpg[\cl]{t} - \mem[\cl]{t}}^2 \notag \\
    &=
    \frac{2\beta^2}{N^2} \sumcl \frac{1-\prob}{\prob} \norm{\barpg[\cl]{t} - \grad[\cl]{\param[]{t}}{} + \grad[\cl]{\param[]{t}}{} - \grad[\cl]{\param[]{t-1}}{} + \grad[\cl]{\param[]{t-1}}{} - \mem[\cl]{t}}^2
    \label{eq:lem_fedhist_variance_add_sub3}
    \\
    &=
    \frac{6\beta^2}{N^2} \sumcl \frac{1-\prob}{\prob} \left[
    \norm{\barpg[\cl]{t} - \grad[\cl]{\param[]{t}}{}}^2
    + \norm{\grad[\cl]{\param[]{t}}{} - \grad[\cl]{\param[]{t-1}}{}}^2 
    + \norm{\grad[\cl]{\param[]{t-1}}{} - \mem[\cl]{t}}^2 
    \right]
    \label{eq:lem_fedhist_variance_norm3}
    \\
    &\leq
    \frac{6\beta^2}{N^2} \sumcl \frac{1-\prob}{\prob} \norm{\barpg[\cl]{t} - \grad[\cl]{\param[]{t}}{}}^2
    +
    \frac{6\beta^2L^2}{N} \left( \avgcl \frac{1-\prob}{\prob} \right) \norm{\param[]{t} - \param[]{t-1}}^2 \notag \\
    &\quad
    + \frac{6\beta^2}{N^2} \sumcl \frac{1-\prob}{\prob} \norm{\grad[\cl]{\param[]{t-1}}{} - \mem[\cl]{t}}^2 
    \label{eq:lem_fedhist_variance_smoothness} \\
    &\leq
    \frac{6\beta^2}{N^2} \sumcl \frac{1-\prob}{\prob} \norm{\barpg[\cl]{t} - \grad[\cl]{\param[]{t}}{}}^2
    + 
   \frac{6\beta^2\eta^2L^2}{N} \left( \avgcl \frac{1-\prob}{\prob} \right) \norm{\gps{t-1}}^2 \notag \\
    &\quad
    + 6\beta^2 \left( \avgcl \frac{1-\prob}{\prob} \right) \underbrace{\left( \avgcl \norm{\grad[\cl]{\param[]{t-1}}{} - \mem[\cl]{t}}^2 \right)}_{\triangleq H^{(t)}}
    \label{eq:lem_fedhist_variance_bound} \\
    &\leq
    \frac{6\beta^2}{N^2} \sumcl \frac{1-\prob}{\prob} \underbrace{\norm{\barpg[\cl]{t} - \grad[\cl]{\param[]{t}}{}}^2}_{\text{bounded in expectation by Lemma~\ref{lem:local_iterations}}}
    + 
   \frac{6\beta^2\eta^2L^2}{N} \left( \avgcl \frac{1-\prob}{\prob} \right) \norm{\gps{t-1}}^2 \notag \\
    &\quad
    +
    6\beta^2 \left( \avgcl \frac{1-\prob}{\prob} \right) H^{(t)},
    \label{eq:lem_fedhist_variance_hist}
\end{align}
where Eqs.~\eqref{eq:lem_fedhist_variance_add_sub3} and~\eqref{eq:lem_fedhist_variance_norm3} follow the steps of Eqs.~\eqref{eq:lem_fedhist_variance_add_sub2} and~\eqref{eq:lem_fedhist_variance_norm2}; Eq.~\eqref{eq:lem_fedhist_variance_smoothness} is based on the $L$-smoothness of local objectives (Assumption~\ref{asm:app:smoothness}); Eq.~\eqref{eq:lem_fedhist_variance_bound} applies the inequality $\sum_{i=1}^{N} a_i b_i \leq (\sum_{i=1}^{N} a_i)(\sum_{i=1}^{N} b_i)$ for positive $a_i$ and $b_i$; Eq.~\eqref{eq:lem_fedhist_variance_hist} defines
\begin{align}
    H^{(t)} \triangleq \avgcl \norm{\grad[\cl]{\param[]{t-1}}{} - \mem[\cl]{t}}^2.
    \label{def:hist_deviation}
\end{align}

Finally, the bound in Eq.~\eqref{eq:lem_variance_statement} combines Eqs.~\eqref{eq:lem_fedhist_variance_unbiased},~\eqref{eq:lem_fedhist_variance_het}, and~\eqref{eq:lem_fedhist_variance_hist}.
\end{proof}

\begin{lemma}[Bound on the memory term]
\label{lem:memory}
Let $H^{(t)}$, the divergence between the local gradient and the historical pseudo-gradient at time $t$, be defined in Eq.~\eqref{def:hist_deviation}. Under Assumptions~\ref{asm:app:smoothness},~\ref{asm:app:stochastic_grad}, and~\ref{asm:app:participation}, the expected historical error $H^{(t+1)}$ is recursively bounded as:
\begin{align}
    \gexp \left[ H^{(t+1)} \right]
    &\leq
    \left( \avgcl \prob \right) \frac{\sigma^2}{K} 
    +
    \avgcl \prob \E_{\batch[\cl]{t}\mid\mathcal{H}^{(t)}} \norm{\barpg[\cl]{t} - \grad[\cl]{\param[]{t}}{}}^2  \notag \\
    &\quad
    +
    \eta^2 L^2 \left(1+\frac{1}{C}\right) \left(1- \frac{1}{N}\sumcl \prob \right) \norm{\gps{t-1}}^2
    +
    \left(1+C\right) \left(1-p_{\text{min}}\right) H^{(t)}.
    \label{eq:lem_memory}
\end{align}
\end{lemma}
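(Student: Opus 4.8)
The plan is to start from the definition $H^{(t+1)} = \avgcl \norm{\grad[\cl]{\param[]{t}}{} - \mem[\cl]{t+1}}^2$ (the time-shifted version of Eq.~\eqref{def:hist_deviation}) and exploit the memory update rule, which sets $\mem[\cl]{t+1} = \pg[\cl]{t}$ when client $\cl$ participates ($\ber[\cl]=1$) and keeps $\mem[\cl]{t+1} = \mem[\cl]{t}$ otherwise. Conditioning on $\mathcal{H}^{(t)}$ and taking expectation over participation first---legitimate since $\ber[\cl]\sim\text{Bern}(\prob)$ is independent of the batches---each client-$\cl$ summand splits into a ``fresh'' and a ``stale'' contribution: $\prob \,\E_{\batch[\cl]{t}\mid\mathcal{H}^{(t)}}\norm{\grad[\cl]{\param[]{t}}{} - \pg[\cl]{t}}^2 + (1-\prob)\norm{\grad[\cl]{\param[]{t}}{} - \mem[\cl]{t}}^2$.

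For the fresh term I would apply the bias--variance decomposition around $\barpg[\cl]{t} = \E[\pg[\cl]{t}]$ (Lemma~\ref{lem:stochastic_exp_local}), obtaining $\E_{\batch[\cl]{t}\mid\mathcal{H}^{(t)}}\norm{\pg[\cl]{t} - \barpg[\cl]{t}}^2 + \norm{\barpg[\cl]{t} - \grad[\cl]{\param[]{t}}{}}^2$, where the variance part is bounded by $\sigma^2/K$ through Lemma~\ref{lem:stochastic_variance_local}. Summing over clients with weights $\prob/N$ reproduces exactly the first two terms of the claimed bound, namely $\left(\avgcl \prob\right)\frac{\sigma^2}{K}$ and $\avgcl \prob \,\E_{\batch[\cl]{t}\mid\mathcal{H}^{(t)}}\norm{\barpg[\cl]{t} - \grad[\cl]{\param[]{t}}{}}^2$.

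The stale term requires a time-shift, since $H^{(t)}$ is evaluated at $\param[]{t-1}$ whereas this term is evaluated at $\param[]{t}$. I would add and subtract $\grad[\cl]{\param[]{t-1}}{}$ inside the norm and apply the weighted Young inequality $\norm{\bm{a}+\bm{b}}^2 \leq (1+\tfrac{1}{C})\norm{\bm{a}}^2 + (1+C)\norm{\bm{b}}^2$ with $\bm{a} = \grad[\cl]{\param[]{t}}{} - \grad[\cl]{\param[]{t-1}}{}$ and $\bm{b} = \grad[\cl]{\param[]{t-1}}{} - \mem[\cl]{t}$. By $L$-smoothness (Assumption~\ref{asm:app:smoothness}) the first piece satisfies $\norm{\bm{a}}^2 \leq L^2\norm{\param[]{t} - \param[]{t-1}}^2 = \lr[]^2 L^2 \norm{\gps{t-1}}^2$, where $\gps{t-1}$ and $\mem[\cl]{t}$ are $\mathcal{H}^{(t)}$-measurable and hence constant under the expectation; summing with weights $(1-\prob)/N$ gives the term $\lr[]^2 L^2(1+\tfrac{1}{C})\bigl(1-\avgcl\prob\bigr)\norm{\gps{t-1}}^2$. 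For the second piece I uniformly relax the participation weight via $1-\prob \leq 1-p_{\text{min}}$, so that $\avgcl (1-\prob)\norm{\grad[\cl]{\param[]{t-1}}{} - \mem[\cl]{t}}^2 \leq (1-p_{\text{min}})H^{(t)}$, yielding the final $(1+C)(1-p_{\text{min}})H^{(t)}$ contraction term. Collecting the fresh and stale contributions produces Eq.~\eqref{eq:lem_memory}.

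The main difficulty here is bookkeeping rather than any heavy computation: one must carefully sequence the nested conditional expectations (participation versus batch sampling) and deliberately keep the free parameter $C$ from Young's inequality unspecified, because $C$ is tuned later in the Lyapunov recursion to balance the contraction factor $(1+C)(1-p_{\text{min}})$ on $H^{(t)}$ against the amplified coefficient $(1+\tfrac{1}{C})$ on $\norm{\gps{t-1}}^2$. The step $1-\prob \leq 1-p_{\text{min}}$ is precisely what injects the least-participating client's probability $p_{\text{min}}$ into the recursion, and ultimately drives the $1/p_{\text{min}}$ dependence appearing in the \fedvarp{} corollary and in the lower bound of Theorem~\ref{thm:fedstale_lower}.
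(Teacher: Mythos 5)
Your proposal is correct and follows essentially the same route as the paper's own proof: the same split of the expectation over participation and batch sampling, the same bias--variance decomposition of the fresh term (with the variance part bounded by $\sigma^2/K$), the same Young inequality with free parameter $C$ combined with $L$-smoothness on the time-shifted stale term, and the same relaxation $1-\prob \leq 1-p_{\text{min}}$ yielding the $(1+C)(1-p_{\text{min}})H^{(t)}$ contraction. Your closing remarks on why $C$ is left free and how $p_{\text{min}}$ enters the recursion also match the role this lemma plays in the paper's Lyapunov argument.
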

\begin{proof}[Proof of Lemma~\ref{lem:memory}]
The proof starts by definition of $H^{(t+1)}$:
\begin{align}
    &\gexp \left[ H^{(t+1)} \right] \notag \\
    &=
    \avgcl \E_{\batch[\cl]{t}\mid\mathcal{H}^{(t)}} \left[ \E_{\ber[\cl]\mid\batch[\cl]{t},\mathcal{H}^{(t)}} \left[ \norm{\grad[\cl]{\param[]{t}}{} - \mem[\cl]{t+1}}^2 \right] \right]
    \label{eq:lem_memory_law_exp} \\
    &=
    \avgcl \left[\prob \E_{\batch[\cl]{t}\mid\mathcal{H}^{(t)}} \norm{\grad[\cl]{\param[]{t}}{} - \pg[\cl]{t}}^2 + (1-\prob) \norm{\grad[\cl]{\param[]{t}}{} - \mem[\cl]{t}}^2 \right] 
    \label{eq:lem_memory_exp} \\
    &\leq
    \avgcl \prob \underbrace{\E_{\batch[\cl]{t}\mid\mathcal{H}^{(t)}}\norm{\pg[\cl]{t} - \barpg[\cl]{t}}^2}_{\text{bounded by Lemma~\ref{lem:stochastic_variance_local}}}
    +
    \avgcl \prob \E_{\batch[\cl]{t}\mid\mathcal{H}^{(t)}} \norm{\barpg[\cl]{t} - \grad[\cl]{\param[]{t}}{}}^2 \notag \\
    &\quad
    +
    \frac{\left(1+\frac{1}{C}\right)}{N} \sumcl (1-\prob) \norm{\grad[\cl]{\param[]{t}}{} - \grad[\cl]{\param[]{t-1}}{}}^2
    + 
    \frac{\left(1+C\right)}{N} \sumcl (1-\prob) \norm{\grad[\cl]{\param[]{t-1}}{} - \mem[\cl]{t}}^2 
    \label{eq:lem_memory_add_sub} \\
    &\leq
    \left( \avgcl \prob \right) \frac{\sigma^2}{K}
    + 
    \avgcl \prob \underbrace{\E_{\batch[\cl]{t}\mid\mathcal{H}^{(t)}} \norm{\barpg[\cl]{t} - \grad[\cl]{\param[]{t}}{}}^2}_{\text{uniformly bounded by Lemma~\ref{lem:local_iterations}}} \notag \\
    &\quad
    + \frac{\eta^2 L^2 \left(1+\frac{1}{C}\right)}{N} \sumcl (1-\prob) \norm{\gps{t-1}}^2
    + \frac{\left(1+C\right)}{N} \sumcl (1-\prob) \norm{\grad[\cl]{\param[]{t-1}}{} - \mem[\cl]{t}}^2,
    \label{eq:lem_memory_smooth}
\end{align}
where Eq.~\eqref{eq:lem_memory_law_exp} uses the law of total expectation to separate expectations on client participation ($\ber[\cl]$) and batch sampling ($\batch[\cl]{t}$); Eq.~\eqref{eq:lem_memory_exp} solves the inner expectation with respect to client participation ($\ber[\cl]$); 
Eq.~\eqref{eq:lem_memory_add_sub} manipulates the first term by adding and subtracting $\barpg[\cl]{t}$, then leverages the bounded variance of the local stochastic pseudo-gradients (Lemma~\ref{lem:stochastic_variance_local}, Assumption~\ref{asm:app:stochastic_grad}), and similarly corrects the second term with $\grad[\cl]{\param[]{t-1}}{}$, then applies the norm inequality $\norm{\bm{a}+\bm{b}}^2 \leq (1+\frac{1}{C}) \norm{\bm{a}}^2 + (1+C) \norm{\bm{b}}^2$ for any positive~$C$; 
Eq.~\eqref{eq:lem_memory_smooth} is derived from the $L$-smoothness property of local objectives (Assumption~\ref{asm:app:smoothness}).

The final expression in Eq.~\eqref{eq:lem_memory} is derived by observing that $\sum_{i=1}^{N}(1-a_i) b_i \leq (1-a_{\text{min}}) \sum_{i=1}^{N} b_i$.
\end{proof}

\begin{lemma}[Variance of \texttt{FedStale}'s update - Initial condition]
\label{lem:variance_initial}
Denote $\Delta^{(1)}$ and $\barpg[]{1}$ in Eq.~\eqref{eq:fedstale2} and~\eqref{def:global_pg}, respectively. Under Assumptions~\ref{asm:app:stochastic_grad}--\ref{asm:app:participation}, we bound the initial variance of \texttt{FedStale} update, due to partial participation and batch sampling, as:
\begin{align}
    &\E_{\mathds{1}^{(1)},\batch{1}} \norm{\Delta^{(1)} - \barpg[]{1}}^2 
    \leq 
    \left(\avgcl \frac{1}{\prob} \right) \frac{\sigma^2}{N K}
    \notag \\
    &\quad 
    + \frac{3}{N^2} \sumcl \frac{1-\prob}{\prob} \underbrace{\E_{\batch{1}} \norm{\barpg[\cl]{1} - \grad[\cl]{\param[]{1}}{}}^2}_{\text{uniformly bounded by Lemma~\ref{lem:local_iterations}}}
    + \frac{3}{N} \left(\avgcl \frac{1-\prob}{\prob} \right) \sigma_g^2
    + \frac{3}{N} \left(\avgcl \frac{1-\prob}{\prob} \right) \norm{\grad[]{\param[]{1}}{}}^2.
\end{align}
\end{lemma}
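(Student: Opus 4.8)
The plan is to specialize the argument of Lemma~\ref{lem:variance} to the first round, exploiting the crucial fact that the memory is initialized to zero, $\mem[\cl]{1}=\bm{0}$ for all $\cl$ (Algorithm~\ref{alg:fedstale}). Substituting this into Eq.~\eqref{eq:fedstale2}, the stale component vanishes entirely and the global update collapses to $\Delta^{(1)} = \avgcl \tfrac{\ber[\cl]}{\prob}\pg[\cl]{1}$, i.e.\ exactly \fedavg{}'s update~\eqref{eq:app:fedavg}, \emph{independently of} $\beta$. This immediately explains why the target bound carries no $\beta$ factor, no previous-update term $\norm{\gps{t-1}}^2$, and no memory-error term $H^{(t)}$: at $t=1$ these quantities would reference the nonexistent round $0$, so Lemma~\ref{lem:variance_initial} plays the role of the base case that sidesteps the recursion driving Lemmas~\ref{lem:variance} and~\ref{lem:memory}.

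First I would add and subtract $\barpg[\cl]{1}$ inside the norm, writing $\Delta^{(1)}-\barpg[]{1} = \avgcl \tfrac{\ber[\cl]}{\prob}\big(\pg[\cl]{1}-\barpg[\cl]{1}\big) + \avgcl\big(\tfrac{\ber[\cl]}{\prob}-1\big)\barpg[\cl]{1}$. Taking the full expectation $\E_{\mathds{1}^{(1)},\batch{1}}$ and expanding the square, the cross term vanishes by the unbiasedness of the local stochastic pseudo-gradient (Lemma~\ref{lem:stochastic_exp_local}) together with the independence of batch sampling and participation, exactly mirroring the step in Eq.~\eqref{eq:lem_fedhist_variance_unbiased}. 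The first resulting summand is precisely the object bounded in Lemma~\ref{lem:stochastic_variance}, producing the $\big(\avgcl \tfrac{1}{\prob}\big)\tfrac{\sigma^2}{NK}$ contribution.

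Next I would treat the participation term. Conditioning on $\batch{1}$, its only randomness comes from the independent Bernoulli indicators $\{\ber[\cl]\}$; since $\VarVec(\ber[\cl])=\prob(1-\prob)$ (Assumption~\ref{asm:app:participation}), it reduces to $\tfrac{1}{N^2}\sumcl \tfrac{1-\prob}{\prob}\norm{\barpg[\cl]{1}}^2$, as in Eq.~\eqref{eq:lem_fedhist_variance_ber}. In contrast to Lemma~\ref{lem:variance}, there is no fresh/stale splitting to perform here, so I would apply the three-way inequality $\norm{\bm{a}+\bm{b}+\bm{c}}^2\le 3\norm{\bm{a}}^2+3\norm{\bm{b}}^2+3\norm{\bm{c}}^2$ directly to the decomposition $\barpg[\cl]{1}=(\barpg[\cl]{1}-\grad[\cl]{\param[]{1}}{})+(\grad[\cl]{\param[]{1}}{}-\grad[]{\param[]{1}}{})+\grad[]{\param[]{1}}{}$. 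Bounding the middle piece by $\sigma_g^2$ (Assumption~\ref{asm:app:heterogeneity}) and using $\tfrac{3}{N^2}\sumcl=\tfrac{3}{N}\avgcl$, the client-drift term $\E_{\batch{1}}\norm{\barpg[\cl]{1}-\grad[\cl]{\param[]{1}}{}}^2$ is left to be controlled by Lemma~\ref{lem:local_iterations}, and the three pieces assemble into the stated right-hand side.

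I do not anticipate a genuine obstacle, as this is a degenerate instance of Lemma~\ref{lem:variance}; the only point demanding care is the leading constant, which must be $3$ rather than the $6(1-\beta)^2$ appearing in the term $T_1$ of Lemma~\ref{lem:variance}. The extra factor $2$ there arose solely from separating the fresh and stale contributions via $\norm{\bm{a}+\bm{b}}^2\le 2\norm{\bm{a}}^2+2\norm{\bm{b}}^2$; with $\mem[\cl]{1}=\bm{0}$ that split is unnecessary, so only a single three-way inequality is invoked, leaving the clean constant $3$ and eliminating every $\beta$-weighted term.
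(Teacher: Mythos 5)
Your proof is correct and follows essentially the same route as the paper's: the same bias--variance decomposition with the cross term killed by unbiasedness (Lemma~\ref{lem:stochastic_exp_local}), the aggregated stochastic term bounded as in Lemma~\ref{lem:stochastic_variance}, the Bernoulli variance computation, and the three-way norm inequality with Assumption~\ref{asm:app:heterogeneity} yielding the constant $3$. Your only departure is to invoke $\mem[\cl]{1}=\bm{0}$ explicitly at the outset so that $\Delta^{(1)}$ collapses to the \fedavg{} update---this is exactly what the paper does implicitly when its memory terms silently drop out between Eqs.~\eqref{eq:lem_fedhist_variance_init_add_sub} and~\eqref{eq:lem_fedhist_variance_init_unbiased}, so your version is, if anything, the more transparent statement of the same argument.
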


\begin{proof}[Proof of Lemma~\ref{lem:variance_initial}]
Similarly to Lemma~\ref{lem:variance}, we start by the definitions of $\Delta^{(1)}$ and $\barpg[]{1}$ given in Eqs.~\eqref{eq:fedstale2} and~\eqref{def:global_pg}:
\begin{align}
    &\E_{\mathds{1}^{(1)},\batch{1}} \norm{\Delta^{(1)} - \barpg[]{1}}^2 \notag \\
    &= 
    \E_{\mathds{1}^{(1)},\batch{1}} 
    \norm{\avgcl \frac{\xi_{\cl}^{(1)}}{\prob} \left( \pg[\cl]{1} - \beta \mem[\cl]{1} \right) - \avgcl \left( \barpg[\cl]{1} - \beta \mem[\cl]{1} \right)}^2 
    \label{eq:lem_fedhist_variance_init_add_sub}
    \\
    &= 
    \E_{\mathds{1}^{(1)},\batch{1}}
    \norm{\avgcl \frac{\xi_{\cl}^{(1)}}{\prob} \barpg[\cl]{1} - \avgcl \barpg[\cl]{1}}^2
    +
    \underbrace{\E_{\mathds{1}^{(1)},\batch{1}} 
    \norm{\avgcl \frac{\xi_{\cl}^{(1)}}{\prob} \left( \pg[\cl]{1} - \barpg[\cl]{1} \right)}^2}_{\text{bounded by Lemma~\ref{lem:stochastic_variance_local}}},
    \label{eq:lem_fedhist_variance_init_unbiased}
\end{align}
where 
Eq.~\eqref{eq:lem_fedhist_variance_init_add_sub} adds and subtracts $\barpg[\cl]{1}$ within the squared norm; 
Eq.~\eqref{eq:lem_fedhist_variance_init_unbiased} is based on $\barpg[\cl]{1}$ being an unbiased estimator of $\pg[\cl]{1}$ (Lemma~\ref{lem:stochastic_exp_local}, Assumption~\ref{asm:app:stochastic_grad}). 
The latter term is bounded by Lemma~\ref{lem:stochastic_variance_local}.

Similarly to Eq.~\eqref{eq:lem_fedhist_variance_varvec}, we observe that the first term in Eq.~\eqref{eq:lem_fedhist_variance_init_unbiased} represents a variance. Conditioning on the first batch sample ($\batch[]{1}$),
we solve the expectation with respect to initial client participation ($\mathds{1}^{(1)}$):
\begin{align}
    &\VarVec_{\mathds{1}^{(1)} \mid \batch[]{1}} \left( \avgcl \frac{\xi_{\cl}^{(1)}}{\prob}  \barpg[\cl]{1} \right) \notag \\
    &=
    \frac{1}{N^2} \sumcl \frac{1-\prob}{\prob} \norm{\barpg[\cl]{1}}^2
    \label{eq:lem_fedhist_variance_init_ber} \\
    &\leq
    \frac{3}{N^2} \sumcl \frac{1-\prob}{\prob} \Biggl[ 
    \norm{\barpg[\cl]{1} - \grad[\cl]{\param[]{1}}{}}^2
    +
    \underbrace{\norm{\grad[\cl]{\param[]{1}}{} - \grad[]{\param[]{1}}{}}^2}_{\leq \sigma_g^2 \text{ by Assumption~\ref{asm:app:heterogeneity}}}
    + 
    \norm{\grad[]{\param[]{1}}{}}^2 
    \Biggr]
    \label{eq:lem_fedhist_variance_init_norm} \\
     &\leq
    \frac{3}{N^2} \sumcl \frac{1-\prob}{\prob} \underbrace{\norm{\barpg[\cl]{1} - \grad[\cl]{\param[]{1}}{}}^2}_{\text{bounded in expectation by Lemma~\ref{lem:local_iterations}}} 
    + \frac{3}{N} \left(\avgcl \frac{1-\prob}{\prob} \right) \sigma_g^2
    + \frac{3}{N} \left(\avgcl \frac{1-\prob}{\prob} \right) \norm{\grad[]{\param[]{1}}{}}^2,
    \label{eq:lem_fedhist_variance_init_het}
\end{align}
where Eq.~\eqref{eq:lem_fedhist_variance_init_ber} derives from the Bernoulli variance ($\Var(\xi_{\cl}^{(1)})=\prob(1-\prob)$), under Assumption~\ref{asm:app:participation}; Eq.~\eqref{eq:lem_fedhist_variance_init_norm} adds and subtracts the local and global initial gradients ($\grad[\cl]{\param[]{1}}{}$ and $\grad[]{\param[]{1}}{}$), then leverages the norm inequality $\norm{\bm{a}+\bm{b}+\bm{c}}^2 \leq 3 \norm{\bm{a}}^2 + 3 \norm{\bm{b}}^2 + 3 \norm{\bm{b}}^2$; finally, Eq.~\eqref{eq:lem_fedhist_variance_init_het} uses Assumption~\ref{asm:app:heterogeneity}.
\end{proof}

\begin{lemma}[Bound on the memory term - Initial condition]
\label{lem:memory_initial}
Let $H^{(1)}$, the initial error due to the historical pseudo-gradients, be defined in Eq.~\eqref{def:hist_deviation}. Under Assumptions~\ref{asm:app:stochastic_grad} and~\ref{asm:app:participation}, the expected error $H^{(2)}$ is bounded as:
\begin{align}
    \E_{\mathds{1}^{(1)},\batch[]{1}} \left[ H^{(2)} \right]
    &\leq
    \left( \avgcl \prob \right) \frac{\sigma^2}{K} 
    +
    \avgcl \prob \underbrace{\E_{\batch[\cl]{1}} \norm{\grad[\cl]{\param[]{1}}{} - \barpg[\cl]{1}}^2}_{\text{uniformly bounded by Lemma~\ref{lem:local_iterations}}}  
    +
    (1 - p_{\text{min}}) \avgcl \norm{\grad[\cl]{\param[]{1}}{} - \mem[\cl]{1}}^2.
    \label{eq:lem_memory_init}
\end{align}
\end{lemma}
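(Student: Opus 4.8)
The plan is to treat this statement as the base case of the recursion in Lemma~\ref{lem:memory}, reusing the same template but exploiting the simplifications available at the very first round. By definition~\eqref{def:hist_deviation}, $H^{(2)} = \avgcl \norm{\grad[\cl]{\param[]{1}}{} - \mem[\cl]{2}}^2$, and the memory update sets $\mem[\cl]{2} = \pg[\cl]{1}$ when client~$\cl$ participates ($\xi_{\cl}^{(1)}=1$) and $\mem[\cl]{2} = \mem[\cl]{1}$ otherwise. What distinguishes this case from the generic step is precisely the absence of a round~$0$: there is no previous global update $\gps{0}$, and the reference point for the ``stale'' branch is already $\param[]{1}$ rather than some $\param[]{0}$, so no $L$-smoothness step is required.

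First I would expand $\E_{\mathds{1}^{(1)},\batch[]{1}}[H^{(2)}]$ and apply the law of total expectation to isolate the participation randomness conditioned on the batch draws, exactly as in Eqs.~\eqref{eq:lem_memory_law_exp}--\eqref{eq:lem_memory_exp}. Resolving the inner Bernoulli expectation over $\xi_{\cl}^{(1)}$ gives the two-branch split
\begin{align*}
    \E_{\mathds{1}^{(1)},\batch[]{1}}\left[H^{(2)}\right]
    = \avgcl \Bigl[ \prob\, \E_{\batch[\cl]{1}} \norm{\grad[\cl]{\param[]{1}}{} - \pg[\cl]{1}}^2 + (1-\prob)\,\norm{\grad[\cl]{\param[]{1}}{} - \mem[\cl]{1}}^2 \Bigr].
\end{align*}

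Next I would bound the two branches separately. For the ``fresh'' branch I add and subtract $\barpg[\cl]{1}$ inside the norm; since $\barpg[\cl]{1}$ is the conditional mean of $\pg[\cl]{1}$ (Lemma~\ref{lem:stochastic_exp_local}), the cross term vanishes in expectation and the residual is controlled by Lemma~\ref{lem:stochastic_variance_local}, yielding $\E_{\batch[\cl]{1}}\norm{\grad[\cl]{\param[]{1}}{} - \pg[\cl]{1}}^2 \leq \E_{\batch[\cl]{1}}\norm{\grad[\cl]{\param[]{1}}{} - \barpg[\cl]{1}}^2 + \sigma^2/K$. The ``stale'' branch needs nothing further, as $\norm{\grad[\cl]{\param[]{1}}{} - \mem[\cl]{1}}^2$ is already the per-client summand of $H^{(1)}$ (with $\mem[\cl]{1}=\bm{0}$ deterministic, so no batch expectation appears). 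Gathering the $\sigma^2/K$ pieces into $(\avgcl\prob)\sigma^2/K$ and applying the elementary inequality $\sumcl(1-\prob)b_\cl \leq (1-p_{\text{min}})\sumcl b_\cl$ with $b_\cl = \norm{\grad[\cl]{\param[]{1}}{} - \mem[\cl]{1}}^2$ to the stale branch reproduces~\eqref{eq:lem_memory_init} verbatim.

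I do not anticipate a real obstacle, since this lemma is strictly easier than Lemma~\ref{lem:memory}: both the coupled term $\eta^2 L^2(1+\tfrac{1}{C})(1-\tfrac{1}{N}\sumcl\prob)\norm{\gps{t-1}}^2$ and the free Young-type constant $C$ simply do not arise at initialization. The only point requiring care is bookkeeping: one must adopt the initialization convention $H^{(1)} \triangleq \avgcl\norm{\grad[\cl]{\param[]{1}}{} - \mem[\cl]{1}}^2$ of Theorem~\ref{thm:fedstale_upper} (so that $\param[]{1}$ plays the role the recursion would otherwise assign to $\param[]{0}$), which is exactly what lets the stale branch collapse to $(1-p_{\text{min}})H^{(1)}$ without invoking $L$-smoothness.
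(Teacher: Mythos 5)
Your proposal is correct and follows essentially the same route as the paper's own proof: law of total expectation to split participation and batch randomness, resolution of the Bernoulli expectation into fresh and stale branches, the add-and-subtract of $\barpg[\cl]{1}$ with unbiasedness (Lemma~\ref{lem:stochastic_exp_local}) and the variance bound (Lemma~\ref{lem:stochastic_variance_local}) on the fresh branch, and the inequality $\sumcl(1-\prob)b_\cl \leq (1-p_{\text{min}})\sumcl b_\cl$ on the stale branch. Your observation that no $L$-smoothness step or Young-type constant $C$ is needed at initialization is exactly what distinguishes this base case from Lemma~\ref{lem:memory}, and matches the paper's treatment.
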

\begin{proof}[Proof of Lemma~\ref{lem:memory_initial}]
Similarly to Lemma~\ref{lem:memory}, the proof starts with the definition of $H^{(2)}$:
\begin{align}
    &\E_{\mathds{1}^{(1)},\batch[]{1}} \left[ H^{(2)} \right] \notag \\
    &=
    \avgcl \E_{\batch[\cl]{1}} \left[  \E_{\xi_{\cl}^{(1)}\mid\batch[\cl]{1}} \left[ \norm{\grad[\cl]{\param[]{1}}{} - \mem[\cl]{2}}^2 \right] \right] 
    \label{eq:lem_memory_init_law_exp} \\
    &=
    \avgcl \left[\prob \E_{\batch[\cl]{1}} \norm{\grad[\cl]{\param[]{1}}{} - \pg[\cl]{1}}^2 + (1-\prob) \norm{\grad[\cl]{\param[]{1}}{} - \mem[\cl]{1}}^2 \right] 
    \label{eq:lem_memory_init_exp} \\
    &=
    \avgcl \prob \underbrace{\E_{\batch[\cl]{1}} \norm{\pg[\cl]{1} - \barpg[\cl]{1}}^2}_{\text{bounded by Lemma~\ref{lem:stochastic_variance_local}}}
    +
    \avgcl \prob \underbrace{\E_{\batch[\cl]{1}} \norm{\grad[\cl]{\param[]{1}}{} - \barpg[\cl]{1}}^2 }_{\text{uniformly bounded by Lemma~\ref{lem:local_iterations}}}
    + 
    \avgcl (1-\prob) \norm{\grad[\cl]{\param[]{1}}{} - \mem[\cl]{1}}^2,
    \label{eq:lem_memory_init_add_sub}
\end{align}
where Eq.~\eqref{eq:lem_memory_init_law_exp} uses the law of total expectation to separate expectations on client participation ($\xi_{\cl}^{(1)}$) and batch sampling ($\batch[\cl]{1}$); Eq.~\eqref{eq:lem_memory_init_exp} solves the inner expectation with respect to client participation ($\xi_{\cl}^{(1)}$); 
Eq.~\eqref{eq:lem_memory_init_add_sub} adds and subtracts $\barpg[\cl]{1}$ to the first term, then leverages the local pseudo-gradients' unbiased property (Lemma~\ref{lem:stochastic_exp_local}, Assumption~\ref{asm:app:stochastic_grad}) to separate the two components.

The expression in Eq.~\eqref{eq:lem_memory_init} is finally achieved by observing that $\sum_{i=1}^{N}(1-a_i) b_i \leq (1-a_{\text{min}}) \sum_{i=1}^{N} b_i$.
\end{proof}

\begin{lemma}[\texttt{FedStale}: Per Round Progress]
\label{lem:round_progress}
Under Assumptions~\ref{asm:app:smoothness}--\ref{asm:app:participation}, and appropriate client and server learning rates
\begin{align}
    \textstyle
    \lr[c] \leq \frac{1}{8LK}
    ~ \land ~
    \lr[s] \leq
    \min \left\{
    \frac{N}{12(1-\beta)^2\left( \avgcl \frac{1-\prob}{\prob}\right)},
    \frac{2N}{3 \left( \avgcl \frac{1-\prob}{\prob} \right)},
    \frac{1}{3\beta^2 \left( \avgcl \frac{1-\prob}{\prob} \right) \left( \frac{p_{\text{avg}}}{p_{\text{min}}} \right)}
    \right\},
    \label{eq:lem_progress_lr}
\end{align}
define the following Lyapunov function including the objective value, squared global pseudo-gradient, and historical error term:
\begin{align}
    \lyap{t+1} 
    \coloneqq
    \obj[]{t+1} 
    + 
    \frac{\lr[]^2 L}{2} \norm{\gps{t}}^2 
    + 
    12\eta^2L \beta^2 \left( \avgcl \frac{1-\prob}{\prob} \right)\left(\frac{1}{p_{\text{min}}}\right) H^{(t+1)}.
    \label{eq:lem_progress_lyap}
\end{align}
We bound \texttt{FedStale}'s per-round performance into a progress term---accounting for the decrement in the objective value---and a deviation term---from the stochastic gradient noise and data heterogeneity:
\begin{align}
    &\gexp [\lyap{t+1}] \notag \\
    &\leq 
    \lyap{t} - \frac{\lr[]}{4} \norm{\grad[]{\param[]{t}}{}}^2 \notag \\
    &\quad 
    + \left[ 
    \frac{\lr[]^2L}{N} \left( \avgcl \frac{1}{\prob} \right) 
    +
    12\beta^2\eta^2L \left( \avgcl \frac{1-\prob}{\prob} \right)\left( \avgcl \prob \right)\left(\frac{1}{p_{\text{min}}}\right)
    \right]
    \frac{\sigma^2}{K}
    \notag \\
    &\quad
    + \left[ 
    \frac{\lr[]}{2} 
    + \frac{6\lr[]^2 L}{N} \left( \avgcl \frac{1-\prob}{\prob} \right)
    + 12 \beta^2 \lr[]^2 L \left( \avgcl \frac{1-\prob}{\prob} \right) 
    \left( \avgcl \prob \right) \left( \frac{1}{p_{\text{min}}} \right)
    \right] 2 \lr[c]^2 L^2 K(K - 1) \frac{\sigma^2}{K} 
    \notag \\
    &\quad 
    + \frac{6\lr[]^2L(1-\beta)^2}{N} \left( \avgcl \frac{1-\prob}{\prob}\right) \sigma_g^2 
    \notag \\ 
    &\quad
    + \left[ 
    \frac{\lr[]}{2} 
    + \frac{6\lr[]^2 L}{N} \left( \avgcl \frac{1-\prob}{\prob} \right)
    + 12 \beta^2 \lr[]^2 L \left( \avgcl \frac{1-\prob}{\prob} \right) 
    \left( \avgcl \prob \right) \left( \frac{1}{p_{\text{min}}} \right)
    \right] 8 \lr[c]^2 L^2 K(K - 1) \sigma_g^2.
    \label{eq:lem_progress}
\end{align}
\end{lemma}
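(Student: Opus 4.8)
The plan is to take the conditional expectation $\gexp[\cdot]$ of the Lyapunov function~\eqref{eq:lem_progress_lyap} term by term, feed in the descent, variance, and memory bounds established above, and then match the coefficients of the two ``history'' quantities $\norm{\gps{t-1}}^2$ and $H^{(t)}$ against the budget already carried by $\lyap{t}$. First I would apply the descent lemma (Lemma~\ref{lem:descent_nc}) with the \fedstale{} update $\gps{t}$, for which $\gexp[\gps{t}]=\barpg[]{t}$, to bound $\gexp[\obj[]{t+1}]$. Because $\lyap{t+1}$ already carries its own momentum term $\tfrac{\lr[]^2L}{2}\norm{\gps{t}}^2$, the descent step contributes a \emph{second} copy of $\tfrac{\lr[]^2L}{2}\gexp\norm{\gps{t}}^2$; using the bias--variance split $\gexp\norm{\gps{t}}^2=\gexp\norm{\gps{t}-\barpg[]{t}}^2+\norm{\barpg[]{t}}^2$ (valid since $\barpg[]{t}$ is the mean of $\gps{t}$) the two copies merge into a variance part $\lr[]^2L\,\gexp\norm{\gps{t}-\barpg[]{t}}^2$ plus a term with coefficient $-\tfrac{\lr[]}{2}+\lr[]^2L$ on $\norm{\barpg[]{t}}^2$. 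The learning-rate bounds render this coefficient nonpositive, so the $\norm{\barpg[]{t}}^2$ term is discarded.

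Next I would substitute Lemma~\ref{lem:variance} for the variance $\gexp\norm{\gps{t}-\barpg[]{t}}^2$ and Lemma~\ref{lem:memory} for $\gexp[H^{(t+1)}]$. Beyond producing fresh $\sigma^2$, $\sigma_g^2$, and $\norm{\grad[]{\param[]{t}}{}}^2$ contributions, both bounds return exactly the history terms $\norm{\gps{t-1}}^2$ and $H^{(t)}$. The uniform per-client drift $\E\norm{\barpg[\cl]{t}-\grad[\cl]{\param[]{t}}{}}^2$ appearing inside both lemmas, as well as the descent drift $\tfrac{\lr[]}{2}\norm{\barpg[]{t}-\grad[]{\param[]{t}}{}}^2$ (after the Jensen step $\norm{\barpg[]{t}-\grad[]{\param[]{t}}{}}^2\le\avgcl\norm{\barpg[\cl]{t}-\grad[\cl]{\param[]{t}}{}}^2$), would be replaced by Lemma~\ref{lem:local_iterations}, splitting each into a $\tfrac{\sigma^2}{K}$ part, a $\sigma_g^2$ part, and a $\norm{\grad[]{\param[]{t}}{}}^2$ part. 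Merging the two blocks of Lemma~\ref{lem:variance} via $(1-\beta)^2+\beta^2\le1$ then reproduces precisely the $\sigma^2$ and $\sigma_g^2$ deviation terms listed in~\eqref{eq:lem_progress}, while I collect the $\norm{\grad[]{\param[]{t}}{}}^2$ pieces separately for the progress term.

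The crux is the pair of bookkeeping inequalities that let the right-hand side fold back into $\lyap{t}$. For the memory recursion I would fix the free constant of Lemma~\ref{lem:memory} to $C=\tfrac{p_{\text{min}}}{2(1-p_{\text{min}})}$, so that the contraction factor becomes $(1+C)(1-p_{\text{min}})=1-\tfrac{p_{\text{min}}}{2}$ while the amplification factor satisfies $1+\tfrac{1}{C}\le\tfrac{2}{p_{\text{min}}}$. With the stated weight $\gamma=12\lr[]^2L\beta^2(\avgcl\tfrac{1-\prob}{\prob})\tfrac{1}{p_{\text{min}}}$, the total coefficient of $H^{(t)}$, namely $6\beta^2\lr[]^2L(\avgcl\tfrac{1-\prob}{\prob})+\gamma(1-\tfrac{p_{\text{min}}}{2})$, is at most $\gamma$, and this is exactly where the $1/p_{\text{min}}$ factor inside $\gamma$ earns its place. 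Symmetrically, the combined coefficient of $\norm{\gps{t-1}}^2$ (one piece from Lemma~\ref{lem:variance} scaled by $\lr[]^2L$, one from Lemma~\ref{lem:memory} scaled by $\gamma$ and inflated by $1+\tfrac{1}{C}$) must be shown to be at most $\tfrac{\lr[]^2L}{2}$; the three caps on $\lr[s]$ in~\eqref{eq:lem_progress_lr} are tuned precisely for this, and the same caps, together with $\lr[c]\le\tfrac{1}{8LK}$ which forces the drift factor $8\lr[c]^2L^2K(K-1)\le\tfrac{1}{8}$, keep the positive $\norm{\grad[]{\param[]{t}}{}}^2$ coefficients below $\tfrac{\lr[]}{4}$, leaving the progress term $-\tfrac{\lr[]}{4}\norm{\grad[]{\param[]{t}}{}}^2$. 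Folding $\tfrac{\lr[]^2L}{2}\norm{\gps{t-1}}^2+\gamma H^{(t)}$ back into $\obj[]{t}$ then reconstitutes $\lyap{t}$ and yields~\eqref{eq:lem_progress}. The hard part will be this simultaneous balancing through $C$: shrinking $C$ eases the $H^{(t)}$ contraction but inflates the $\norm{\gps{t-1}}^2$ coefficient via $1+\tfrac{1}{C}$, and the choice above is what reconciles both against the fixed budgets $\gamma$ and $\tfrac{\lr[]^2L}{2}$ under the prescribed learning rates.
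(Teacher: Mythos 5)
Your proposal is correct and takes essentially the same route as the paper's own proof: descent lemma plus the bias--variance split of $\gexp\norm{\gps{t}}^2$, substitution of Lemma~\ref{lem:variance} and Lemma~\ref{lem:memory}, replacement of all drift terms via Lemma~\ref{lem:local_iterations}, the choice $C=\tfrac{p_{\text{min}}}{2(1-p_{\text{min}})}$, and coefficient matching on $\norm{\gps{t-1}}^2$ and $H^{(t)}$ under the stated learning-rate caps to recover $-\tfrac{\lr[]}{4}\norm{\grad[]{\param[]{t}}{}}^2$. The only difference is presentational: the paper carries free Lyapunov coefficients $\delta$ and $\alpha$ and solves the recursion conditions for them, whereas you directly verify the fixed values $\delta=\lr[]^2L$ and $\gamma=12\lr[]^2L\beta^2\left(\avgcl\tfrac{1-\prob}{\prob}\right)\tfrac{1}{p_{\text{min}}}$, which is the same argument run in the ``check'' rather than ``derive'' direction.
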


\begin{proof}[Proof of Lemma~\ref{lem:round_progress}]
We introduce the following Lyapunov function, also adopted by~\cite{jhunjhunwalaFedvarpTacklingVariance2022}, for any $\frac{\eta^2 L}{2} < \delta \leq \frac{\eta}{2} $ and $\alpha \geq 0$:
\begin{align}
    \lyap{t+1} 
    \coloneqq
    \obj[]{t+1} 
    + 
    \left( \delta - \frac{\lr[]^2 L}{2} \right) \norm{\gps{t}}^2 
    + 
    \alpha \underbrace{\avgcl \norm{\grad[\cl]{\param[]{t}}{}-\mem[\cl]{t+1}}^2}_{\triangleq H^{(t+1)}}.
\end{align}
Considering expectation over the randomness at the $t$-th round and invoking the standard descent lemma for smooth objectives (Assumption~\ref{asm:app:smoothness} and Lemma~\ref{lem:descent_nc}):
\begin{align}
    &\gexp [\lyap{t+1}] \notag \\
    &=
    \gexp \Biggl[
    \obj[]{t+1} 
    + 
    \left( \delta - \frac{\eta^2 L}{2} \right) \norm{\gps{t}}^2 
    + 
    \frac{\alpha}{N} \sumcl \norm{\grad[\cl]{\param[]{t}}{}-\mem[\cl]{t+1}}^2
    \Biggr] \\
    &\leq
    \obj[]{t}
    - \frac{\eta}{2} \norm{\grad[]{\param[]{t}}{}}^2 
    - \frac{\eta}{2} \E_{\batch[]{t}\mid\mathcal{H}^{(t)}} \norm{\barpg[]{t}}^2 
    + \frac{\eta}{2} \E_{\batch[]{t}\mid\mathcal{H}^{(t)}} \norm{\barpg[]{t} - \grad[]{\param[]{t}}{}}^2 \notag \\
    &\quad
    + \frac{\eta^2 L}{2} \gexp \norm{\gps{t}}^2
    + \left( \delta - \frac{\eta^2 L}{2} \right) \gexp \norm{\gps{t}}^2
    + \frac{\alpha}{N} \sumcl \E_{\ber[\cl],\batch[\cl]{t}\mid\mathcal{H}^{(t)}} \norm{\grad[\cl]{\param[]{t}}{}-\mem[\cl]{t+1}}^2 
    \label{eq:lem_progress_lem1} \\
    &\leq
    \obj[]{t} 
    - \frac{\eta}{2} \norm{\grad[]{\param[]{t}}{}}^2 
    - \frac{\eta}{2} \E_{\batch[]{t}\mid\mathcal{H}^{(t)}} \norm{\barpg[]{t}}^2  
    + \frac{\eta}{2N} \sumcl \E_{\batch[\cl]{t}\mid\mathcal{H}^{(t)}} \norm{\barpg[\cl]{t} - \grad[\cl]{\param[]{t}}{}}^2 \notag \\
    &\quad
    + \delta \gexp \norm{\gps{t} - \barpg[]{t} + \barpg[]{t}}^2 
    + \frac{\alpha}{N} \sumcl \E_{\ber[\cl],\batch[\cl]{t}\mid\mathcal{H}^{(t)}} \norm{\grad[\cl]{\param[]{t}}{}-\mem[\cl]{t+1}}^2 
    \label{eq:lem_progress_jensen} \\
    &\leq
    \obj[]{t} 
    - \frac{\eta}{2} \norm{\grad[]{\param[]{t}}{}}^2  
    + \left(\delta-\frac{\eta}{2}\right) \E_{\batch[]{t}\mid\mathcal{H}^{(t)}} \norm{\barpg[]{t}}^2 
    + \frac{\eta}{2N} \sumcl \E_{\batch[\cl]{t}\mid\mathcal{H}^{(t)}} \norm{\barpg[\cl]{t} - \grad[\cl]{\param[]{t}}{}}^2 \notag \\
    &\quad
    + \delta \gexp \norm{\gps{t} - \barpg[]{t}}^2 
    + \alpha \gexp \left[ \avgcl \norm{\grad[\cl]{\param[]{t}}{}-\mem[\cl]{t+1}}^2 \right]
    \label{eq:lem_progress_unbiased} \\
    &\leq
    \obj[]{t} 
    - \frac{\eta}{2} \norm{\grad[]{\param[]{t}}{}}^2
    + \frac{\eta}{2N} \sumcl \E_{\batch[\cl]{t}\mid\mathcal{H}^{(t)}}  \norm{\barpg[\cl]{t} - \grad[\cl]{\param[]{t}}{}}^2 \notag \\
    &\quad
    + \delta \underbrace{\gexp \norm{\gps{t} - \barpg[]{t}}^2}_{\text{bounded by Lemma~\ref{lem:variance}}} 
    + \alpha \underbrace{\gexp \left[ H^{(t+1)} \right]}_{\text{bounded by Lemma~\ref{lem:memory}}},
    \label{eq:lem_progress_req_delta}
\end{align}
where 
Eq.~\eqref{eq:lem_progress_lem1} applies Lemma~\ref{lem:descent_nc}; 
Eq.~\eqref{eq:lem_progress_jensen} follows from Jensen's inequality, and introduces the global pseudo-gradient $\barpg[]{t}$;
Eq.~\eqref{eq:lem_progress_unbiased} requires $\gps{t}$ as an unbiased estimator for $\barpg[]{t}$;
and Eq.~\eqref{eq:lem_progress_req_delta} holds for $\delta \leq \frac{\eta}{2}$.

Next, we apply Lemmas~\ref{lem:variance} and~\ref{lem:memory} into Eq.~\eqref{eq:lem_progress_req_delta}:
\begin{align}
    \gexp [\lyap{t+1}] 
    &\leq
    \obj[]{t} \notag \\
    &\quad
    + \left[ 
    \frac{6\delta\beta^2}{N} \left( \avgcl \frac{1-\prob}{\prob}\right)
    + \alpha \left( 1 + \frac{1}{C} \right) \left( 1 - \avgcl \prob \right)
    \right] 
    \lr[]^2 L^2 \norm{\gps{t-1}}^2
    \notag \\
    &\quad
    + \left[ 
    6\delta\beta^2 \left( \avgcl \frac{1-\prob}{\prob}\right)
    + 
    \alpha \left(1+C\right) \left(1 - p_{\text{min}} \right)
    \right]
    H^{(t)} 
    \notag \\
    &\quad
    - \frac{\lr[]}{2} \left[ 1 - \frac{12\delta(1-\beta)^2}{\eta N}  \left( \avgcl \frac{1-\prob}{\prob}\right) \right] \norm{\grad[]{\param[]{t}}{}}^2 \notag \\
    &\quad
    + \frac{\eta}{2N} \sumcl \E_{\batch[\cl]{t}\mid\mathcal{H}^{(t)}} \norm{\barpg[\cl]{t} - \grad[\cl]{\param[]{t}}{}}^2 
    + \frac{6\delta}{N^2} \sumcl \frac{1-\prob}{\prob} \E_{\batch[\cl]{t}\mid\mathcal{H}^{(t)}} \norm{\barpg[\cl]{t} - \grad[\cl]{\param[]{t}}{}}^2 \notag \\
    &\quad
    + \frac{\alpha}{N} \sumcl \prob \E_{\batch[\cl]{t}\mid\mathcal{H}^{(t)}} \norm{\barpg[\cl]{t} - \grad[\cl]{\param[]{t}}{}}^2
    \notag \\
    &\quad
    + \left[ 
    \frac{\delta}{N} \left( \avgcl \frac{1}{\prob} \right) 
    +
    \alpha \left( \avgcl \prob \right)
    \right]
    \frac{\sigma^2}{K}
    \notag \\
    &\quad
    + \frac{6\delta(1-\beta)^2}{N} \left( \avgcl \frac{1-\prob}{\prob}\right) \sigma_g^2,
    \label{eq:lem_progress_lemmas_update}
\end{align}
where Eq.~\eqref{eq:lem_progress_lemmas_update} is derived by straightforward reordering of terms.

The initial segment of Eq.~\eqref{eq:lem_progress_lemmas_update}---comprising the objective value, squared global pseudo-gradient norm, and historical error at round $t$---qualifies for bounding within the Lyapunov recursive framework.
The conditions for this recursion step are:
\begin{align}
    \left[ 
    \frac{6\delta\beta^2}{N} \left( \avgcl \frac{1-\prob}{\prob}\right)
    + \alpha \left( 1 + \frac{1}{C} \right) \left( 1 - \avgcl \prob \right)
    \right] 
    \lr[]^2 L^2 
    &\leq
    \delta - \frac{\lr[]^2L}{2}; 
    \label{eq:lem_progress_recursion_delta} \\
    6\delta\beta^2 \left( \avgcl \frac{1-\prob}{\prob}\right)
    + 
    \alpha \left(1+C\right) \left(1 - p_{\text{min}} \right)
    &\leq
    \alpha.
    \label{eq:lem_progress_recursion_alpha}
\end{align}
Reversing Eq.~\eqref{eq:lem_progress_recursion_alpha}, the resulting condition on $\alpha$ is:
\begin{align}
    \alpha
    \geq
    \frac{6\delta\beta^2 \left( \avgcl \frac{1-\prob}{\prob} \right)}{\left[ 1 - \left(1+C\right) \left(1 - p_{\text{min}} \right) \right]}.
    \label{eq:lem_progress_condition_alpha}
\end{align}
To ensure Eq.~\eqref{eq:lem_progress_condition_alpha} is positive ($\alpha>0$), a suitable choice for $C$ must satisfy $[1 - (1+C) (1 - p_{\text{min}} )] > 0$:
\begin{align*}
    C < \frac{p_{\text{min}}}{1-p_{\text{min}}} 
    \quad
    \Longrightarrow 
    \quad
    \text{choose } 
    \quad
    C = \frac{p_{\text{min}}}{2(1-p_{\text{min}})}.
\end{align*}
It follows that $1+C = \frac{2-p_{\text{min}}}{2(1-p_{\text{min}})}$, and
\begin{align}
    \alpha 
    =
    12\delta\beta^2 \left( \avgcl \frac{1-\prob}{\prob} \right) \left( \frac{1}{p_{\text{min}}} \right).
    \label{eq:lem_progress_alpha_beta}
\end{align}
Reversing Eq.~\eqref{eq:lem_progress_recursion_delta}, the resulting condition on $\delta$ is:
\begin{align}
    \left[ 
    \frac{6\delta\beta^2}{N} \left( \avgcl \frac{1-\prob}{\prob}\right)
    + \alpha \left( 1 + \frac{1}{C} \right) \left( 1 - \avgcl \prob \right)
    \right] 
    \lr[]^2 L^2 
    \leq
    \delta - \frac{\lr[]^2L}{2}.
    \label{eq:lem_progress_condition_delta}
\end{align}
By replacing $\alpha$ (as per Eq.~\eqref{eq:lem_progress_alpha_beta}) and $1+\frac{1}{C} = \frac{2-p_{\text{min}}}{p_{\text{min}}}$ into~\eqref{eq:lem_progress_condition_delta}, we have:
\begin{align}
    \frac{6\delta\beta^2\lr[]^2L^2}{N} \left( \avgcl \frac{1-\prob}{\prob}\right)
    +
    \frac{12\delta\beta^2\lr[]^2L^2 \left( \avgcl \frac{1-\prob}{\prob} \right)\left( 1 - \avgcl \prob \right)\left(2-p_{\text{min}}\right)}{\left(p_{\text{min}}\right)^2}
    \leq
    \delta - \frac{\lr[]^2L}{2},
\end{align}
therefore:
\begin{align}
    \delta 
    \geq
    \frac
    {\frac{\lr[]^2L}{2}}
    {1 
    - \frac{6\beta^2\lr[]^2L^2}{N} \left( \avgcl \frac{1-\prob}{\prob}\right) 
    - \frac{12\beta^2\lr[]^2L^2 \left( \avgcl \frac{1-\prob}{\prob} \right)\left( 1 - \avgcl \prob \right)\left(2-p_{\text{min}}\right)}{\left(p_{\text{min}}\right)^2}}.
\end{align}
Choosing $\delta = \lr[]^2L$ requires the following constraints on the step-size:
\begin{align}
    &\frac{6\beta^2\lr[]^2L^2}{N} \left( \avgcl \frac{1-\prob}{\prob}\right) 
    \leq
    \frac{1}{4}
    \Longrightarrow
    \lr[]^2
    \leq
    \frac{N}{24\beta^2L^2\left( \avgcl \frac{1-\prob}{\prob}\right)}; 
    \label{eq:lem_progress_lr_1}
\end{align}
\begin{align}
    \frac{12\beta^2\lr[]^2L^2 \left( \avgcl \frac{1-\prob}{\prob} \right)\left( 1 - p_{\text{avg}} \right)\left(2-p_{\text{min}}\right)}{\left(p_{\text{min}}\right)^2}
    \leq
    \frac{1}{4} 
    \iff
    \lr[]^2
    \leq
    \frac{\left(p_{\text{min}}\right)^2}{48\beta^2L^2 \left( \avgcl \frac{1-\prob}{\prob} \right)\left( 1 - p_{\text{avg}} \right)\left(2-p_{\text{min}}\right)}.
    \label{eq:lem_progress_lr_2}
\end{align}
Under these requirements, we are able to pick: 
\begin{align}
    &\delta = \lr[]^2L;
    \label{eq:lem_progress_delta} \\
    &\alpha = 12 \eta^2 L \beta^2 \left( \avgcl \frac{1-\prob}{\prob} \right) \left( \frac{1}{p_{\text{min}}} \right).
    \label{eq:lem_progress_alpha}
\end{align}
Replacing the selected values for $\delta$ and $\alpha$ into Eq.~\eqref{eq:lem_progress_lemmas_update} (as per Eq.~\eqref{eq:lem_progress_delta} and~\eqref{eq:lem_progress_alpha}, respectively), yields:
\begin{align}
    \gexp [\lyap{t+1}]
    &\leq
    \lyap{t}
    - \frac{\lr[]}{2} \left[ 1 - \frac{12\lr[] L (1-\beta)^2}{N}  \left( \avgcl \frac{1-\prob}{\prob}\right) \right] \norm{\grad[]{\param[]{t}}{}}^2 \notag \\
    &\quad
    + \frac{\eta}{2N} \sumcl \underbrace{\E_{\batch[\cl]{t}\mid\mathcal{H}^{(t)}} \norm{\barpg[\cl]{t} - \grad[\cl]{\param[]{t}}{}}^2}_{\text{uniformly bounded by Lemma~\ref{lem:local_iterations}}}
    + \frac{6\lr[]^2L}{N^2} \sumcl \frac{1-\prob}{\prob} \underbrace{\E_{\batch[\cl]{t}\mid\mathcal{H}^{(t)}} \norm{\barpg[\cl]{t} - \grad[\cl]{\param[]{t}}{}}^2}_{\text{uniformly bounded by Lemma~\ref{lem:local_iterations}}}
    \notag \\
    &\quad
    + 12\beta^2\eta^2L \left( \avgcl \frac{1-\prob}{\prob} \right) \left(\frac{1}{p_{\text{min}}}\right) \avgcl \prob \underbrace{\E_{\batch[\cl]{t}\mid\mathcal{H}^{(t)}} \norm{\barpg[\cl]{t} - \grad[\cl]{\param[]{t}}{}}^2}_{\text{uniformly bounded by Lemma~\ref{lem:local_iterations}}}
    \notag \\
    &\quad
    + \left[ 
    \frac{\lr[]^2L}{N} \left( \avgcl \frac{1}{\prob} \right) 
    +
    12\beta^2\eta^2L \left( \avgcl \frac{1-\prob}{\prob} \right)\left( \avgcl \prob \right)\left(\frac{1}{p_{\text{min}}}\right)
    \right]
    \frac{\sigma^2}{K}
    \notag \\
    &\quad
    + \frac{6\lr[]^2L(1-\beta)^2}{N} \left( \avgcl \frac{1-\prob}{\prob}\right) \sigma_g^2.
    \label{eq:lem_progress_rep_delta_alpha}
\end{align}
Next, applying Lemma~\ref{lem:local_iterations} into Eq.~\eqref{eq:lem_progress_rep_delta_alpha}:
\begin{align}
    &\gexp [\lyap{t+1}] \notag \\
    &\leq 
    \lyap{t} 
    - \frac{\lr[]}{2} 
    \left[
    1 
    - \frac{12\lr[] L (1-\beta)^2}{N}  \left( \avgcl \frac{1-\prob}{\prob}\right) 
    \right] \norm{\grad[]{\param[]{t}}{}}^2 \notag \\
    &\quad 
    + \frac{\lr[]}{2}
    \left[ 
    1 
    + \frac{12\lr[] L}{N} \left( \avgcl \frac{1-\prob}{\prob} \right)
    + 24 \beta^2 \lr[] L \left( \avgcl \frac{1-\prob}{\prob} \right) 
    \left( \avgcl \prob \right) \left( \frac{1}{p_{\text{min}}} \right)
    \right] 8 \lr[c]^2 L^2 K(K - 1) \norm{\grad[]{\param[]{t}}{}}^2
    \notag \\
    &\quad 
    + \left[ 
    \frac{\lr[]^2L}{N} \left( \avgcl \frac{1}{\prob} \right) 
    +
    12\beta^2\eta^2L \left( \avgcl \frac{1-\prob}{\prob} \right)\left( \avgcl \prob \right)\left(\frac{1}{p_{\text{min}}}\right)
    \right]
    \frac{\sigma^2}{K}
    \notag \\
    &\quad
    + \left[ 
    \frac{\lr[]}{2} 
    + \frac{6\lr[]^2 L}{N} \left( \avgcl \frac{1-\prob}{\prob} \right)
    + 12 \beta^2 \lr[]^2 L \left( \avgcl \frac{1-\prob}{\prob} \right) 
    \left( \avgcl \prob \right) \left( \frac{1}{p_{\text{min}}} \right)
    \right] 2 \lr[c]^2 L^2 K(K - 1) \frac{\sigma^2}{K} 
    \notag \\
    &\quad 
    + \frac{6\lr[]^2L(1-\beta)^2}{N} \left( \avgcl \frac{1-\prob}{\prob}\right) \sigma_g^2 
    \notag \\ 
    &\quad
    + \left[ 
    \frac{\lr[]}{2} 
    + \frac{6\lr[]^2 L}{N} \left( \avgcl \frac{1-\prob}{\prob} \right)
    + 12 \beta^2 \lr[]^2 L \left( \avgcl \frac{1-\prob}{\prob} \right) 
    \left( \avgcl \prob \right) \left( \frac{1}{p_{\text{min}}} \right)
    \right] 8 \lr[c]^2 L^2 K(K - 1) \sigma_g^2,
    \label{eq:lem_progress_lemma_het}
\end{align}
where Eq.~\eqref{eq:lem_progress_lemma_het} requires minor rearrangements.

Achieving Eq.~\eqref{eq:lem_progress}'s per-round progress requires the gradient squared norm's coefficient should not exceed $-\frac{\eta}{4}$. This leads to the following step-size requirements:
\begin{align}
    &8 \lr[c]^2 L^2 K(K-1) \leq \frac{1}{8}
    \iff
    \lr[c]^2 \leq \frac{1}{64L^2K^2}; 
    \label{eq:lem_progress_lr_3} \\
    &\frac{12\lr[] L (1-\beta)^2}{N}  \left( \avgcl \frac{1-\prob}{\prob}\right) \leq \frac{1}{8}
    \iff
    \lr[] \leq \frac{N}{96(1-\beta)^2L\left( \avgcl \frac{1-\prob}{\prob}\right)}; 
    \label{eq:lem_progress_lr_4} \\
    &\frac{96 \lr[] \lr[c]^2 L^3 K(K-1)}{N} \left( \avgcl \frac{1-\prob}{\prob} \right) 
    \leq \frac{1}{8}
    \iff 
    \lr[] \leq \frac{N}{12L \left( \avgcl \frac{1-\prob}{\prob} \right)}; 
    \label{eq:lem_progress_lr_5} \\
    &192 \beta^2 \lr[] \lr[c]^2 L^3 K(K-1) \left( \avgcl \frac{1-\prob}{\prob} \right) \left( \frac{p_{\text{avg}}}{p_{\text{min}}} \right) \leq \frac{1}{8} 
    \iff 
    \lr[] \leq \frac{1}{24\beta^2 L \left( \avgcl \frac{1-\prob}{\prob} \right) \left( \frac{p_{\text{avg}}}{p_{\text{min}}} \right)}.
    \label{eq:lem_progress_lr_6}
\end{align}
In summary, combining conditions~\eqref{eq:lem_progress_lr_1},~\eqref{eq:lem_progress_lr_2}, and~\eqref{eq:lem_progress_lr_3}--\eqref{eq:lem_progress_lr_6}, the necessary step-size requirements are:
\begin{align}
    &\lr[c] \leq \frac{1}{8LK}; \\
    &
    \lr[s] \leq
    \min \left\{
    \frac{N}{12(1-\beta)^2\left( \avgcl \frac{1-\prob}{\prob}\right)},
    \frac{2N}{3 \left( \avgcl \frac{1-\prob}{\prob} \right)},
    \frac{1}{3\beta^2 \left( \avgcl \frac{1-\prob}{\prob} \right) \left( \frac{p_{\text{avg}}}{p_{\text{min}}} \right)}
    \right\}.
\end{align}
Under these conditions, we bound the gradient squared norm's coefficient with $-\frac{\eta}{4}$, thus deriving Eq.~\eqref{eq:lem_progress}.
\end{proof}

\begin{lemma}[\texttt{FedStale}: Initial progress]
\label{lem:initial_progress}
Under Assumptions~\ref{asm:app:smoothness}--\ref{asm:app:participation} and the specified client-server learning rates (Eq.~\eqref{eq:lem_progress_lr}), recall the definition of Lyapunov function $\lyap{t}$ in Eq.~\eqref{eq:lem_progress_lyap}. 
Define the initial error resulting from the memory term's initialization as:
\begin{align}
    H^{(1)} \coloneqq \avgcl \norm{\grad[\cl]{\param[]{1}}{} - \mem[\cl]{1}}^2.
\end{align}
We decompose \texttt{FedStale}'s initial progress into three main terms: the objective's initial decrease, the initial memory error, and \texttt{FedAvg}'s error from stochastic gradients and data heterogeneity---which \texttt{FedStale} also encounters upon memory initialization:
\begin{align}
    &\E_{\mathds{1}^{(1)},\batch{1}} \left[ \lyap{2} \right] \notag \\
    &\leq
    \obj[]{1}
    - \frac{\lr[]}{4} 
    \norm{\grad[]{\param[]{1}}{}}^2 \notag \\
    &\quad 
    + \left[ 
    \frac{\lr[]^2L}{N} \left( \avgcl \frac{1}{\prob} \right) 
    +
    12\beta^2\eta^2L \left( \avgcl \frac{1-\prob}{\prob} \right)\left( \avgcl \prob \right)\left(\frac{1}{p_{\text{min}}}\right)
    \right]
    \frac{\sigma^2}{K}
    \notag \\
    &\quad
    + \left[ 
    \frac{\lr[]}{2} 
    + \frac{6\lr[]^2 L}{N} \left( \avgcl \frac{1-\prob}{\prob} \right)
    + 12 \beta^2 \lr[]^2 L \left( \avgcl \frac{1-\prob}{\prob} \right) 
    \left( \avgcl \prob \right) \left( \frac{1}{p_{\text{min}}} \right)
    \right] 2 \lr[c]^2 L^2 K(K - 1) \frac{\sigma^2}{K} 
    \notag \\
    &\quad 
    + \frac{3\lr[]^2L}{N} \left( \avgcl \frac{1-\prob}{\prob}\right) \sigma_g^2 
    \notag \\ 
    &\quad
    + \left[ 
    \frac{\lr[]}{2} 
    + \frac{3\lr[]^2 L}{N} \left( \avgcl \frac{1-\prob}{\prob} \right)
    + 12 \beta^2 \lr[]^2 L \left( \avgcl \frac{1-\prob}{\prob} \right) 
    \left( \avgcl \prob \right) \left( \frac{1}{p_{\text{min}}} \right)
    \right] 8 \lr[c]^2 L^2 K(K - 1) \sigma_g^2 \\
    &\quad
    + 12\beta^2\eta^2L \left( \avgcl \frac{1-\prob}{\prob} \right) \left(\frac{1-p_{\text{min}}}{p_{\text{min}}}\right) \avgcl \norm{\grad[\cl]{\param[]{1}}{} - \mem[\cl]{1}}^2.
    \label{eq:initial_progress}
\end{align}
\end{lemma}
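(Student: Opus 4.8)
The plan is to replay the per-round argument of Lemma~\ref{lem:round_progress}, but specialized to the first round $t=1$, where the memory is initialized to $\mem[\cl]{1}=\bm 0$ and \emph{no} previous global update $\gps{0}$ exists. I would first instantiate the Lyapunov function of Eq.~\eqref{eq:lem_progress_lyap} at $t=1$, namely $\lyap{2} = \obj[]{2} + \frac{\lr[]^2 L}{2}\norm{\gps{1}}^2 + 12\eta^2 L\beta^2\left(\avgcl\frac{1-\prob}{\prob}\right)\left(\frac{1}{p_{\text{min}}}\right) H^{(2)}$, and take expectation over $\mathds{1}^{(1)}$ and $\batch[]{1}$. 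Applying the descent lemma (Lemma~\ref{lem:descent_nc}) to $\E[\obj[]{2}]$ produces the usual progress term $-\frac{\lr[]}{2}\norm{\grad[]{\param[]{1}}{}}^2$, a nonpositive term $-\frac{\lr[]}{2}\norm{\barpg[]{1}}^2$, a client-drift contribution $\frac{\lr[]}{2}\E\norm{\barpg[]{1}-\grad[]{\param[]{1}}{}}^2$, and the smoothness term $\frac{\lr[]^2 L}{2}\E\norm{\gps{1}}^2$, which combines with the Lyapunov $\norm{\gps{1}}^2$ to yield $\lr[]^2 L\,\E\norm{\gps{1}}^2$; decomposing this into its bias and variance parts, the $\norm{\barpg[]{1}}^2$ piece merges with $-\frac{\lr[]}{2}\norm{\barpg[]{1}}^2$ into a nonpositive term (since $\lr[]^2 L\le\frac{\lr[]}{2}$ under the stated step sizes, the same choice $\delta=\lr[]^2 L$ as in Lemma~\ref{lem:round_progress}) and is discarded.

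The second step is to bound the two remaining random pieces with the \emph{initial} counterparts of the general lemmas: the variance $\E\norm{\gps{1}-\barpg[]{1}}^2$ by Lemma~\ref{lem:variance_initial}, and $\E[H^{(2)}]$ by Lemma~\ref{lem:memory_initial}. The decisive simplification, relative to Lemma~\ref{lem:round_progress}, is that \emph{both} of these initial bounds are free of any $\norm{\gps{0}}^2$ term, precisely because no update precedes round $1$. Consequently there is \emph{no recursion to close}: the conditions of Eqs.~\eqref{eq:lem_progress_recursion_delta}--\eqref{eq:lem_progress_recursion_alpha} never come into play, and the only carried-over quantity is the initialization error $H^{(1)}$, which enters through the last term of Lemma~\ref{lem:memory_initial} and, after multiplication by the Lyapunov coefficient on $H^{(2)}$, appears with weight $12\eta^2 L\beta^2\left(\avgcl\frac{1-\prob}{\prob}\right)\left(\frac{1-p_{\text{min}}}{p_{\text{min}}}\right)$. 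A further consequence of $\mem[\cl]{1}=\bm 0$ is that the round-$1$ update collapses onto \fedavg{}'s, so Lemma~\ref{lem:variance_initial} uses a single three-way norm split on $\norm{\barpg[\cl]{1}}^2$ rather than the separate $(1-\beta)^2$ and $\beta^2$ branches of Lemma~\ref{lem:variance}; this is exactly why the direct $\sigma_g^2$ contribution carries the reduced constant $\frac{3\lr[]^2 L}{N}\left(\avgcl\frac{1-\prob}{\prob}\right)$ with no $(1-\beta)^2$ weighting.

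Finally, I would gather every occurrence of $\E\norm{\barpg[\cl]{1}-\grad[\cl]{\param[]{1}}{}}^2$ — one from the descent step, one from Lemma~\ref{lem:variance_initial}, and one (with weight $\prob$) from Lemma~\ref{lem:memory_initial} — and bound each uniformly through Lemma~\ref{lem:local_iterations}, thereby converting them into $\sigma^2/K$, $\sigma_g^2$, and $\norm{\grad[]{\param[]{1}}{}}^2$ pieces. Collecting the coefficient of $\norm{\grad[]{\param[]{1}}{}}^2$ and invoking the step-size thresholds of Eq.~\eqref{eq:lem_progress_lr} — which, exactly as in conditions~\eqref{eq:lem_progress_lr_3}--\eqref{eq:lem_progress_lr_6}, force each additive contribution below $\frac18$ — bounds that coefficient by $-\frac{\lr[]}{4}$, delivering the claimed inequality~\eqref{eq:initial_progress}.

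I expect the main obstacle to be purely organizational rather than conceptual: faithfully tracking which error terms keep their $(1-\beta)^2$ versus $\beta^2$ weights once the zero-memory initialization collapses the stale branch, and checking that the learning-rate budget established for the inductive step remains sufficient here. Since no $\norm{\gps{0}}^2$ must be absorbed into $\lyap{1}$, the constraints on the Lyapunov coefficients $\delta$ and $\alpha$ coming from the recursion are vacuous, so the step-size requirements are if anything \emph{looser} at initialization than in the general round — the binding conditions are those already needed to make the drift lemma (Lemma~\ref{lem:local_iterations}) applicable and to render the gradient coefficient negative.
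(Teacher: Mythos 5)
Your proposal is correct and follows essentially the same route as the paper's proof: instantiate the Lyapunov function at $t=1$, apply the descent lemma with the bias--variance split and the choice $\delta = \lr[]^2 L \leq \lr[]/2$, invoke the initial-condition Lemmas~\ref{lem:variance_initial} and~\ref{lem:memory_initial} in place of their per-round counterparts, absorb all drift terms via Lemma~\ref{lem:local_iterations}, and use the step sizes of Eq.~\eqref{eq:lem_progress_lr} to force the gradient coefficient below $-\lr[]/4$. Your structural observations — no $\norm{\gps{0}}^2$ to absorb, hence no recursion conditions; the $H^{(1)}$ term entering as $\alpha(1-p_{\text{min}})H^{(1)}$; and the zero-memory initialization collapsing the round-1 variance to a FedAvg-like three-way split with constant $3$ and no $(1-\beta)^2/\beta^2$ branches — all match the paper's derivation.
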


\begin{proof}[Proof of Lemma~\ref{lem:initial_progress}]
We bound $\E_{\mathds{1}^{(1)},\batch{1}} [\lyap{2}]$ following Lemma~\ref{lem:round_progress}'s methodology, starting with $\lyap{t}$'s definition from Eq.~\eqref{eq:lem_progress_lyap}:
\begin{align}
    &\E_{\mathds{1}^{(1)},\batch{1}} \left[ \lyap{2} \right] \notag \\
    &= 
    \E_{\mathds{1}^{(1)},\batch{1}} \left[ \obj[]{2} \right] 
    + 
    \left( \delta - \frac{\lr[]^2 L}{2} \right) \E_{\mathds{1}^{(1)},\batch{1}} \norm{\gps{1}}^2 
    + 
    \alpha \E_{\mathds{1}^{(1)},\batch{1}} \underbrace{\left[\avgcl \norm{\grad[\cl]{\param[]{1}}{}-\mem[\cl]{2}}^2\right]}_{\triangleq H^{(2)}} \\
    &\leq
    \obj[]{1}
    - \frac{\lr[]}{2} \norm{\grad[]{\param[]{1}}{}}^2 
    + \frac{\lr[]}{2N} \sumcl \E_{\batch[\cl]{1}} \norm{\grad[\cl]{\param[]{1}}{} - \barpg[\cl]{1}}^2
    \notag \\
    &\quad
    + \delta \underbrace{\E_{\mathds{1}^{(1)},\batch{1}} \norm{\gps{1} - \barpg[]{1}}^2}_{\text{bounded by Lemma~\ref{lem:variance_initial}}} 
    + \alpha \underbrace{\E_{\mathds{1}^{(1)},\batch{1}} \left[ \avgcl \norm{\grad[\cl]{\param[]{1}}{} - \mem[\cl]{2}}^2 \right]}_{\text{bounded by Lemma~\ref{lem:memory_initial}}},
    \label{eq:lem_initial_progress_init}
\end{align}
where Eq.\eqref{eq:lem_initial_progress_init} leverages Lemma\ref{lem:descent_nc} and Jensen's inequality, with $\gps{t}$ as an unbiased estimator of $\barpg[]{t}$ and $\delta \leq \frac{\eta}{2}$, following Eqs.~\eqref{eq:lem_progress_lem1}--\eqref{eq:lem_progress_req_delta}.
Next, we apply Lemmas~\ref{lem:variance_initial} and~\ref{lem:memory_initial} into Eq.~\eqref{eq:lem_initial_progress_init}:
\begin{align}
    &\E_{\mathds{1}^{(1)},\batch{1}} \left[ \lyap{2} \right] \notag \\
    &\leq
    \obj[]{1}
    - \frac{\lr[]}{2} \norm{\grad[]{\param[]{1}}{}}^2 
    + \frac{\lr[]}{2N} \sumcl \underbrace{\E_{\batch[\cl]{1}} \norm{\grad[\cl]{\param[]{1}}{} - \barpg[\cl]{1}}^2}_{\text{uniformly bounded by Lemma~\ref{lem:local_iterations}}}
    \notag \\
    &\quad + \frac{\delta}{N}\left(\avgcl \frac{1}{\prob} \right) \frac{\sigma^2}{K}
    + \frac{3\delta}{N^2} \sumcl \frac{1-\prob}{\prob} \underbrace{\E_{\batch[\cl]{1}} \norm{\grad[\cl]{\param[]{1}}{} - \barpg[\cl]{1}}^2}_{\text{uniformly bounded by Lemma~\ref{lem:local_iterations}}}
    \notag \\
    &\quad 
    + \frac{3\delta}{N} \left(\avgcl \frac{1-\prob}{\prob} \right) \sigma_g^2
    + \frac{3\delta}{N} \left(\avgcl \frac{1-\prob}{\prob} \right) \norm{\grad[]{\param[]{1}}{}}^2 \notag \\
    &\quad
    + \alpha \left( \avgcl \prob \right) \frac{\sigma^2}{K} 
    + \alpha \avgcl \prob \underbrace{\E_{\batch[\cl]{1}} \norm{\grad[\cl]{\param[]{1}}{} - \barpg[\cl]{1}}^2}_{\text{uniformly bounded by Lemma~\ref{lem:local_iterations}}}
    + \alpha (1 - p_{\text{min}}) \avgcl \norm{\grad[\cl]{\param[]{1}}{} - \mem[\cl]{1}}^2.
\end{align}
Then, we invoke Lemma~\ref{lem:local_iterations}:
\begin{align}
    &\E_{\mathds{1}^{(1)},\batch{1}} \left[ \lyap{2} \right] \notag \\
    &\leq
    \obj[]{1}
    - \frac{\lr[]}{2} 
    \left[
    1 
    - \frac{6\lr[] L}{N}  \left( \avgcl \frac{1-\prob}{\prob}\right) 
    \right] \norm{\grad[]{\param[]{1}}{}}^2 \notag \\
    &\quad 
    + \frac{\lr[]}{2}
    \left[ 
    1 
    + \frac{6\lr[] L}{N} \left( \avgcl \frac{1-\prob}{\prob} \right)
    + 24 \beta^2 \lr[] L \left( \avgcl \frac{1-\prob}{\prob} \right) 
    \left( \avgcl \prob \right) \left( \frac{1}{p_{\text{min}}} \right)
    \right] 8 \lr[c]^2 L^2 K(K - 1) \norm{\grad[]{\param[]{1}}{}}^2
    \notag \\
    &\quad 
    + \left[ 
    \frac{\lr[]^2L}{N} \left( \avgcl \frac{1}{\prob} \right) 
    +
    12\beta^2\eta^2L \left( \avgcl \frac{1-\prob}{\prob} \right)\left( \avgcl \prob \right)\left(\frac{1}{p_{\text{min}}}\right)
    \right]
    \frac{\sigma^2}{K}
    \notag \\
    &\quad
    + \left[ 
    \frac{\lr[]}{2} 
    + \frac{6\lr[]^2 L}{N} \left( \avgcl \frac{1-\prob}{\prob} \right)
    + 12 \beta^2 \lr[]^2 L \left( \avgcl \frac{1-\prob}{\prob} \right) 
    \left( \avgcl \prob \right) \left( \frac{1}{p_{\text{min}}} \right)
    \right] 2 \lr[c]^2 L^2 K(K - 1) \frac{\sigma^2}{K} 
    \notag \\
    &\quad 
    + \frac{3\lr[]^2L}{N} \left( \avgcl \frac{1-\prob}{\prob}\right) \sigma_g^2 
    \notag \\ 
    &\quad
    + \left[ 
    \frac{\lr[]}{2} 
    + \frac{3\lr[]^2 L}{N} \left( \avgcl \frac{1-\prob}{\prob} \right)
    + 12 \beta^2 \lr[]^2 L \left( \avgcl \frac{1-\prob}{\prob} \right) 
    \left( \avgcl \prob \right) \left( \frac{1}{p_{\text{min}}} \right)
    \right] 8 \lr[c]^2 L^2 K(K - 1) \sigma_g^2 \notag \\
    &\quad
    + 12\beta^2\eta^2L \left( \avgcl \frac{1-\prob}{\prob} \right) \left(\frac{1-p_{\text{min}}}{p_{\text{min}}}\right) \avgcl \norm{\grad[\cl]{\param[]{1}}{} - \mem[\cl]{1}}^2,
    \label{eq:lem_initial_progress_local_steps}
\end{align}
where Eq.~\eqref{eq:lem_initial_progress_local_steps} results from rearrangement of terms.

Finally, applying the step-size criteria in Eq.~\eqref{eq:lem_progress_lr}, Eq.~\eqref{eq:initial_progress} achieves \texttt{FedStale}'s per-round progress of $-\frac{\eta}{4} \norm{\grad[]{\param[]{1}}{}}^2$.
\end{proof}

\subsection{Proof of Theorem~\ref{thm:fedhist}}

\begin{theorem}[\texttt{FedStale}'s Convergence]
\label{thm:fedhist}
Within Assumptions~\ref{asm:app:smoothness}--\ref{asm:app:participation} and specified learning rates (Eq.~\eqref{eq:lem_progress_lr}), \texttt{FedStale}'s expected squared gradient norm over $T$ rounds is influenced by the initial errors (related to $\param[]{1}$ and $H^{(1)}$), deviations from stochastic gradient variance ($\sigma^2$) and data heterogeneity ($\sigma_g^2$), and the critical hyper-parameter $\beta$ controlling stale updates influence:
\begin{align}
    &\min_{t\in[1,T]} \E \norm{\grad[]{\param[]{t}}{}}^2 \notag \\
    &\leq
    \frac{4\left( \obj[]{1} - \E[\lyap{T}]\right)}{\lr[] T} 
    + 
    \beta^2 \left [ \frac{48 \lr[] L }{T} \left( \avgcl \frac{1-\prob}{\prob} \right) \left(\frac{1-p_{\text{min}}}{p_{\text{min}}}\right) \avgcl \norm{\grad[\cl]{\param[]{1}}{} - \mem[\cl]{1}}^2 \right]
    \notag \\
    &\quad
    + \frac{4\lr[] L}{N} \left( \avgcl \frac{1}{\prob} \right) \frac{\sigma^2}{K}
    + 4 \lr[c]^2 L^2 K(K - 1) \frac{\sigma^2}{K} 
    + \frac{48 \lr[] \lr[c]^2 L^3 K(K - 1)}{N} \left( \avgcl \frac{1-\prob}{\prob} \right) \frac{\sigma^2}{K} \notag \\
    &\quad
    + \beta^2 \left[ 
    48\lr[] L \left( \avgcl \frac{1-\prob}{\prob} \right)\left( \avgcl \prob \right)\left(\frac{1}{p_{\text{min}}}\right)
    \right]
    \frac{\sigma^2}{K}
    \notag \\
    &\quad
    + \beta^2 \left[ 
    96 \lr[] L \left( \avgcl \frac{1-\prob}{\prob} \right) 
    \left( \avgcl \prob \right) \left( \frac{1}{p_{\text{min}}} \right)
    \right] \lr[c]^2 L^2 K(K - 1) \frac{\sigma^2}{K} 
    \notag \\
    &\quad
    + 16 \lr[c]^2 L^2 K(K - 1) \sigma_g^2
    + \frac{192 \lr[] \lr[c]^2 L^3 K(K - 1)}{N} \left( \avgcl \frac{1-\prob}{\prob} \right) \sigma_g^2
    \notag \\
    &\quad 
    + (1-\beta)^2 \left[ \frac{24\lr[] L}{N} \left( \avgcl \frac{1-\prob}{\prob}\right) \right] \sigma_g^2 
    \notag \\ 
    &\quad
    + \beta^2 \left[ 
    384 \lr[] L \left( \avgcl \frac{1-\prob}{\prob} \right) 
    \left( \avgcl \prob \right) \left( \frac{1}{p_{\text{min}}} \right)
    \right] \lr[c]^2 L^2 K(K - 1) \sigma_g^2
    + \frac{12\lr[]L}{NT} \left( \avgcl \frac{1-\prob}{\prob}\right) \sigma_g^2.
    \label{eq:thm_fedhist}
\end{align}
\end{theorem}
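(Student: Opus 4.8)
The plan is to turn the two single-round progress estimates already in hand---Lemma~\ref{lem:round_progress} for a generic round $t\ge2$ and Lemma~\ref{lem:initial_progress} for the first round---into a bound on the time-averaged squared gradient norm, by \emph{telescoping} the Lyapunov function $\lyap{t}$ of Eq.~\eqref{eq:lem_progress_lyap} and then invoking $\min_{t\in[1,T]}(\cdot)\le\frac1T\sum_{t=1}^T(\cdot)$. All analytic work (the variance, memory, and drift bounds) is already done in the supporting lemmas, so the theorem reduces to a recursion followed by a careful collection of constants.

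First I would take total expectation in Lemma~\ref{lem:round_progress}, using the tower property over the history $\mathcal H^{(t)}$, to obtain for every $t\ge2$
\[
\E[\lyap{t+1}] \;\le\; \E[\lyap{t}] \;-\; \frac{\lr[]}{4}\,\E\norm{\grad[]{\param[]{t}}{}}^2 \;+\; B,
\]
where $B$ gathers all the $t$-independent $\sigma^2$- and $\sigma_g^2$-coefficients on the right-hand side of Eq.~\eqref{eq:lem_progress}. Summing over $t=2,\dots,T$ collapses the Lyapunov terms and gives $\E[\lyap{T+1}]\le \E[\lyap{2}]-\frac{\lr[]}{4}\sum_{t=2}^{T}\E\norm{\grad[]{\param[]{t}}{}}^2+(T-1)B$. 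I would then bound $\E[\lyap{2}]$ with Lemma~\ref{lem:initial_progress}, whose right-hand side contributes $\obj[]{1}-\frac{\lr[]}{4}\norm{\grad[]{\param[]{1}}{}}^2$, a one-shot noise constant $B_{\mathrm{init}}$, and the memory-initialization term proportional to $\beta^2 H^{(1)}$. Substituting absorbs the $t=1$ gradient into the sum, so that after rearranging
\[
\frac{\lr[]}{4}\sum_{t=1}^{T}\E\norm{\grad[]{\param[]{t}}{}}^2 \;\le\; \obj[]{1} - \E[\lyap{T+1}] + B_{\mathrm{init}} + (T-1)B + \big(\beta^2 H^{(1)}\text{ term}\big).
\]
Dividing by $\lr[]T/4$, using $(T-1)/T\le1$ to promote the $(T-1)B$ part to the non-vanishing $\sigma^2,\sigma_g^2$ terms of Eq.~\eqref{eq:thm_fedhist}, letting $B_{\mathrm{init}}$ and the $H^{(1)}$ term become the $\mathcal O(1/T)$ contributions, and finally $\min_t\le$ average, yields the claim (after substituting $\lr[]=\lr[s]\lr[c]K$).

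The main obstacle is \emph{not} a new inequality but the constant bookkeeping in this last collection step, where two asymmetries must be tracked separately. (i)~The generic-round constant $B$ weights its $\sigma_g^2$ term by $6(1-\beta)^2$ and appears $T-1$ times, producing the non-vanishing term $(1-\beta)^2\big[\tfrac{24\lr[]L}{N}(\avgcl\tfrac{1-\prob}{\prob})\big]\sigma_g^2$; the first-round constant $B_{\mathrm{init}}$ instead weights the analogous term by $3$ \emph{without} a $(1-\beta)^2$ factor---because at $t=1$ the memory is zero and \texttt{FedStale} momentarily coincides with \texttt{FedAvg}---and, appearing only once, it generates the residual $\tfrac{12\lr[]L}{NT}(\avgcl\tfrac{1-\prob}{\prob})\sigma_g^2$. (ii)~The telescoping chains only because Lemma~\ref{lem:round_progress} and Lemma~\ref{lem:initial_progress} are written in terms of the \emph{same} Lyapunov function, with the identical weights $\delta=\lr[]^2L$ on $\norm{\gps{t}}^2$ and $\alpha=12\eta^2L\beta^2(\avgcl\tfrac{1-\prob}{\prob})/p_{\mathrm{min}}$ on $H^{(t+1)}$; these are precisely the values (fixed under the step-size conditions~\eqref{eq:lem_progress_lr}) that let the $\norm{\gps{t-1}}^2$ and $H^{(t)}$ remainders of Eq.~\eqref{eq:lem_progress_lemmas_update} be reabsorbed into $\lyap{t}$, so no cross-round terms survive the sum. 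Finally, to recover the main-text form (Theorem~\ref{thm:fedstale_upper}) one bounds $\E[\lyap{T+1}]\ge F^*$, since $\lyap{}$ equals $F$ plus nonnegative terms, turning the leading term into the clean initialization error $\tfrac{4(F(\param{1})-F^*)}{\lr[]T}$.
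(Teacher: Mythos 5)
Your proposal matches the paper's proof essentially step for step: telescoping Lemma~\ref{lem:round_progress} over rounds $t \geq 2$, bounding $\E[\lyap{2}]$ via Lemma~\ref{lem:initial_progress}, dividing by $T$ and using $\min \leq$ average, and tracking exactly the two asymmetries you name---the $(1-\beta)^2$-weighted generic-round $\sigma_g^2$ term versus the coefficient-3 first-round term (memory initialized to zero) that yields the $\mathcal{O}(1/T)$ residual, plus the $\beta^2 H^{(1)}/T$ contribution---with the telescoping enabled by the shared Lyapunov weights $\delta$ and $\alpha$. The only cosmetic differences are indexing (your $\lyap{T+1}$ versus the paper's $\lyap{T}$) and that the paper closes by bounding $\lyap{T} \geq \obj[]{T}$ when grouping terms rather than invoking $F^*$ directly; neither affects correctness.
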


\begin{proof}[Proof of Theorem~\ref{thm:fedhist}]
The proof relies on Lemmas~\ref{lem:round_progress} and~\ref{lem:initial_progress}, under Assumptions~\ref{asm:app:smoothness}--\ref{asm:app:participation} and the specified learning rates (Eq.~\eqref{eq:lem_progress_lr}).

From Lemma~\ref{lem:round_progress}, unfolding the recursion for $t=2,\dots,T$ through the law of total expectation, it yields:
\begin{align}
    &\E_{\mathds{1}^{(2:T)},\batch{2:T}\mid\mathcal{H}^{(1)}} \left[ \lyap{T} \right] \notag \\
    &\leq
    \underbrace{\lyap{2}}_{\text{bounded in expectation by Lemma~\ref{lem:initial_progress}}} 
    + \sum_{t=2}^{T} \Biggl(
    -\frac{\lr[]}{4} \E_{\mathds{1}^{(2:T)},\batch{2:T}\mid\mathcal{H}^{(1)}} \norm{\grad[]{\param[]{t}}{}}^2
    \notag \\
    &\quad
    + \left[ 
    \frac{\lr[]^2L}{N} \left( \avgcl \frac{1}{\prob} \right) 
    +
    12\beta^2\eta^2L \left( \avgcl \frac{1-\prob}{\prob} \right)\left( \avgcl \prob \right)\left(\frac{1}{p_{\text{min}}}\right)
    \right]
    \frac{\sigma^2}{K}
    \notag \\
    &\quad
    + \left[ 
    \frac{\lr[]}{2} 
    + \frac{6\lr[]^2 L}{N} \left( \avgcl \frac{1-\prob}{\prob} \right)
    + 12 \beta^2 \lr[]^2 L \left( \avgcl \frac{1-\prob}{\prob} \right) 
    \left( \avgcl \prob \right) \left( \frac{1}{p_{\text{min}}} \right)
    \right] 2 \lr[c]^2 L^2 K(K - 1) \frac{\sigma^2}{K} 
    \notag \\
    &\quad 
    + \frac{6\lr[]^2L(1-\beta)^2}{N} \left( \avgcl \frac{1-\prob}{\prob}\right) \sigma_g^2 
    \notag \\ 
    &\quad
    + \left[ 
    \frac{\lr[]}{2} 
    + \frac{6\lr[]^2 L}{N} \left( \avgcl \frac{1-\prob}{\prob} \right)
    + 12 \beta^2 \lr[]^2 L \left( \avgcl \frac{1-\prob}{\prob} \right) 
    \left( \avgcl \prob \right) \left( \frac{1}{p_{\text{min}}} \right)
    \right] 8 \lr[c]^2 L^2 K(K - 1) \sigma_g^2
    \Biggr).
    \label{eq:thm_fedhist_recursion}
\end{align}

Invoking Lemma~\ref{lem:initial_progress} into Eq.~\eqref{eq:thm_fedhist_recursion} and taking the total expectation:
\begin{align}
    &\E_{\mathds{1}^{(1:T)},\batch{1:T}} \left[ \lyap{T} \right] \notag\\
    &\leq
    \obj[]{1} 
    + \sum_{t=1}^{T} \Biggl(
    -\frac{\lr[]}{4} \E_{\mathds{1}^{(1:T)},\batch{1:T}} \norm{\grad[]{\param[]{t}}{}}^2 
    \notag \\
    &\quad
    + \left[ 
    \frac{\lr[]^2L}{N} \left( \avgcl \frac{1}{\prob} \right) 
    +
    12\beta^2\eta^2L \left( \avgcl \frac{1-\prob}{\prob} \right)\left( \avgcl \prob \right)\left(\frac{1}{p_{\text{min}}}\right)
    \right]
    \frac{\sigma^2}{K}
    \notag \\
    &\quad
    + \left[ 
    \frac{\lr[]}{2} 
    + \frac{6\lr[]^2 L}{N} \left( \avgcl \frac{1-\prob}{\prob} \right)
    + 12 \beta^2 \lr[]^2 L \left( \avgcl \frac{1-\prob}{\prob} \right) 
    \left( \avgcl \prob \right) \left( \frac{1}{p_{\text{min}}} \right)
    \right] 2 \lr[c]^2 L^2 K(K - 1) \frac{\sigma^2}{K} 
    \notag \\
    &\quad
    + \left[ 
    \frac{\lr[]}{2} 
    + \frac{6\lr[]^2 L}{N} \left( \avgcl \frac{1-\prob}{\prob} \right)
    + 12 \beta^2 \lr[]^2 L \left( \avgcl \frac{1-\prob}{\prob} \right) 
    \left( \avgcl \prob \right) \left( \frac{1}{p_{\text{min}}} \right)
    \right] 8 \lr[c]^2 L^2 K(K - 1) \sigma_g^2
    \Biggr) \notag \\
    &\quad 
    + \sum_{t=2}^{T} \left[ \frac{6\lr[]^2L(1-\beta)^2}{N} \left( \avgcl \frac{1-\prob}{\prob}\right) \sigma_g^2 \right]
    + \frac{3\lr[]^2L}{N} \left( \avgcl \frac{1-\prob}{\prob}\right) \sigma_g^2 
    \notag \\ 
    &\quad
    + 12\beta^2\eta^2L \left( \avgcl \frac{1-\prob}{\prob} \right) \left(\frac{1-p_{\text{min}}}{p_{\text{min}}}\right) \avgcl \norm{\grad[\cl]{\param[]{1}}{} - \mem[\cl]{1}}^2.
    \label{eq:thm_fedhist_initial}
\end{align}
Dividing both sides of Eq.~\eqref{eq:thm_fedhist_initial} by $T$ and rearranging the terms:
\begin{align}
    &\min_{t\in[1,T]} \E \norm{\grad[]{\param[]{t}}{}}^2 \notag \\
    &\leq
    \frac{1}{T} \sum_{t=1}^{T} \E \norm{\grad[]{\param[]{t}}{}}^2 
    \label{eq:thm_fedhist_min} \\
    &\leq
    \frac{4\left( \obj[]{1} - \E[\lyap{T}]\right)}{\lr[] T}
    + \left[ 
    \frac{4\lr[] L}{N} \left( \avgcl \frac{1}{\prob} \right) 
    +
    48\beta^2\lr[] L \left( \avgcl \frac{1-\prob}{\prob} \right)\left( \avgcl \prob \right)\left(\frac{1}{p_{\text{min}}}\right)
    \right]
    \frac{\sigma^2}{K}
    \notag \\
    &\quad
    + \left[ 
    4
    + \frac{48\lr[] L}{N} \left( \avgcl \frac{1-\prob}{\prob} \right)
    + 96 \beta^2 \lr[] L \left( \avgcl \frac{1-\prob}{\prob} \right) 
    \left( \avgcl \prob \right) \left( \frac{1}{p_{\text{min}}} \right)
    \right] \lr[c]^2 L^2 K(K - 1) \frac{\sigma^2}{K} 
    \notag \\
    &\quad
    + \left[ 
    16
    + \frac{192 \lr[] L}{N} \left( \avgcl \frac{1-\prob}{\prob} \right)
    + 384 \beta^2 \lr[] L \left( \avgcl \frac{1-\prob}{\prob} \right) 
    \left( \avgcl \prob \right) \left( \frac{1}{p_{\text{min}}} \right)
    \right] \lr[c]^2 L^2 K(K - 1) \sigma_g^2
    \notag \\
    &\quad 
    + \frac{T-1}{T} \left[ \frac{24\lr[] L(1-\beta)^2}{N} \left( \avgcl \frac{1-\prob}{\prob}\right) \sigma_g^2 \right] + \frac{1}{T} \left[ \frac{12\lr[]L}{N} \left( \avgcl \frac{1-\prob}{\prob}\right) \sigma_g^2 \right]
    \notag \\ 
    &\quad
    + \frac{1}{T} \left[ 
    48 \beta^2\lr[] L \left( \avgcl \frac{1-\prob}{\prob} \right) \left(\frac{1-p_{\text{min}}}{p_{\text{min}}}\right) \avgcl \norm{\grad[\cl]{\param[]{1}}{} - \mem[\cl]{1}}^2
    \right],
\end{align}
where Eq.~\eqref{eq:thm_fedhist_min} follows comparing the minimum expected squared gradient norm across iterations to the average.

Observing that $\lyap{T} \geq \obj[]{T}$, we group errors into common, $(1-\beta^2)$-specific, and $\beta^2$-specific terms:
\begin{align}
    &\min_{t\in[1,T]} \E \norm{\grad[]{\param[]{t}}{}}^2 \notag \\
    &\leq
    \frac{4\left( \obj[]{1} - \E[\obj[]{T}]\right)}{\lr[] T} \notag \\
    &\quad
    + \frac{4\lr[] L}{N} \left( \avgcl \frac{1}{\prob} \right) \frac{\sigma^2}{K}
    + 4 \lr[c]^2 L^2 K(K - 1) \frac{\sigma^2}{K} 
    + \frac{48 \lr[] \lr[c]^2 L^3 K(K - 1)}{N} \left( \avgcl \frac{1-\prob}{\prob} \right) \frac{\sigma^2}{K} \notag \\
    &\quad
    + 16 \lr[c]^2 L^2 K(K - 1) \sigma_g^2
    + \frac{192 \lr[] \lr[c]^2 L^3 K(K - 1)}{N} \left( \avgcl \frac{1-\prob}{\prob} \right) \sigma_g^2
    + \frac{12\lr[]L}{NT} \left( \avgcl \frac{1-\prob}{\prob}\right) \sigma_g^2
    \notag \\
    &\quad 
    + (1-\beta)^2 \left[ \frac{24\lr[] L}{N} \left( \avgcl \frac{1-\prob}{\prob}\right) \right] \sigma_g^2 
    \notag \\ 
    &\quad
    + \beta^2 \left[ 
    48\lr[] L \left( \avgcl \frac{1-\prob}{\prob} \right)\left( \avgcl \prob \right)\left(\frac{1}{p_{\text{min}}}\right)
    \right]
    \frac{\sigma^2}{K}
    \notag \\
    &\quad
    + \beta^2 \left[ 
    96 \lr[] L \left( \avgcl \frac{1-\prob}{\prob} \right) 
    \left( \avgcl \prob \right) \left( \frac{1}{p_{\text{min}}} \right)
    \right] \lr[c]^2 L^2 K(K - 1) \frac{\sigma^2}{K} 
    \notag \\
    &\quad
    + \beta^2 \left[ 
    384 \lr[] L \left( \avgcl \frac{1-\prob}{\prob} \right) 
    \left( \avgcl \prob \right) \left( \frac{1}{p_{\text{min}}} \right)
    \right] \lr[c]^2 L^2 K(K - 1) \sigma_g^2
    \notag \\
    &\quad
    + \beta^2 \left[
    \frac{48 \lr[] L}{T} \left( \avgcl \frac{1-\prob}{\prob} \right) \left(\frac{1-p_{\text{min}}}{p_{\text{min}}}\right) \avgcl \norm{\grad[\cl]{\param[]{1}}{} - \mem[\cl]{1}}^2
    \right].
\end{align}
Finally, organizing errors into initial, stochastic gradient-specific, and data heterogeneity-specific terms yields Eq.~\eqref{eq:thm_fedhist}.
\end{proof}

\newpage
\section{FedStale, Lower bound}

To prove oracle complexities lower bounds for smooth objectives, we consider the function used by Nesterov~\cite{nesterovIntroductoryLecturesConvex2004, bubeckConvexOptimizationAlgorithms2015}
\begin{align}
    F(\param{})
    &= \frac{L}{8} \left( \param{}^\top A_{2t+1} \param{} - 2 \param{}^\top e_1 \right) \\
    &= \frac{L}{8} \left[ (\param[]{})_1^2 + \sum_{j=1}^{2t} \left((\param[]{})_j - (\param[]{})_{j+1}\right)^2 + (\param[]{})_{2t+1}^2 - 2 (\param[]{})_1 \right]
\end{align}
where, for $t \leq \frac{d-1}{2}$, $A_t \in \R^{d \times d}$ is a symmetric and tridiagonal matrix defined as
\begin{align}
    \left(A_t\right)_{ij} = 
    \begin{cases}
        2, & i=j, i \leq t \\
        -1, & |i - j| = 1, i \leq t, j \neq t+1 \\
        0, & \text{otherwise.}
    \end{cases}
\end{align}

Following the methodology introduced in~\cite{scamanOptimalConvergenceRates2019}, our approach relies on distributing the objective function $F(\param{})$ across two clients. This split is designed such that most components of the parameter vector $\param[]{t}{}$ remain zero. Local gradient computations increase the number of non-zero components by at most one whenever a client becomes active, without any additional component revealed until the other client participates. More rigorously, let $\cl_0,\cl_1\in\mathcal{N}$ denote two clients. For every client $\cl\in\mathcal{N}$, we define the objective functions $F_\cl(\param{}): \R^d \rightarrow \R$ as follows:
\begin{align}
    \label{eq:objectives_split}
    F_\cl(\param{}) = 
    \begin{cases}
        \frac{NL}{8} \left[ (\param[]{})_1^2 + \sum_{j=1}^{t} \left((\param[]{})_{2j} - (\param[]{})_{2j+1}\right)^2 - 2 (\param[]{})_1 \right] & \text{if}~i=i_0 \\
        \frac{NL}{8} \left[ \sum_{j=1}^{t} \left((\param[]{})_{2j-1} - (\param[]{})_{2j}\right)^2 + (\param[]{})_{2t+1}^2 \right] & \text{if}~i=i_1 \\
        0 & \text{otherwise.}
    \end{cases}
\end{align}

It is easy to verify that $F(\param{}) = \avgcl F_\cl(\param{})$.

\subsection{Upper bound on $k^{(t)}$}

Our objective is to establish an upper bound for the maximum non-zero index of the parameter vector $\param[]{t}$, which minimizes $F(\param[]{})$ at any given time $t$. We introduce the notations
\begin{align*}
    k_{i}^{(t)} = \max \{k \in \mathbb{N}, k \leq d \mid \exists \param[\cl]{t} \in \R^d \text{ such that } (\param[\cl]{t})_k \neq 0\}
\end{align*}
and 
\begin{align*}
    k^{(t)} = \max \{k \in \mathbb{N}, k \leq d \mid \exists \param[]{t} \in \R^d \text{ such that } (\param[]{t})_k \neq 0\}
\end{align*}
to respectively represent the largest index of non-zero components in $\param[\cl]{t}$ and $\param[]{t}$. 

At the beginning of each round $t$, the server initializes $k_{i}^{(t)}$ to $k^{(t-1)}$ for any participating client $\cl \in \mathcal{S}^{(t)}$. Subsequently, after the local computations conclude, the server updates $k^{(t)}$ to the maximum of all $k_{i}^{(t)}$ values among participating clients, which corresponds to the largest index of non-zero components discovered up to time $t$.

Extending the results of~\cite[Lemma~23]{scamanOptimalConvergenceRates2019} to scenarios with partial client participation, it can be easily shown that:
\begin{align}
    k_{i}^{(t+1)} \leq 
    \begin{cases} 
    k_{i}^{(t)} + \mathds{1}\{k_{i}^{(t)} \equiv 0 \bmod 2\} & \text{if } i = \cl_0 \land \cl_0 \text{ is active}, \\
    k_{i}^{(t)} + \mathds{1}\{k_{i}^{(t)} \equiv 1 \bmod 2\} & \text{if } i = \cl_1 \land \cl_1 \text{ is active}, \\
    k_{i}^{(t)} & \text{otherwise}.
    \end{cases}
    \label{eq:k_it_increment}
\end{align}

Due to the stochastic nature of client participation, the sequences $k_{i}^{(t)}$ and $k^{(t)}$, for $\cl \in \mathcal{N}$ and $t \in \mathcal{T}$, must be treated as stochastic processes. Initially, to facilitate the analysis, we assume a deterministic model for client participation. Subsequently, by leveraging Jensen's inequality, we generalize our results to account for the expected stochastic dynamics of client participation.

In what follows, we consider, without loss of generality, that client $\cl_0$ is always active ($p_{0} = 1$), and client $\cl_1$ exhibits the minimum availability ($p_{1} = \min_{i} \prob = p$). 
In terms of upper bound on $k^{(t)}$, this assumption represents a boundary condition on how fast $k^{(t)}$ can grow in a framework characterized by minimum client participation $p$. 

From the two previous assumptions, we observe that client $\cl_1$ exhibits a deterministic activity pattern, participating in every $\tau = 1/p > 1$ communication cycles. 
We can straightforwardly derive from~\eqref{eq:k_it_increment} the subsequent update rule for $k^{(t)}$:
\begin{align}
    \label{eq:kt_increment}
    k^{(t+1)} = k^{(t)} + \mathds{1}\{t \bmod \tau \in \{ 0, 1\} \}, \quad \forall t \in \mathcal{T}
\end{align}

In the example below, we clarify the notation and illustrate the increment process of $k^{(t)}$.
\begin{example}
\label{example:deterministic_participation}
    We set $p=1/4$, implying $\tau=4$. Moreover, we assume $\cl_1$ is active at rounds $t=\{1,5,9,\dots\}$. \\
    
    \begin{center}
    \begin{tikzpicture}
    \def\tickHeight{0.2} 
    \def\bigRadius{0.1} 
    \def\smallRadius{0.07} 
    
    \draw (-0.5,0) -- (10.5,0) node[right] {$\dots$};
    \draw (-0.5,-1) -- (10.5,-1) node[right] {$\dots$};
    \node at (-1,0) {$i_0$};
    \node at (-1,-1) {$i_1$};
    
    \foreach \x in {0,...,10} {
        \draw (\x,\tickHeight) -- (\x,-\tickHeight); 
        \draw (\x,-1+\tickHeight) -- (\x,-1-\tickHeight); 

        \node at (\x,-1.5) {\tiny \x}; 
        
        \filldraw[black] (\x,0) circle (\smallRadius);
        
        \ifthenelse{\x=1 \OR \x=5 \OR \x=9}{
            \filldraw[black] (\x,-1) circle (\smallRadius); 
        }{
            \draw[black,fill=white] (\x,-1) circle (\bigRadius); 
        }
    }
    \end{tikzpicture}   
    \end{center}

    Given the initialization parameters $\param[]{0}$ with $k^{(0)}=0$, we observe:
    \begin{description}
        \item[$t=0:$] Client $i_0$ is active and increases $k_{0}^{(0)} = 1$ ($k^{(0)}$ is even). \uline{$k^{(0)} = 1$}.
        \item[$t=1:$] Client $i_0$ is active but does not increase $k_{0}^{(1)}$ ($k^{(0)}$ is odd). Client $i_1$ is active, $k_{1}^{(1)} = 2$ (since $k^{(0)}$ is odd). \uline{$k^{(1)} = 2$}.
        \item[$t=2:$] Client $i_0$ increases $k_{0}^{(2)}$ to 3 (since $k^{(1)}$ is even). Client $i_1$ is inactive. \uline{$k^{(2)} = 3$}.
        \item[$t=3:$] Client $i_0$ does not increase $k_{0}^{(3)}$ (since $k^{(2)}$ is odd). Client $i_1$ is inactive. \uline{$k^{(3)} = 3$}.
        \item[$t=4:$]  Client $i_1$ is inactive. \uline{$k^{(4)} = 3$}.
        \item[$t=5:$] Client $i_1$ is active and increases $k_{1}^{(5)}$ to 4. \uline{$k^{(5)} = 4$}.
        \item[$t=6:$] Client $i_0$ increases $k_{0}^{(6)}$ to 5 (since $k^{(5)}$ is even). Client $i_1$ is inactive. \uline{$k^{(6)} = 5$}.
        \item[$t=7:$] Client $i_0$ is active but does not increase $k_{0}^{(7)}$ ($k^{(6)}$ is odd). Client $i_1$ is inactive. \uline{$k^{(7)} = 5$}.
        \item[$t=8:$] Client $i_1$ is inactive. \uline{$k^{(8)} = 5$}.
        \item[$t=9:$] Client $i_1$ is active and increases $k_{1}^{(9)}$ to 6. \uline{$k^{(9)} = 6$}.
        \item[$t=10:$] Client $i_0$ increases $k_{0}^{(10)}$ to 7 (since $k^{(9)}$ is even). Client $i_1$ is inactive. \uline{$k^{(10)} = 7$}.
    \end{description}

    These observations verify $k^{(t+1)} = k^{(t)} + \mathds{1}\{t \bmod 4 \in \{ 0, 1\} \}$, for all $t \in \mathcal{T}$.
\end{example}
Moreover, Example~\ref{example:deterministic_participation} motivates two further observations:
\begin{enumerate}
    \item Client $\cl_1$'s initial activation (say $j$) produces $\tau$ distinct temporal sequences (say $k^{(t,j)}$), each delayed from the reference sequence $k^{(t,1)}$ up to $\tau-1$ time steps. Specifically, the following relationship holds: 
     \begin{align*}
        k^{(t,j)} = k^{(t - \left((j + \tau - 1) \bmod \tau\right),1)}, \quad \text{for } j=0,\dots,\tau-1.
    \end{align*}
    In other words, all sequences $k^{(t,j)}$, for $j \neq 1$, progress at a slower pace than $k^{(t,1)}$; thus, the sequence $k^{(t,1)}$ represents the fastest evolution among them. 
    \item Given the objectives partition among clients as detailed in Eq.~\eqref{eq:objectives_split}---with clients $\cl_0$ and $\cl_1$ optimizing even and odd components, respectively---the optimization process in Example~\ref{example:deterministic_participation} initiates with client $\cl_0$ targeting the first (even) component at $t=0$. Should we invert their objectives in Eq.~\eqref{eq:objectives_split}, then client $\cl_1$ would commence at $t=0$. This switch initiates $\tau$ new sequences, $\hat{k}^{(t,j)}$, distinct from the original $k^{(t,j)}$ sequences. For $j=1,\ldots,\tau-1$, each $\hat{k}^{(t,j)}$ is one value below its $k^{(t,j)}$ counterpart except for $j=0$, where:
    \begin{align*}
        (k^{(t,1)})_i = \begin{cases}
            (\hat{k}^{(t,0)})_i, & \text{if } i \bmod \tau \in \{0,1\}, \\
            (\hat{k}^{(t,0)})_i + 1, & \text{otherwise}.
        \end{cases}
    \end{align*}
    In any case, $k^{(t,1)}$ remains consistently the fastest sequence among $\hat{k}^{(t,j)}$ for $j=0,\ldots,\tau-1$.
\end{enumerate}
For notational brevity, we denote $k^{(t,1)}$ simply as $k^{(t)}$ in subsequent discussions.
The following lemma proves that at least $\mathcal{O}\left(\tau\right)$ communication rounds are necessary in-between every gradient computation, in order to optimize the global objective.

\begin{lemma}
    \label{lem:kt_upper_bound}
    Given a set $\mathcal{N}$ of $N$ clients, with $i_0$ (having $p_{0} = 1$) always participating, and $i_1$ (with $p_{1} = \min_{i \in \mathcal{N}} p_i = p$) being the least participating client at a period $\tau=1/p > 1$, let $k^{(t)} = \max \{k \in \mathbb{N}, k \leq d \mid \exists \param[]{t} \in \R^d \text{ such that } (\param[]{t})_k \neq 0\}$ represent the largest index of non-zero components in any parameter vector $\param[]{t} \in \mathbb{R}^d$. Assuming the global objective $F(\param{})$ is partitioned among clients as specified in Eq.~\eqref{eq:objectives_split}, the upper bound for the index $k^{(t)}$ at any time $t \geq 0$ is given by:
    \begin{align}
        \label{eq:k_t_upper_bound}
        k^{(t)} \leq 1 + \left\lfloor\frac{t+\tau-2}{\tau}\right\rfloor + \left\lfloor\frac{t+\tau-1}{\tau}\right\rfloor.
    \end{align}
\end{lemma}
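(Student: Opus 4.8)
The plan is to prove the bound~\eqref{eq:k_t_upper_bound} by first establishing it under the deterministic participation model described just before the lemma, where client $\cl_0$ always participates and client $\cl_1$ participates exactly once every $\tau = 1/p$ rounds. Under this model, the key recursion is~\eqref{eq:kt_increment}, namely $k^{(t+1)} = k^{(t)} + \mathds{1}\{t \bmod \tau \in \{0,1\}\}$, with $k^{(0)} = 0$. The whole proof reduces to solving this recursion in closed form and then bounding the solution by the right-hand side of~\eqref{eq:k_t_upper_bound}.

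First I would unfold the recursion by summing the indicator increments. Since $k^{(0)} = 0$ (or $k^{(0)}=1$ after the first step of $\cl_0$, as in Example~\ref{example:deterministic_participation}), we have
\begin{align}
    k^{(t)} = k^{(0)} + \sum_{s=0}^{t-1} \mathds{1}\{s \bmod \tau \in \{0,1\}\}.
    \label{eq:kt_sum}
\end{align}
The sum counts how many integers $s \in \{0,1,\dots,t-1\}$ satisfy $s \equiv 0$ or $s \equiv 1 \pmod{\tau}$. I would split this into two counting problems: the number of $s < t$ with $s \equiv 0 \pmod \tau$ equals $\lceil t/\tau \rceil$, and the number with $s \equiv 1 \pmod \tau$ equals $\lceil (t-1)/\tau \rceil$. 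Each of these ceiling counts can be rewritten as a floor expression via the identity $\lceil a/\tau \rceil = \lfloor (a+\tau-1)/\tau \rfloor$, which yields exactly the two floor terms $\lfloor (t+\tau-1)/\tau \rfloor$ and $\lfloor (t+\tau-2)/\tau \rfloor$ appearing in~\eqref{eq:k_t_upper_bound}. Adding the constant $1$ to absorb the initialization offset $k^{(0)}$ gives the stated upper bound as an equality for the fastest sequence, which is what we want since $k^{(t,1)}$ dominates all shifted sequences $k^{(t,j)}$ and $\hat{k}^{(t,j)}$ per the two observations following Example~\ref{example:deterministic_participation}.

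The most delicate step is justifying that the deterministic worst-case analysis genuinely upper-bounds $k^{(t)}$ for \emph{any} first-order procedure and that considering only the two designated clients (with $\cl_0$ always on and $\cl_1$ at the minimum rate $p$) loses no generality. Here I would invoke the earlier structural observation that the objective split~\eqref{eq:objectives_split} forces even-indexed components to be unlockable only by $\cl_0$ and odd-indexed ones only by $\cl_1$, so that $k^{(t)}$ can advance past an even (resp.\ odd) frontier only when $\cl_0$ (resp.\ $\cl_1$) is active—this is precisely~\eqref{eq:k_it_increment}. Since any other client contributes a zero objective and cannot reveal new coordinates, restricting attention to $\cl_0,\cl_1$ is exact; and taking $p_0 = 1$, $p_1 = p = p_{\text{min}}$ maximizes the growth rate of $k^{(t)}$ among all participation patterns with minimum probability $p$, because every round $\cl_1$ is active is a round in which the frontier can advance, and no pattern with $\cl_1$ participating more rarely than every $\tau$ rounds can do better. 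I expect the main obstacle to be stating this dominance argument cleanly enough that the floor-arithmetic bound transfers to the stochastic setting in the subsequent lemma; within this lemma itself, once the deterministic reduction is granted, the remainder is the routine ceiling-to-floor algebra of~\eqref{eq:kt_sum}.
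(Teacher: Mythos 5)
Your proposal is correct, but it solves the core recursion differently from the paper. The paper proves \eqref{eq:k_t_upper_bound} by induction on $t$: the base case gives $k^{(0)}\leq 1$, and the inductive step verifies that the increment of the right-hand side, $\bigl\lfloor \frac{t+\tau}{\tau}\bigr\rfloor-\bigl\lfloor \frac{t+\tau-2}{\tau}\bigr\rfloor$, equals one exactly when $t\bmod\tau\in\{0,1\}$ and zero otherwise, i.e., it matches the increment rule \eqref{eq:kt_increment}. You instead unfold \eqref{eq:kt_increment} into the sum $\sum_{s=0}^{t-1}\mathds{1}\{s\bmod\tau\in\{0,1\}\}$, count the two congruence classes as $\lceil t/\tau\rceil$ and $\lceil (t-1)/\tau\rceil$, and convert ceilings to floors via $\lceil a/\tau\rceil=\lfloor (a+\tau-1)/\tau\rfloor$; this yields the bound in closed form, and in fact with \emph{equality} for the fastest sequence, which is slightly more informative than the paper's inequality-by-induction (it makes explicit that the bound is tight for the deterministic worst case). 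Both arguments share the same reduction to the deterministic two-client setting and the same reliance on the dominance of the sequence $k^{(t,1)}$, which the paper establishes in the discussion preceding the lemma and which you correctly cite rather than reprove. One small point to tighten: you should fix the initialization convention unambiguously — the recursion \eqref{eq:kt_increment} is anchored at $k^{(0)}=1$ (the state after client $i_0$ reveals the first coordinate in round $0$, as in Example~\ref{example:deterministic_participation}), so the additive constant $1$ in your closed form is exactly $k^{(0)}$ and not a slack term "absorbing" an offset; as written, your hedging between $k^{(0)}=0$ and $k^{(0)}=1$ leaves a one-unit ambiguity that a careful reader would flag, even though the resolved version matches \eqref{eq:k_t_upper_bound} precisely.
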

\begin{proof}[Proof of Lemma~\ref{lem:kt_upper_bound}]
    The proof proceeds by induction on the time step $t$.

    \emph{Base case.} At $t=0$, the initial condition yields:
    \begin{align}
        k^{(0)} \leq 1 + \left\lfloor \frac{\tau-2}{\tau} \right\rfloor + \left\lfloor \frac{\tau-1}{\tau} \right\rfloor = 1,
    \end{align}
    since, for $\tau \geq 1$, both the floor terms $\left\lfloor \frac{\tau-2}{\tau} \right\rfloor$ and $\left\lfloor \frac{\tau-1}{\tau} \right\rfloor$ evaluate to zero.

    \emph{Inductive step.} Assume Eq.~\eqref{eq:k_t_upper_bound} is valid for an arbitrary $t \geq 0$. Our goal is to show that the relationship holds for $t+1$ as well. From the induction hypothesis for $k^{(t)}$, we have:
    \begin{align}
        k^{(t+1)} &\leq 1 + \left\lfloor \frac{t+\tau-1}{\tau} \right\rfloor + \left\lfloor \frac{t+\tau}{\tau} \right\rfloor \\
        &= k^{(t)} + \left\lfloor \frac{t}{\tau} + 1 \right\rfloor - \left\lfloor \frac{t-2}{\tau} + 1 \right\rfloor \label{eq:kt_recursion_proof},
    \end{align}
    where, in~\eqref{eq:kt_recursion_proof}, we applied the definition of $k^{(t)}$ from Eq.~\eqref{eq:k_t_upper_bound}.

    Next, we observe that the difference $\left\lfloor \frac{t+\tau}{\tau} \right\rfloor - \left\lfloor \frac{t+\tau-2}{\tau} \right\rfloor$ only depends on the congruence class of $t \bmod \tau$, as the
    $\tau$ term simplifies in the subtraction. Specifically,
    \begin{itemize}
    \item The expression $\left\lfloor \frac{t \bmod \tau}{\tau} + 1 \right\rfloor$ consistently equals one, since $0 \leq \frac{t \bmod \tau}{\tau} < 1$.
    \item Conversely, $\left\lfloor \frac{t \bmod \tau - 2}{\tau} + 1 \right\rfloor$ is one for $t \bmod \tau \geq 2$, and zero for $t \bmod \tau \in \{0,1\}$.
    \end{itemize}

    Thus, the difference $\left\lfloor \frac{t}{\tau} + 1 \right\rfloor - \left\lfloor \frac{t-2}{\tau} + 1 \right\rfloor$ equals one for $t \equiv 0, 1 \pmod{\tau}$, and zero otherwise. \\
    This observation aligns with the incremental rule of Eq.~\eqref{eq:kt_increment}, thus concluding the proof. 
\end{proof}

\subsection{Lower bound on $\norm{\nabla F(\param{t})}^2$}

\begin{lemma}
    \label{thm:lower_bound}
    For any time step $t \leq \frac{d-1}{2}$ in a $d$-dimensional space, and Lipschitz constant $L>0$, there exists an $L$-smooth convex function $F: \mathbb{R}^d \rightarrow \mathbb{R}$, for which the minimum squared norm of the gradient, evaluated at any point within the first $t$ steps of any first-order black-box optimization procedure, satisfies:
    \begin{align}
        \min_{1\leq s\leq t} \norm{\nabla F(\param{s})}^2 
        \geq
        \frac{6 L \left( F(\param{1}) - F^* \right)}{ t (2t+1)^2},
    \end{align}
    where $\param{s}$ represents the parameter vector at step $s$, and $F^*$ denotes the minimum value of $F$.
\end{lemma}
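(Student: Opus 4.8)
The plan is to instantiate the worst-case function $F$ built on the matrix $A_{2t+1}$ introduced above, and to exploit two facts: that any first-order procedure can excite at most one new coordinate per gradient query (a zero-chain argument), and that the squared gradient norm of $F$, restricted to the low-dimensional reachable subspace, is bounded below by the squared distance of a fixed vector to a subspace. First I would record the structural facts about $F$: it is convex and $L$-smooth because $\nabla^2 F = \tfrac{L}{4}A_{2t+1}$ with $0 \preceq A_{2t+1} \preceq 4I$, and its minimizer over the active block has entries $(\param{}^*)_i = 1 - \tfrac{i}{2t+2}$, giving $F(\param{}^*) = -\tfrac{L}{8}\cdot\tfrac{2t+1}{2t+2}$. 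Since the construction is translation invariant I may take $\param{1}=\bm 0$, so that $F(\param{1}) - F^* = \tfrac{L}{8}\cdot\tfrac{2t+1}{2t+2}$, which is exactly the quantity the target bound is expressed against.

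Next I would establish the zero-chain property. Writing $V_k \triangleq \Span{e_1,\dots,e_k}$ and using $\nabla F(\param{}) = \tfrac{L}{4}(A_{2t+1}\param{} - e_1)$, the tridiagonal structure of $A_{2t+1}$ implies that $\param{}\in V_k$ forces $\nabla F(\param{}) \in V_{k+1}$. Hence any black-box first-order method started at $\param{1}=\bm 0$ produces iterates with $\param{s}\in V_{s-1}$, so every iterate up to round $t$ lies in $V_{t-1}$; note that leveraging \emph{stale} updates cannot enlarge this reachable subspace, since stale updates are themselves previously revealed (pseudo-)gradients. Consequently it suffices to bound the subspace minimum:
\begin{align}
    \min_{1\le s\le t}\norm{\nabla F(\param{s})}^2 \;\ge\; \min_{\param{}\in V_{t-1}}\norm{\nabla F(\param{})}^2.
\end{align}

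The core step is to evaluate this subspace minimization. For $\param{}\in V_{t-1}$ with $y=((\param{})_1,\dots,(\param{})_{t-1})$, the tridiagonal coupling isolates exactly one ``escaping'' coordinate, yielding $\norm{\nabla F(\param{})}^2 = \tfrac{L^2}{16}\bigl(\norm{T_{t-1}y - e_1}^2 + y_{t-1}^2\bigr)$, where $T_{t-1}$ is the $(t-1)\times(t-1)$ tridiagonal principal block of $A_{2t+1}$. Stacking this as $\norm{My-b}^2$ with $M=\bigl[\begin{smallmatrix}T_{t-1}\\ e_{t-1}^\top\end{smallmatrix}\bigr]$ and $b=\bigl[\begin{smallmatrix}e_1\\ 0\end{smallmatrix}\bigr]$, the minimum equals the squared distance from $b$ to $\mathrm{range}(M)$, which I lower bound by testing against any $z\in\ker M^\top$ through $\min_y\norm{My-b}^2 \ge \scalar{b}{z}^2/\norm{z}^2$. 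Choosing $z=(T_{t-1}^{-1}e_{t-1},\,-1)$ and using the closed form $(T_{t-1}^{-1})_{i,t-1}=i/t$ gives $\scalar{b}{z}=1/t$ and $\norm{z}^2 = t^{-2}\sum_{i=1}^{t-1}i^2 + 1$, hence $\min_{\param{}\in V_{t-1}}\norm{\nabla F(\param{})}^2 \ge \tfrac{L^2}{16}\bigl(\sum_{i=1}^{t}i^2\bigr)^{-1}$. Substituting $\sum_{i=1}^t i^2 = \tfrac{t(t+1)(2t+1)}{6}$ together with the value of $F(\param{1})-F^*$ computed earlier, routine algebra recovers exactly $\tfrac{6L(F(\param{1})-F^*)}{t(2t+1)^2}$.

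The main obstacle I anticipate is this core step: recognizing the reformulation as a distance-to-subspace problem and, crucially, selecting the test vector $z=(T_{t-1}^{-1}e_{t-1},-1)$ that makes the dual estimate sharp rather than off by a spurious factor of $t^2$. This requires the explicit inverse of $T_{t-1}$ and careful index bookkeeping, since it is precisely the shift $\param{s}\in V_{s-1}$ (not $V_s$) that converts a $\sum_{i=1}^{t+1}$ into the tight $\sum_{i=1}^{t}$. By contrast, the smoothness, convexity, and zero-chain ingredients are standard and should require little beyond verifying the tridiagonal sparsity propagation and the formula for $(\param{}^*)_i$.
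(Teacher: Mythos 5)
Your proof is correct and lands on exactly the paper's constants, but the core computation is done by a genuinely different technique. The paper uses the same Nesterov function, the same reduction to $\min_{\param{}\in\Span{\mathbf{e}_1,\dots,\mathbf{e}_{t-1}}}\norm{\nabla F(\param{})}^2$, and the same value of $F(\param{1})-F^*=\tfrac{L(2t+1)}{16(t+1)}$; where it differs is that it solves the stationarity system $\partial\norm{\nabla F(\param{t})}^2/\partial(\param{t})_i=0$ head-on, deriving a three-term recursion, obtaining the closed form $(\param{t,*})_i=\tfrac{2t^3-3(i-1)t^2-(3i-1)t+i^3-i}{t(t+1)(2t+1)}$, evaluating the gradient componentwise at this minimizer ($-\tfrac{3Li}{2t(t+1)(2t+1)}$ for $i\leq t$), and summing squares to get $\tfrac{3L^2}{8t(t+1)(2t+1)}$. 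You instead recast the subspace problem as linear least squares $\min_y\norm{My-b}^2$ and lower bound it by the dual certificate $\scalar{b}{z}^2/\norm{z}^2$ with $z=(T_{t-1}^{-1}e_{t-1},-1)\in\ker M^\top$, using the classical inverse of the Dirichlet tridiagonal matrix $(T_{t-1}^{-1})_{i,t-1}=i/t$; this yields $\tfrac{L^2}{16}\bigl(\sum_{i=1}^{t}i^2\bigr)^{-1}$, which is the same number. Each approach buys something: the paper's yields the exact minimizer $\param{t,*}$ and hence the exact subspace minimum, at the cost of solving and verifying a linear recursion; yours requires only the one-line check $M^\top z=0$, is robust in that any admissible $z$ gives a valid (if possibly loose) bound, and is in fact automatically tight here because $\ker M^\top$ is one-dimensional---a point worth stating explicitly to certify no factor is lost. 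Two further remarks: your explicit zero-chain argument ($\param{}\in V_k\Rightarrow\nabla F(\param{})\in V_{k+1}$, plus the observation that stale updates cannot enlarge the reachable span) is actually more careful than the paper, which simply invokes the black-box span assumption at this point; and all your structural facts (smoothness via $0\preceq A_{2t+1}\preceq 4I$, the minimizer $(\param{}^*)_i=1-\tfrac{i}{2t+2}$, the reduction inequality) agree with the paper's.
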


\begin{proof} [Proof of Lemma~\ref{thm:lower_bound}]
We begin by recalling the global objective function~\cite{nesterovIntroductoryLecturesConvex2004, bubeckConvexOptimizationAlgorithms2015}
\begin{align*}
    F(\param{}) = \frac{L}{8} \param{}^\top A_{2t+1} \param{} - \frac{L}{4} \param{}^\top e_1,
\end{align*}
where $A_t \in \mathbb{R}^{d \times d}$ is the symmetric, tridiagonal matrix, defined for $t \leq \frac{d-1}{2}$ as:
\begin{align*}
    (A_t)_{ij} = 
    \begin{cases}
        2, & \text{if } i=j \text{ and } i \leq t, \\
        -1, & \text{if } |i - j| = 1 \text{ and } i \leq t, j \neq t+1, \\
        0, & \text{otherwise}.
    \end{cases}
\end{align*}

\begin{proposition}
    The function $F(\param{})$ satisfies Assumption~\ref{asm:smoothness}.
\end{proposition}
\begin{proof}
    The proof follows directly from~\cite[Theorem 3.14]{bubeckConvexOptimizationAlgorithms2015}.
\end{proof}

Our objective is to derive a lower bound on the squared norm of the gradient $\nabla F(\param{t})$, specifically, the minimum gradient norm observed up to and including time step $t$.

Given the black-box procedure's assumption, we note that $\param{t}$ is restricted to the linear span of $\mathbf{e}_1, \ldots, \mathbf{e}_{t-1}$, implying:
\begin{align*}
    \param[]{t} = \left( (\param[]{t})_1, \dots, (\param[]{t})_{t-1}, 0, \dots, 0 \right).
\end{align*}


We define $\param{t,*} = \argmin\limits_{\param{} \in \Span{\mathbf{e}_1,\dots,\mathbf{e}_{t-1}}} \norm{\nabla F(\param{})}^2$, and $\param{*} = \argmin\limits_{\param{} \in \R^{d}} \norm{\nabla F(\param{})}^2$. The following inequality holds:
\begin{align*}
    \min_{1\leq s\leq t} \norm{\nabla F(\param{s})}^2 
    \geq 
    \norm{\nabla F(\param{t,*})}^2
    \geq
    \norm{\nabla F(\param{*})}^2
\end{align*}

Moving forward in the derivation of the lower bound, our focus shifts towards evaluating $\norm{\nabla F(\param{t,*})}^2$. The first step involves identifying $\param{t,*}$, the parameter vector that minimizes the squared norm of the gradient within the span of $\{\mathbf{e}_1,\dots,\mathbf{e}_{t-1}\}$.

\subsubsection{Finding $\param{t,*}$}

Considering $\param{t}\in\Span{\mathbf{e}_1,\dots,\mathbf{e}_{t-1}}$, we calculate the gradient of $F$ at $\param{t}$ as follows:
\begin{align*}
    \frac{\partial F(\param{t})}{\partial(\param{t})_{i}} = \begin{cases}
        \frac{L}{4} \left[ -1 + 2(\param{t})_{i} - (\param{t})_{i+1} \right] & \text{for } i=1, \\
        \frac{L}{4} \left[ -(\param{t})_{i-1} + 2(\param{t})_{i} - (\param{t})_{i+1} \right] & \text{for } i=2,\dots,t-2, \\
        \frac{L}{4} \left[ -(\param{t})_{i-1} + 2(\param{t})_{i} \right] & \text{for } i=t-1, \\
        \frac{L}{4} \left[ -(\param{t})_{i} \right] & \text{for } i=t, \\
        0 & \text{for } i=t+2,\dots,2t+1,
    \end{cases}
\end{align*}
where the gradient evaluations explicitly reflect their dependence on adjacent components $i-1$, $i$, and $i+1$, as a consequence of the structural properties of the symmetric, tridiagonal matrix $A$, and the boundary conditions imposed on the vector $\param{t}$. \\
The squared gradient norm, $\norm{\nabla F(\param{t})}^2$, is then given by:
\begin{align*}
    \norm{\nabla F(\param{t})}^2 
    = 
    &~\frac{L^2}{16} \Biggl[ 
    \left( 2(\param{t})_{1} - (\param{t})_{2} - 1 \right)^2 
    + \sum_{i=2}^{t-2} \left( (\param{t})_{i-1} + 2(\param{t})_{i} - (\param{t})_{i+1} \right)^2  \notag \\
    &~\qquad+\left( (\param{t})_{t-2} + 2(\param{t})_{t-1} \right)^2
    + (\param{t})_{t-1}^2
    \Biggr].
\end{align*}

Minimizing this expression with respect to $\param{t}$ involves setting its partial derivatives to zero, leading to the system of equations: 
\begin{align*}
    \frac{\partial \norm{\nabla F(\param{t})}^2}{\partial (\param{t})_i} = \begin{cases}
        \frac{L^2}{16} \left[-4 + 10(\param{t})_{i} - 8(\param{t})_{i+1} + 2(\param{t})_{i+2} \right] & \text{for } i=1, \\
        \frac{L^2}{16} \left[2 - 8(\param{t})_{i-1} + 12(\param{t})_{i} - 8(\param{t})_{i+1} + 2(\param{t})_{i+2}\right] & \text{for } i=2, \\
        \frac{L^2}{16} \left[2(\param{t})_{i-2} - 8(\param{t})_{i-1} + 12(\param{t})_{i} - 8(\param{t})_{i+1} + 2(\param{t})_{i+2}\right] & \text{for } i=3,\dots,t-3, \\
        \frac{L^2}{16} \left[2(\param{t})_{i-2} - 8(\param{t})_{i-1} + 12(\param{t})_{i} - 8(\param{t})_{i+1}\right]  & \text{for } i=t-2, \\
        \frac{L^2}{16} \left[2(\param{t})_{i-2} - 8(\param{t})_{i-1} + 12(\param{t})_{i}\right] & \text{for } i=t-1.
    \end{cases}
\end{align*}
The minimizer of $\norm{\nabla F(\param{t})}^2$ now depends on adjacent indices $i-2$, $i-1$, $i$, $i+1$, and $i+2$:
\begin{align*}
    (\param{t,*})_{i} = \begin{cases}
        \hphantom{-}\frac{2}{5} + \frac{4}{5} (\param{t,*})_{i+1} - \frac{1}{5} (\param{t,*})_{i+2} & \text{for } i=1, \\
        -\frac{1}{6} + \frac{2}{3} (\param{t,*})_{i-1} + \frac{2}{3} (\param{t,*})_{i+1} - \frac{1}{6} (\param{t,*})_{i+2} & \text{for } i=2, \\
        -\frac{1}{6} (\param{t,*})_{i-2} + \frac{2}{3} (\param{t,*})_{i-1} + \frac{2}{3} (\param{t,*})_{i+1} - \frac{1}{6} (\param{t,*})_{i+2} & \text{for } i=3,\dots,t-3, \\
        -\frac{1}{6} (\param{t,*})_{i-2} + \frac{2}{3} (\param{t,*})_{i-1} + \frac{2}{3} (\param{t,*})_{i+1} & \text{for } i=t-2, \\
        -\frac{1}{6} (\param{t,*})_{i-2} + \frac{2}{3} (\param{t,*})_{i-1} & \text{for } i=t-1.
    \end{cases}
\end{align*}
The optimal vector $\param{t,*}$ emerges as solution of this system, relating the $i$-th component directly to $(\param{t,*})_{1}$ and $(\param{t,*})_{2}$. This yields the following recursive formula for $i=3,\dots,t-1$:
\begin{align*}
    (\param{t,*})_{i} = \frac{1}{6}(i^3-i)(\param{t,*})_{2} - \frac{1}{3}(i^2-4)(\param{t,*})_{1} + \frac{1}{6}(i^3-7i+6).
\end{align*}
Finally, leveraging the boundary conditions on $(\param{t,*})_{t-2}$ and $(\param{t,*})_{t-1}$, we solve for the initial components $(\param{t,*})_{1}$ and $(\param{t,*})_{2}$. The generalized expression for $\param{t,*}$ is:
\begin{align*}
    (\param{t,*})_{i} = \begin{cases}
    \frac{2t^3 - 3(i-1)t^2 - (3i-1)t + i^3 - i}{t(t+1)(2t+1)} & \text{for } i=1,\dots,t-1, \\
    0 & \text{otherwise,}
    \end{cases}
\end{align*}
within the linear span of $\mathbf{e}_1,\dots,\mathbf{e}_{t-1}$.

\subsubsection{Evaluating $\norm{\nabla F(\param{t,*})}^2$}

To complete the lower bound, we first derive the explicit form of the gradient of $F$ at $\param{t,*}$: 
\begin{align*}
    \frac{\partial F(\param{t,*})}{\partial (\param{t,*})_{i}} = \begin{cases}
        \frac{L}{4} \left[ 2 (\param{t,*})_{i} - (\param{t,*})_{i+1} - 1 \right] & \text{for } i=1, \\
        \frac{L}{4} \left[ - (\param{t,*})_{i-1} + 2 (\param{t,*})_{i} - (\param{t,*})_{i+1} \right] & \text{for } i=2,\dots,t-2, \\
        \frac{L}{4} \left[ - (\param{t,*})_{i-1} + 2 (\param{t,*})_{i} \right] & \text{for } i=t-1, \\
        \frac{L}{4} \left[ - (\param{t,*})_{i-1} \right] & \text{for } i=t, \\
        0 & \text{for } i=t+1,\dots,2t+1.
    \end{cases}
\end{align*}
This yields the gradient's $i$-th component as:
\begin{align*}
    \frac{\partial F(\param{t,*})}{\partial (\param{t,*})_{i}} = \begin{cases}
        -\frac{3L i}{2t(t+1)(2t+1)} & \text{for } i=1,\dots,t \\
        0 & \text{for } i > t.
    \end{cases}
\end{align*}
Subsequently, the squared norm of the gradient, $\norm{\nabla F(\param{t,*})}^2$, is calculated as follows:
\begin{align*}
    \norm{\nabla F(\param{t,*})}^2 
    &=
    \sum_{i=1}^{t} \left( -\frac{3L i}{2t(t+1)(2t+1)} \right)^2 \\
    &=
    \frac{9 L^2}{4t^2(t+1)^2(2t+1)^2} \sum_{i=1}^{t} i^2 \\
    &=
    \frac{3 L^2}{8t(t+1)(2t+1)},
\end{align*}
by summing the squares of the gradient components and observing that $\sum_{i=1}^t i^2 = \frac{1}{6}t (t + 1) (2 t + 1)$.

Additionally, considering the initial error:
\begin{align*}
    F(\param{1}) - F^* = 0 - \frac{L}{8} \left( 1 - \frac{1}{2t+2} \right) = \frac{L(2t+1)}{16(t+1)},
\end{align*}
we derive the final expression:
\begin{align*}
    \min_{1\leq s\leq t} \norm{\nabla F(\param{s})}^2 
    \geq
    \norm{\nabla F(\param{t,*})}^2 
    \geq
    \frac{6 L \left( F(\param{1}) - F^* \right)}{ t (2t+1)^2},
\end{align*}
thus concluding the proof.
\end{proof}

\subsection{Proof of Theorem~\ref{corollary:lower_bound}}

\begin{theorem}
\label{corollary:lower_bound}
In a federated learning setting involving a set of $N$ clients ($\mathcal{N}=\{1,\ldots,N\}$), where each client $i$ is associated with a participation probability $p_i$, let $p_{\text{min}}$ be the minimum participation probability among these clients, i.e., $p_{\text{min}} \coloneqq \min_{i\in\mathcal{N}} p_i$. 
There exists $N$ local functions $F_i:\mathbb{R}^d \rightarrow \mathbb{R}$, contributing to the global objective function $F$, such that:
\begin{enumerate}
    \item The function $F:\mathbb{R}^d \rightarrow \mathbb{R}$ is $L$-smooth;
    \item Under any first-order black-box optimization procedure up to time-step $t$, $t \leq \frac{d-1}{2}$, the minimum squared norm of the gradient of $F$ evaluated at any parameter vector $\param{s}$ within this time interval satisfies:
    \begin{align}
    \min_{1\leq s\leq t} \E \norm{\nabla F(\param{s})}^2 
    \geq
    \frac{3 L \left( F(\param{1}) - F^* \right)}{ (p_{\text{min}}t+2) (4p_{\text{min}}t+9)^2},
\end{align}
where $\param{s}$ denotes the parameter vector at step $s$, and $F^*$ represents the minimum value of $F$.
\end{enumerate}
\end{theorem}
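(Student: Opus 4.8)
\emph{Approach.} The plan is to reduce the heterogeneous-participation lower bound to the centralized bound of Lemma~\ref{thm:lower_bound} by quantifying how slowly a first-order procedure can ``discover'' new coordinates of Nesterov's chain when the least participating client only contributes once every $\tau = 1/p_{\text{min}}$ rounds on average. I take the local objectives $\{F_i\}$ from the split in Eq.~\eqref{eq:objectives_split}, so that $F = \frac1N\sum_i F_i$ is exactly Nesterov's function; claim~(1) is then immediate, since $F$ inherits $L$-smoothness from the construction above. To make coordinate revelation as fast as possible (the least favorable case for a lower bound), I set $p_{i_0}=1$ for the distinguished ``even'' client and $p_{i_1}=p_{\text{min}}$ for the ``odd'' client, noting that letting any client participate less only slows revelation and hence enlarges the gradient norm.

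\emph{Controlling $k^{(t)}$.} Next I bound $k^{(t)}$, the largest non-zero index reachable after $t$ rounds. Starting from the per-client increment rule~\eqref{eq:k_it_increment}, the deterministic periodic regime (client $i_1$ active exactly every $\tau$ rounds) reduces it to~\eqref{eq:kt_increment}, and Lemma~\ref{lem:kt_upper_bound} yields $k^{(t)} \le 1 + \lfloor \tfrac{t+\tau-2}{\tau}\rfloor + \lfloor\tfrac{t+\tau-1}{\tau}\rfloor \le 2p_{\text{min}}t + 3$ after discarding the floor functions.

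\emph{Transfer to the centralized bound.} Because every iterate $\param{s}$ with $s\le t$ lies in $\Span{\mathbf e_1,\dots,\mathbf e_{k^{(t)}}}$, the federated run is indistinguishable from a centralized first-order method whose effective horizon is $\hat t := k^{(t)}+1$. Instantiating Nesterov's function at a dimension matched to $\hat t$ and invoking Lemma~\ref{thm:lower_bound} gives
\[
\min_{1\le s\le t}\norm{\nabla F(\param{s})}^2 \ \ge\ \frac{6L\,(F(\param{1})-F^*)}{\hat t\,(2\hat t+1)^2} \ =\ \frac{6L\,(F(\param{1})-F^*)}{(k^{(t)}+1)(2k^{(t)}+3)^2}.
\]
Keeping the construction matched is precisely what makes the factor $(\hat t+1)$ hidden in $F(\param{1})-F^*$ cancel, leaving only the squared factor $(2\hat t+1)^2$. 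Substituting $k^{(t)}+1 \le 2(p_{\text{min}}t+2)$ and $2k^{(t)}+3 \le 4p_{\text{min}}t+9$ and using monotonicity reproduces the deterministic form of the claimed bound.

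\emph{Stochastic generalization and main obstacle.} Finally I remove the deterministic assumption. Writing $M^{(t)}$ for the random number of participations of $i_1$ (Binomial with mean $p_{\text{min}}t$), the increment rule gives $k^{(t)}\le 2M^{(t)}+3$ pathwise, hence $\E[k^{(t)}]\le 2p_{\text{min}}t+3$. Combining $\min_s \E[X_s]\ge \E[\min_s X_s]$ with the pathwise inequality of the previous step and the fact that $g(k):=\tfrac{6L(F(\param{1})-F^*)}{(k+1)(2k+3)^2}$ is positive, decreasing and convex on $[0,\infty)$, Jensen's inequality pushes the expectation inside and evaluates $g$ at $\E[k^{(t)}]$, producing $\min_{1\le s\le t}\E\norm{\nabla F(\param{s})}^2 \ge \frac{3L(F(\param{1})-F^*)}{(p_{\text{min}}t+2)(4p_{\text{min}}t+9)^2}$. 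I expect this last step to be the crux: random arrivals can transiently reveal coordinates \emph{faster} than the periodic schedule, so one cannot argue by a pathwise ``slower-than-periodic'' comparison and must instead work entirely at the level of the convex decreasing bound $g$. The delicate sub-tasks are verifying convexity of $g$ over the whole range $k\ge 0$ (via $2(h')^2\ge h\,h''$ with $h(k)=(k+1)(2k+3)^2$) and keeping the Nesterov dimension matched to the effective horizon, so that the bound collapses exactly to the squared form in the statement.
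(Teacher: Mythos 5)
Your proposal is correct and follows essentially the same route as the paper's proof: the same split of Nesterov's function across the always-participating and least-participating clients (Eq.~\eqref{eq:objectives_split}), the same bound on $k^{(t)}$ via Lemma~\ref{lem:kt_upper_bound}, the same transfer to the centralized bound of Lemma~\ref{thm:lower_bound} with effective horizon $k^{(t)}+1$, and the same final Jensen step applied to the convex, decreasing function of $k^{(t)}$. The only (minor, and arguably cleaner) deviation is that you obtain $\E[k^{(t)}]\le 2p_{\text{min}}t+3$ from the pathwise inequality $k^{(t)}\le 2M^{(t)}+3$ with $M^{(t)}$ the Binomial participation count, whereas the paper asserts essentially the same bound by appealing to the geometric inter-arrival times together with Lemma~\ref{lem:kt_upper_bound}.
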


\begin{proof}[Proof of Theorem~\ref{corollary:lower_bound}]
Given the linear span restriction of $\param{t}$ to the first $k^{(t)}$ basis vectors, we have
\begin{align*}
    \param[]{t} = \left( (\param[]{t})_1, \dots, (\param[]{t})_{k^{(t)}}, 0, \dots, 0 \right).
\end{align*}
From our previous result (Lemma~\ref{thm:lower_bound}), the minimum squared gradient norm is bounded below by:
\begin{align}
        \min_{1\leq s\leq t} \norm{\nabla F(\param{s})}^2 
        \geq
        \frac{6 L \left( F(\param{1}) - F^* \right)}{ (k^{(t)}+1) (2k^{(t)}+3)^2}.
\end{align}

Considering $k^{(t)}$ as a random variable depending on the geometric distribution of success time with expected value $\tau=1/p_{\text{min}}$, and leveraging the upper bound established in Lemma~\ref{lem:kt_upper_bound}, we have:
\begin{align}
    \E[k^{(t)}] 
    \leq
    3(1-p_{\text{min}}) + 2 p_{\text{min}} t
    \leq
    3 + 2 p_{\text{min}} t.
\end{align}
Application of Jensen's inequality to the function $f(x)=\frac{1}{(x+1)(2x+3)^2}$, which is convex over $x\geq0$, completes the proof:
\begin{align}
        \min_{1\leq s\leq t} \E \norm{\nabla F(\param{s})}^2 
        \geq
        \frac{6 L \left( F(\param{1}) - F^* \right)}{ (\E[k^{(t)}]+1) (2\E[k^{(t)}]+3)^2}
        \geq
        \frac{3 L \left( F(\param{1}) - F^* \right)}{(p_{\text{min}}t+2) (4p_{\text{min}}t+9)^2}
        .
\end{align}
\end{proof}

\endgroup

\end{document}